\definecolor{Blue}{rgb}{0.9,0.3,0.3}
\newcommand{\squishlist}{
   \begin{list}{$\bullet$}
    { \setlength{\itemsep}{0pt}      \setlength{\parsep}{3pt}
      \setlength{\topsep}{3pt}       \setlength{\partopsep}{0pt}
      \setlength{\leftmargin}{1.5em} \setlength{\labelwidth}{1em}
      \setlength{\labelsep}{0.5em} } }
\newcommand{\squishlisttwo}{
   \begin{list}{$\bullet$}
    { \setlength{\itemsep}{0pt}    \setlength{\parsep}{0pt}
      \setlength{\topsep}{0pt}     \setlength{\partopsep}{0pt}
      \setlength{\leftmargin}{2em} \setlength{\labelwidth}{1.5em}
      \setlength{\labelsep}{0.5em} } }
\newcommand{\squishend}{
    \end{list}  }
\newcommand{\myvec}[1]{\mathbf{#1}}
\newcommand{\vL}{\myvec{L}}
\newcommand{\vP}{\myvec{P}}
\newcommand{\vU}{\myvec{U}}
\newcommand{\vW}{\myvec{W}}
\newcommand{\be}{\begin{equation}}
\newcommand{\ee}{\end{equation}}
\newcommand{\bea}{\begin{eqnarray}}
\newcommand{\eea}{\end{eqnarray}}
\newcommand{\beaa}{\begin{eqnarray*}}
\newcommand{\eeaa}{\end{eqnarray*}}
\DeclareMathAlphabet{\mathpzc}{OT1}{pzc}{m}{n}
\newcommand{\note}[1]{}
\renewcommand{\note}[1]{~\\\frame{\begin{minipage}[c]{0.48\textwidth}\vspace{2pt}\center{#1}\vspace{2pt}\end{minipage}}\vspace{3pt}\\}
\newcommand{\hide}[1]{}
\newcommand{\masrour}[1]{#1}
\newcommand{\cop}{\textup{cpld}}
\newcommand{\eps}{\epsilon}
\DeclareMathOperator{\argmax}{arg\,max}
\newtheorem{mydefinition}{Definition}
\newtheorem{proposition}[mydefinition]{Proposition}
\newtheorem{theorem}[mydefinition]{Theorem}
\newtheorem{remark}[mydefinition]{Remark}
\newtheorem{lemma}[mydefinition]{Lemma}
\newtheorem{corollary}[mydefinition]{Corollary}
\title{Copeland Dueling Bandits}
\author[1]{\mbox{Masrour Zoghi}}
\author[2]{\mbox{Zohar Karnin}}
\author[1]{\mbox{Shimon Whiteson}}
\author[1]{\mbox{Maarten de Rijke}}
\affil[1]{University of Amsterdam, Netherlands}
\affil[2]{Yahoo!\ Labs, Haifa, Israel}
\begin{document} 

\maketitle

% \vspace{-5mm}

\begin{abstract} 

% \vspace{-3mm}

A version of the dueling bandit problem is addressed in which a \emph{Condorcet winner} may not exist. Two algorithms are proposed that instead seek to minimize regret with respect to the \emph{Copeland winner}, which, unlike the Condorcet winner, is guaranteed to exist. The first, {\bf Copeland Confidence Bound (CCB)}, is designed for small numbers of arms, while the second, {\bf Scalable Copeland Bandits (SCB)}, works better for large-scale problems. We provide theoretical results bounding the regret accumulated by CCB and SCB, both substantially improving existing results.  Such existing results either offer bounds of the form $\mathcal{O}(K\log T)$ but require restrictive assumptions, or offer bounds of the form $\mathcal{O}(K^2\log T)$ without requiring such assumptions.  Our results offer the best of both worlds: $\mathcal{O}(K\log T)$ bounds without restrictive assumptions.

%: this is accomplished by bridging a gap that is present in the existing results, where one group of results obtain bounds of the form $\mathcal{O}(K\log T)$ by imposing restrictive assumptions, while the rest discard such assumptions yet obtain bounds of the form $\mathcal{O}(K^2\log T)$. Our results enjoy the advantages of both sets of results.

%We present empirical results on a dueling bandit problem arising from a real-life application domain and on a large-scale synthetic problem used to test the scalability of SCB. The results show that CCB beats the state-of-the-art SAVAGE algorithm, while SCB outperforms CCB on the large-scale problem. \sw{I think it's weird that the abstract boasts about empirical results that we didn't even think were important enough to include in the main paper.  This is not very consistent with the introduction, which presents those results as a bonus, not a core contribution.  I think it was better when the abstract didn't mention the experiments.}
\end{abstract} 

% \vspace{-.5cm}

% !TEX root = copeland.tex

\section{Introduction}
\label{sec:introduction}

\vspace{-4mm}

The \emph{dueling bandit problem} \citep{yue12:k-armed} arises naturally in domains where feedback is more reliable when given as a pairwise preference (e.g., when it is provided by a human) and specifying real-valued feedback instead would be arbitrary or inefficient. Examples include \emph{ranker evaluation} \citep{joachims2002:optimizing,YueJoachims:2011,hofmann:irj13} in information retrieval, ad placement and recommender systems. As with other \emph{preference learning} problems \citep{furnkranz2010}, feedback consists of a pairwise preference between a selected pair of arms, instead of scalar reward for a single selected arm, as in the $K$-armed bandit problem.

Most existing algorithms for the dueling bandit problem require the existence of a Condorcet winner, which is an arm that beats every other arm with probability greater than $0.5$. If such algorithms are applied when no Condorcet winner exists, no decision may be reached even after many comparisons.  This is a key weakness limiting their practical applicability. For example, in industrial ranker evaluation~\cite{schuth:multilevel:2014}, when many rankers must be compared, each comparison corresponds to a costly live experiment and thus the potential for failure if no Condorcet winner exists is unacceptable \cite{li:toward:2015}.

This risk is not merely theoretical.  On the contrary, recent experiments on $K$-armed dueling bandit problems based on information retrieval datasets show that dueling bandit problems without Condorcet winners arise regularly in practice \citep[Figure 1]{RCS2014}. In addition, we show in Appendix \ref{sec:condorcet} that there are realistic situations in ranker evaluation in information retrieval in which the probability that the Condorcet assumption holds decreases rapidly as the number of arms grows.  Since the $K$-armed dueling bandit methods mentioned above do not provide regret bounds in the absence of a Condorcet winner, applying them remains risky in practice. Indeed, we demonstrate empirically the danger of applying such algorithms to dueling bandit problems that do not have a Condorcet winner (cf. Appendix \ref{sec:experiments}).

\masrour{The non-existence of the Condorcet winner has been investigated extensively in social choice theory, where numerous definitions have been proposed, without a clear contender for the most suitable resolution \cite{schulze11:monotonic}. In the dueling bandit context, a few methods have been proposed to address this issue, e.g., SAVAGE \cite{Urvoy:2013}, PBR \cite{Busa-Fekete:2013} and RankEl \cite{Busa-Fekete:2014}, which use some of the notions proposed by social choice theorists, such as the Copeland score or the Borda score to measure the quality of each arm, hence determining what constitutes the best arm (or more generally the top-$k$ arms). In this paper, we focus on finding Copeland winners, which are arms that beat the greatest number of other arms, because it is a natural, conceptually simple extension of the Condorcet winner.}
% that is easily understandable, which is often more important in practice than many other theoretical properties one might ask for.}

% Unlike the Condorcet winner, a Copeland winner always exists.  Consequently, an algorithm that finds a Copeland winner is universally applicable. Note that the Condorcet winner, if it exists, is a Copeland winner.

Unfortunately, the methods mentioned above come with bounds of the form $\mathcal{O}(K^2\log T)$. In this paper, we propose two new $K$-armed dueling bandit algorithms for the Copeland setting with significantly improved bounds.  %Our first algorithm, called \emph{Copeland Confidence Bound} (CCB) extends the idea behind RUCB to the Copeland setting.  

The first algorithm, called {\bf Copeland Confidence Bound (CCB)}, is inspired by the recently proposed Relative Upper Confidence Bound method \cite{RUCB2014}, but modified and extended to address the unique challenges that arise when no Condorcet winner exists. We prove anytime high-probability and expected regret bounds for CCB of the form $\mathcal{O}(K^2 + K\log T)$. \masrour{Furthermore, the denominator of this result has much better dependence on the ``gaps'' arising from the dueling bandit problem than most existing results (cf. Sections \ref{sec:related-work} and \ref{sec:CCBtheory} for the details).} %\todo{MZ: Maarten, note the new sentence here.}

However, a remaining weakness of CCB is the additive $\mathcal{O}(K^2)$ term in its regret bounds.  In applications with large $K$, this term can dominate for any experiment of reasonable duration. For example, at Bing, 200 experiments are run concurrently on any given day \cite{kohavi:2013}, in which case the duration of the experiment needs to be longer than the age of the universe in nanoseconds before $K\log T$ becomes significant in comparison to $K^2$.

Our second algorithm, called {\bf Scalable Copeland Bandits (SCB)}, addresses this weakness by eliminating the $\mathcal{O}(K^2)$ term, achieving an expected regret bound of the form $\mathcal{O}(K \log K \log T)$. The price of SCB's tighter regret bounds is that, when two suboptimal arms are close to evenly matched, it may waste comparisons trying to determine which one wins in expectation.  By contrast, CCB can identify that this determination is unnecessary, yielding better performance unless there are very many arms.  CCB and SCB are thus complementary algorithms for finding Copeland winners.

% in the $K$-armed dueling bandit problem. 

% We evaluate CCB and SCB using a real-life problem arising from information retrieval (IR). We also demonstrate the scalability of SCB using a synthetic example. These results demonstrate that CCB substantially outperforms the state-of-the-art Copeland dueling bandit algorithm, Copeland SAVAGE. However, as the number of arms becomes large, SCB accumulates less regret than CCB.

Our main contributions are as follows:
\begin{enumerate}[leftmargin=*,topsep=0mm,parsep=0pt,itemsep=0pt,partopsep=0pt]
\item We propose two new algorithms that address the dueling bandit problem in the absence of a Condorcet winner, one designed for problems with small numbers of arms and the other scaling well with the number of arms.
\item We provide regret bounds that bridge the gap between two groups of results: those of the form $\mathcal{O}(K\log T)$ that make the Condorcet assumption, and those of the form $\mathcal{O}(K^2\log T)$ that do not make the Condorcet assumption. Our bounds are similar to those of the former but are as broadly applicable as the latter. Furthermore, the result for CCB has substantially better dependence on the gaps than the second group of results. 
% \mdr{\ldots\ we haven't said anything so far about gaps between arms.} \todo{MZ: Is this OK? Note the change above.}
% \item We experimentally verify that CCB outperforms Copeland SAVAGE and that SCB scales better than CCB as the number of arms grows.
\end{enumerate}

\masrour{In addition, Appendix \ref{sec:experiments} presents the results of an empirical evaluation of CCB and SCB using a real-life problem arising from information retrieval (IR). The experimental results mirror the theoretical ones.}

% \zk{Should we include experiments to compare SCB and CCB? If we want SCB to win, take a large $P$ matrix where all $p_{ij}$'s are bounded away from $0.5$. For CCB to win, an example of  $p_{ij}=(|K|+i-j)/|K|$ would destroy SCB and perform well with CCB}

%Without getting into the formal definition, the insensitive notion of regret becomes the same as the Copland regret whenever the maximal Copeland score is almost maximal \todo{MZ: Sorry, what does it mean for the ``maximal Copeland score'' to become ``maximal''}, meaning has the value of $K-1-O(1)$. \todo{MZ: I suggest moving the last sentence to the problem setting section or just removing it because now we're phrasing things in terms of PAC bounds, so the concept should be familiar to the reader.}

%his scenario is much more realistic than that of having a Condorcet winner. %(i.e. the maximal Copeland score equals $K-1$), MZ: I'm commenting this out because the Condorcet winner is defined already.
% and furthermore, if the assumption does not hold then it is reasonable to require an approximate Copeland winner rather than the exact one. 
%Indeed, \masrour{when dealing with a large set of arms in practice}, it matters less whether we obtain an arm that looses to 101 other arms instead of an arm that looses to only 100 other arms \masrour{than making sure that an approximate solution is found quickly.}

% !TEX root = copeland.tex

\vspace{-2mm}

\section{Problem Setting}
\label{sec:problem-setting}

\vspace{-4mm}

Let $K\geq2$. The \emph{$K$-armed dueling bandit} problem \cite{yue12:k-armed} is a modification of the \emph{$K$-armed bandit} problem \cite{thompson1933likelihood}. The latter considers $K$ arms $\{a_1,\ldots,a_K\}$ and at each \emph{time-step}, an arm $a_i$ can be \emph{pulled}, generating a \emph{reward} drawn from an unknown stationary distribution with expected value $\mu_i$. The $K$-armed \emph{dueling} bandit problem is a variation in which, instead of pulling a single arm, we choose a pair $(a_i,a_j)$ and receive one of them as the better choice, with the probability of $a_i$ being picked equal to an unknown constant $p_{ij}$ and that of $a_j$ being picked equal to $p_{ji}=1-p_{ij}$. A problem instance is fully specified by a \emph{preference matrix} $\vP=\left[p_{ij}\right]$, whose $ij$ entry is equal to $p_{ij}$.

Most previous work assumes the existence of a \emph{Condorcet winner} \cite{Urvoy:2013}: an arm, which without loss of generality we label $a_1$, such that $p_{1i} > \frac{1}{2}$ for all $i>1$. In such work, regret is defined relative to the Condorcet winner. However, Condorcet winners do not always exist \cite{RCS2014,RUCB2014}. In this paper, we consider a formulation of the problem that does not assume the existence of a Condorcet winner. 

Instead, we consider the \emph{Copeland dueling bandit problem}, which defines regret with respect to a \emph{Copeland winner}, which is an arm with maximal \emph{Copeland score}.  The Copeland score of $a_i$, denoted $\textup{Cpld}(a_i)$, is the number of arms $a_j$ for which  $p_{ij} > 0.5$.  The \emph{normalized Copeland score}, denoted $\cop(a_i)$, is simply $\frac{\textup{Cpld}(a_i)}{K-1}$. Without loss of generality, we assume that $a_1, \ldots, a_C$ are the Copeland winners, where $C$ is the number of Copeland winners.  We define regret as follows:

\begin{mydefinition}\label{def:regret} The {\bf regret} incurred by comparing $a_i$ and $a_j$ is $2\cop(a_1)-\cop(a_i)-\cop(a_j)$.
\end{mydefinition}

% \todo{MZ: I can't decdide whether adding the next remark is a good idea. Also, a better idea might be to include it in \S\ref{sec:theory} instead. Let me know what you think.}
% \zk{I'm for the remark - I changed the phrasing slightly but the main point is the same.}

\begin{remark}  Since our results (see \S\ref{sec:theory}) establish bounds on the number of queries to non-Copeland winners, they can also be applied to other notions of regret. % do not rely on any particular definition of regret, as they are obtained by bounding the number of times non-Copeland winning arms are pulled.
\end{remark}

%%% MZ: Moved to 5-theory.tex
% Some of our results rely on the notion of an $\eps$-Copeland winner, defined as follows:

% \begin{mydefinition} \label{def:ep-copeland}
% An $\eps$-Copeland-winner is an arm $a_i$ such that 
% $$ 1-\cop(a_i) \leq \left(1- \cop(a_1) \right) (1+\eps). $$
% \end{mydefinition}
%%% ^^^^^^

% \zk{I think we can skip the following definition - whatever we can say about PAC we'll just say about PAC. I don't see a need for this time horizon bound $T$}
% \begin{mydefinition}[{Adapted from \citep[{Definition 1}]{Urvoy:2013}}]
% \masrour{A Copeland dueling bandit algorithm is said to have an $(\epsilon,\delta)$-PAC bound with horizon $T$ if it outputs with probability $\delta$ an $\epsilon$-approximation of the Copeland identitifcation problem when it terminates by time-step $T$.}
% \end{mydefinition}

% !TEX root = copeland.tex

\vspace{-2mm}

\section{Related Work}
\label{sec:related-work}

\vspace{-4mm}

Numerous methods have been proposed for the $K$-armed dueling bandit problem, including Interleaved Filter \cite{yue12:k-armed}, Beat the Mean \cite{YueJoachims:2011}, Relative Confidence Sampling \cite{RCS2014}, Relative Upper Confidence Bound (RUCB) \cite{RUCB2014}, Doubler and MultiSBM \cite{Ailon:2014}, and mergeRUCB \cite{mergeRUCB2015}, all of which require the existence of a Condorcet winner, and often come with bounds of the form $\mathcal{O}(K\log T)$. %Moreover, many of these algorithms make more restrictive assumptions such as a total ordering of the arms. 
However, as observed in \cite{RUCB2014} and Appendix \ref{sec:condorcet}, real-world problems do not always have Condorcet winners.

There is another group of algorithms that do not assume the existence of a Condorcet winner, but have bounds of the form $\mathcal{O}(K^2 \log T)$ in the Copeland setting: Sensitivity Analysis of VAriables for Generic Exploration (SAVAGE) \cite{Urvoy:2013}, Preference-Based Racing (PBR) \cite{Busa-Fekete:2013} and Rank Elicitation (RankEl) \cite{Busa-Fekete:2014}. All three of these algorithms are designed to solve more general or more difficult problems, and they solve the Copeland dueling bandit problem as a special case. 

This work bridges the gap between these two groups by providing algorithms that are as broadly applicable as the second group but have regret bounds comparable to those of the first group. \masrour{Furthermore, in the case of the results for CCB, rather than depending on the smallest gap between arms $a_i$ and $a_j$, $\Delta_{\min}\hspace{-1mm}:=\hspace{-1mm}\min_{i>j}|p_{ij}-0.5|$, as in the case of many results in the Copeland setting,%
\footnote{Cf. \cite[Equation 9 in \S4.1.1]{Urvoy:2013} and \cite[Theorem 1]{Busa-Fekete:2013}.}
% \footnote{It should be mentioned that some results manage to overcome the dependence on the smallest gap, either by dealing with a scoring mechanism completely unrelated to the Copeland score, e.g.\ the Borda score, or at the expense of having to reformulate the Copeland score to have some level of insensitivity. This is analogous to saying that if $|p_{ij}-0.5| < \eps$ for some predefined insensitivity level $\eps$ arms $a_i,a_j$ are assumed to be tied, meaning no arm beats the other.}, 
our regret bounds depend on a larger quantity that results in a substantially lower upper-bound, cf. \S\ref{sec:CCBtheory}.} 

% Sensitivity Analysis of VAriables for Generic Exploration (SAVAGE) \cite{Urvoy:2013}.  Their Theorem 1, when translated using our Definition \ref{def:regret}, gives a regret bound of the form $\mathcal{O}(K^2 \log T)$ for this special case, which we call Copeland SAVAGE. Unfortunately, this falls short of the $\Omega(K \log T)$ lower bound \cite{yue12:k-armed} by an extra factor of $K$. In this work, we bridge this gap by proposing two algorithms that enjoy regret bounds of the form $\mathcal{O}(K \log T)$ without requiring a Condorcet winner.

In addition to the above, bounds have been proven for other notions of winners, including Borda \cite{Urvoy:2013,Busa-Fekete:2013,Busa-Fekete:2014}, Random Walk \cite{Busa-Fekete:2013,Negahban:2012}, and very recently von Neumann \cite{CDB:2015}. The dichotomy discussed also persists in the case of these results, which either rely on restrictive assumptions to obtain a linear dependence on $K$ or are more broadly applicable, at the expense of a quadratic dependence on $K$. A natural question for future work is whether the improvements achieved in this paper in the case of the Copeland winner can be obtained in the case of these other notions as well.

A related setting is that of \emph{partial monitoring games} \cite{piccolboni2001discrete}. %, in which an agent chooses at each round an action from a finite set and receives a reward based on an unknown function chosen by an oblivious process. The observed information is a known function of the chosen action and the current oblivious process. 
%One extreme  setting in which the observed information equals the reward captures MAB.  In the other extreme, the observed
%information equals the entire vector of rewards (for all actions), giving rise to the so-called \emph{full information} game.  
%Our setting is a strict case of partial monitoring as it falls in neither extremes.
While a dueling bandit problem can be modeled as a partial monitoring problem, doing so yields weaker results. %In particular, most partial monitoring results consider either non-stochastic settings or 
%present problem-independent results. In both cases the regret is lower bounded by $\sqrt{T}$, which is inapplicable to our setting (see \cite{antos2012toward} for a characterization of partial monitoring problems). 
In \cite{bartok2012adaptive}, the authors present problem-dependent bounds from which a regret bound of the form $\mathcal{O}(K^2\log T)$ can be deduced for the dueling bandit problem, whereas our work achieves a linear dependence in $K$.
%Our algorithms are also much simpler and directly  take  advantage of the structure of the problem at hand.

% !TEX root = copeland.tex

\vspace{-2mm}

\section{Method}
\label{sec:method}

\vspace{-4mm}

We now present two algorithms that find Copeland winners.

\vspace{-2mm}

\subsection{Copeland Confidence Bound (CCB)}
\label{sec:CCB}

\vspace{-3mm}

CCB (see Algorithm \ref{alg:CCB}) is based on the principle of \emph{optimism followed by pessimism}: it maintains optimistic and pessimistic estimates of the preference matrix, i.e., matrices $\vU$ and $\vL$ (Line 6). It uses $\vU$ to choose an \emph{optimistic Copeland winner} $a_c$ (Lines 7--9 and 11--12), i.e., an arm that has some chance of being a Copeland winner.  Then, it uses $\vL$ to choose an \emph{opponent} $a_d$ (Line 13), i.e., an arm deemed likely to discredit the hypothesis that $a_c$ is indeed a Copeland winner.

More precisely, an optimistic estimate of the Copeland score of each arm $a_i$ is calculated using $\vU$ (Line 7), and $a_c$ is selected from the set of top scorers, with preference given to those in a shortlist, ${\cal B}_t$ (Line 11). Theses are arms that have, roughly speaking, been optimistic winners throughout  history. To maintain $\mathcal{B}_t$, as soon as CCB discovers that the optimistic Copeland score of an arm is lower than the pessimistic Copeland score of another arm, it purges the former from $\mathcal{B}_t$ (Line 9B).

The mechanism for choosing the opponent $a_d$ is as follows. The matrices $\vU$ and $\vL$ define a confidence interval around $p_{ij}$ for each $i$ and $j$. In relation to $a_c$, there are three types of arms: (1) arms $a_j$ s.t.\ the confidence region of $p_{cj}$ is strictly above $0.5$, (2) arms $a_j$ s.t.\ the confidence region of $p_{cj}$ is strictly below $0.5$, and (3) arms $a_j$ s.t.\ the confidence region of $p_{cj}$ contains $0.5$. Note that an arm of type (1) or (2) at time $t'$ may become an arm of type (3) at time $t>t'$ even without queries to the corresponding pair as the size of the confidence intervals increases as time goes on.

CCB always chooses $a_d$ from arms of type (3) because comparing $a_c$ and a type (3) arm is most informative about the Copeland score of $a_c$.
Among arms of type (3), CCB favors those that have confidently beaten arm $a_c$ in the past (Line 13), i.e., arms that in some round $t'<t$ were of type (2). Such arms are maintained in a shortlist of ``formidable'' opponents ($\mathcal{B}^i_t$) that are likely to confirm that $a_i$ is not a Copeland winner; these arms are favored when selecting $a_d$ (Lines 10 and 13).

The sets $\mathcal{B}^i_t$ are what speeds up the elimination of non-Copeland winners, enabling regret bounds that scale asymptotically with $K$ rather than $K^2$. Specifically, for a non-Copeland winner $a_i$, the set $\mathcal{B}^i_t$ will eventually contain $L_C\hspace{-.5mm}+\hspace{-.5mm}1$ strong opponents for $a_i$ (Line \ref{lineBs}C), where $L_C$ is the number of losses of each Copeland winner. Since $L_C$ is typically small (cf. \hspace{-1mm}Appendix \ref{sec:C-L_C}), asymptotically this leads to a bound of only $\mathcal{O}(\log T)$ on the number of time-steps when $a_i$ is chosen as an optimistic Copeland winner, instead of a bound of $\mathcal{O}(K\log T)$, which a more naive algorithm would produce.

\setlength{\textfloatsep}{2mm}
\begin{algorithm}[!t]
\begin{algorithmic}[1]
\REQUIRE A Copeland dueling bandit problem and an exploration parameter $\alpha > \frac{1}{2}$. %, and horizon $T \in \{1,2,\ldots\} \cup \{\infty\}$

\STATE $\vW = \left[w_{ij}\right] \gets \mathbf{0}_{K \times K} \; $ // 2D array of wins: $w_{ij}$ is the number of times $a_i$ beat $a_j$

\STATE $\mathcal{B}_1 = \{a_1, \ldots, a_K\}$ // potential best arms

\STATE $\mathcal{B}^i_1 = \varnothing$ for each $i=1,\ldots,K$ // potential to beat $a_i$

\STATE $\overline{L}_C = K$ // estimated max losses of a Copeland winner

\FOR{$t=1,2,\dots$}
   % \STATE // I: Run an optimistic simulated ``tournament'':
   
   \STATE $\vU\hspace{-.5mm}:=\hspace{-.5mm}\left[u_{ij}\right]\hspace{-.5mm}=\hspace{-.5mm}\frac{\vW}{\vW+\vW^T}\hspace{-.5mm}+\hspace{-.5mm}\sqrt{\frac{\alpha\ln t}{\vW+\vW^T}}$ and $\vL\hspace{-.5mm}:=\hspace{-.5mm}\left[l_{ij}\right]\hspace{-.5mm}=\hspace{-.5mm}\frac{\vW}{\vW+\vW^T}\hspace{-.5mm}-\hspace{-.5mm}\sqrt{\frac{\alpha\ln t}{\vW+\vW^T}}$, with $u_{ii}\hspace{-.5mm}=\hspace{-.5mm}l_{ii}\hspace{-.5mm}=\hspace{-.5mm}\frac{1}{2}$, $\forall i$ %\; // All operations are element-wise; $\frac{x}{0}:=1$ for any $x$.
   
   % \STATE $u_{ii} \gets \frac{1}{2}$ and $l_{ii} \gets \frac{1}{2}$ for each $i=1,\ldots,K$
   
   \STATE $\overline{\textup{Cpld}}(a_i) = \#\left\{k \,|\, u_{ik} \geq \frac{1}{2}, k \neq i\right\}$ %// Optimistic Copeland score
   and $\underline{\textup{Cpld}}(a_i) = \#\left\{k \,|\, l_{ik} \geq \frac{1}{2}, k \neq i\right\}$ %// Pessimistic Copeland score
   
   \STATE $\mathcal{C}_t = \{a_i \,|\, \overline{\textup{Cpld}}(a_i) = \max_j \overline{\textup{Cpld}}(a_j) \}$
   
   \STATE Set $\mathcal{B}_t \gets \mathcal{B}_{t-1}$ and $\mathcal{B}^i_t \gets \mathcal{B}^i_{t-1}$ and update as follows: \label{lineBs}
   \begin{description}[leftmargin=3mm,topsep=0mm,parsep=0pt,itemsep=0pt,partopsep=0pt]
      \item[A. Reset disproven hypotheses:] \hspace{-5mm} If for any $i$ and $a_j\in\mathcal{B}^i_t$ we have $l_{ij}>0.5$, reset $\mathcal{B}_t$, $\overline{L}_C$ and $\mathcal{B}^k_t$ for all $k$ (i.e. set them to their original values as in Lines 2--4 above).
      \item[B. Remove non-Copeland winners:] \hspace{-5mm} For each $a_i\in\mathcal{B}_t$, if $\overline{\textup{Cpld}}(a_i)<\underline{\textup{Cpld}}(a_j)$ holds for any $j$, set $\mathcal{B}_t\gets\mathcal{B}_t\setminus\{a_i\}$, and if $|\mathcal{B}^i_t|\neq\overline{L}_C+1$, then set $\mathcal{B}^i_t\gets\{a_k|u_{ik} < 0.5 \}$. However, if $\mathcal{B}_t = \varnothing$, reset $\mathcal{B}_t$, $\overline{L}_C$ and $\mathcal{B}^k_t$ for all $k$.
      \item[C. Add Copeland winners:] \hspace{-5mm} For any $a_i\in\mathcal{C}_t$ with $\overline{\textup{Cpld}}(a_i)=\underline{\textup{Cpld}}(a_i)$, set $\mathcal{B}_t \gets \mathcal{B}_t \cup \{a_i\}$, $\mathcal{B}^i_t \gets \varnothing$ and $\overline{L}_C \gets K-1-\overline{\textup{Cpld}}(a_i)$. For each $j\neq i$, if we have $|\mathcal{B}^j_t|<\overline{L}_C+1$, set $\mathcal{B}^j_t\hspace{-.5mm}\gets\hspace{-.5mm}\varnothing$, and if $|\mathcal{B}^j_t|\hspace{-.5mm}>\hspace{-.5mm}\overline{L}_C\hspace{-.5mm}+\hspace{-.5mm}1$, randomly choose $\overline{L}_C\hspace{-.5mm}+\hspace{-.5mm}1$ elements of $\mathcal{B}^j_t$ and remove the rest.
   \end{description}
   
   \STATE With probability $1/4$, sample $(c,d)$ uniformly from the set $\{(i,j)~|~a_j\in\mathcal{B}^i_t \textup{ and } 0.5\in[l_{ij},u_{ij}]\}$ (if it is non-empty) and skip to Line 14.

   \STATE If $\mathcal{B}_t \cap \mathcal{C}_t \neq \varnothing$, then with probability $2/3$, set $\mathcal{C}_t \gets \mathcal{B}_t \cap \mathcal{C}_t$.
   
   \STATE Sample $a_c$ from $\mathcal{C}_t$ uniformly at random. 
   % \STATE // II: Run UCB in relation to $c$:

   \STATE \raggedright  With probability $1/2$, choose the set $\mathcal{B}^i$ to be either $\mathcal{B}^i_t$ or $\{a_1,\ldots,a_K\}$ and then set $d~\gets~\displaystyle\argmax_{\{j \in \mathcal{B}^i \,|\,l_{jc} \leq 0.5\}} u_{jc}$. If there is a tie, $d$ is not allowed to be equal to $c$.
   % \STATE // III: Update $W$

   \STATE Compare arms $a_c$ and $a_d$ and increment $w_{cd}$ or $w_{dc}$ depending on which arm wins.
\ENDFOR
%\STATE $\vM := \left[\mu_{ij}\right] = \frac{\vW}{\vW+\vW^T}$
% \ENSURE Any Copeland winner $a_c$, i.e., $c$ with the largest count~$\# \left\{ j | \frac{w_{cj}}{w_{cj}+w_{jc}} > \frac{1}{2} \right\}$. 
% \STATE \sw{Something is wrong here.  $\mathcal{B}_t$ is never initialized for $t > 1$. If we add a line  $\mathcal{B}_t = \{a_1, \ldots, a_K\}$ inside the for loop then it seems to me that $\mathcal{B}_t \subseteq \mathcal{C}_t$ always holds.  Or are you missing a $\mathcal{B}_t \leftarrow \mathcal{B}_{t-1}$ somewhere?}
\end{algorithmic}
\caption{Copeland Confidence Bound}
\label{alg:CCB}
\end{algorithm}

\begin{algorithm}[b]
\caption{Approximate Copeland Bandit Solver}
\label{alg:copeland-apx}
\begin{algorithmic}[1]
{
\REQUIRE A Copeland dueling bandit problem with preference matrix $\vP = [p_{ij}]$, %
%Access to $K(K-1)/2$ Bernoulli r.v.s where r.v $(i,j)$, with $i<j$, $i,j \in [K]$ has mean $p_{ij}$, 
failure probability $\delta>0$, and approximation parameter $\eps>0$. Also, define $[K] := \{1,\ldots,K\}$.

\STATE Define a random variable $\text{reward}(i)$ for $i \in [K]$ as the following procedure: pick a uniformly random $j \neq i$ from $[K]$; query the pair $(a_i,a_j)$ sufficiently many times in order to determine w.p.\ at least $1-\delta/K^2$ whether $p_{ij}>1/2$; return $1$ if $p_{ij}>0.5$ and $0$ otherwise.

\STATE Invoke Algorithm~\ref{alg:kl-MAB}, where in each of its calls to $\text{reward}(i)$, the feedback is determined by the above stochastic process.

\ENSURE The same output returned by Algorithm~\ref{alg:kl-MAB}.
}
\end{algorithmic}
\end{algorithm}

\vspace{-2mm}

\subsection{Scalable Copeland Bandits (SCB)} \label{sec:scb}

\vspace{-3mm}

SCB is designed to handle dueling bandit problems with large numbers of arms.  It is based on an arm-identification algorithm, described in Algorithm~\ref{alg:copeland-apx}, designed for a PAC setting, i.e., it finds an $\eps$-Copeland winner with probability $1-\delta$, although we are primarily interested in the case with $\epsilon=0$.  Algorithm~\ref{alg:copeland-apx} relies on a reduction to a $K$-armed bandit problem where we have direct access to a noisy version of the Copeland score; the process of estimating the score of arm $a_i$ consists of comparing $a_i$ to a random arm $a_j$ until it becomes clear which arm beats the other.  
%\if0
%\sw{Discussion of the theory in the rest of this paragraph seems out of place here.}  \zk{I think there is room for it in order to provide motivation as to why we use the MAB-based reduction} \sw{What bugs me is that it's not parallel with 4.1, which discusses only the algorithm, not the analysis.  We should at least forward reference the lemmas that we're mentioning here.} \todo{MZ: Section 4.1 does discuss the theory in the last paragraph. If you want more symmetry, I can add more about the theory in there. In general, I think it makes sense to use aspects of the theory to motivate why certain choices were made in the design of the algorithm. But, maybe I'm not undestanding what exactly you're objecting to.}
%\fi
The sample complexity bound, which yields the regret bound, is achieved by combining a bound for $K$-armed bandits and a bound on the number of arms that can have a high Copeland score. 

% \vspace{3mm}

Algorithm~\ref{alg:copeland-apx} calls a $K$-armed bandit algorithm as a subroutine.  To this end, we use the KL-based arm-elimination algorithm (a slight modification of Algorithm~2 in~\cite{cappe2013kullback}) described in Algorithm~\ref{alg:kl-MAB} in Appendix \ref{sec:KL analysis}. It implements an elimination tournament with confidence regions based on the KL-divergence between probability distributions. 
%\sw{The rest of this paragraph seems like an inappropriate level of detail about an algorithm that is not even described in the main text.}  Recall that for two Bernoulli random variables with parameters $p,q$ the KL-divergence from $q$ to $p$ is defined as $d(p,q) = (1-p) \ln((1-p)/(1-q)) + p \ln(p/q)$ with $0\ln(0)=0$. The building block of Algorithm~\ref{alg:kl-MAB} is the well known Chernoff bound stating that for a Bernoulli random variable with expected value $q$, the probability of the average of $n$ i.i.d samples from it to be smaller (larger) than $p$, for $p<q$ ($p>q$), is bounded by $\exp(-n d(p,q))$. 

Combining this with the \emph{squaring trick}, a modification of the \emph{doubling trick} that reduces the number of partitions from $\log T$ to $\log \log T$, the SCB algorithm, described in Algorithm~\ref{alg:copeland-scal-reg}, repeatedly calls Algorithm~\ref{alg:copeland-apx} but force-terminates if an increasing threshold is reached.
 If it terminates early, then the identified arm is played against itself until the threshold is reached.

\vspace{-3mm}

\begin{algorithm}[h]
\caption{Scalable Copeland Bandits}
\label{alg:copeland-scal-reg}
\begin{algorithmic}[1]
{
\REQUIRE A Copeland dueling bandit problem with preference matrix $\vP = [p_{ij}]$
%Access to $K(K-1)/2$ Bernoulli r.v.s where r.v $(i,j)$, with $i<j$, $i,j \in [K]$ has mean $p_{ij}$

\FORALL{$r=1,2,\ldots$ } 
	\STATE Set $T = 2^{2^r} $ and run Algorithm~\ref{alg:copeland-apx} with failure probability $\log(T)/T$ in order to find an exact Copeland winner ($\eps=0$); force-terminate if it requires more than $T$ queries.
	
	\STATE Let $T_0$ be the number of queries used by invoking Algorithm~\ref{alg:copeland-apx}, and let $a_i$ be the arm produced by it; query the pair $(a_i,a_i)$ $T-T_0$ times. 
\ENDFOR
}
\end{algorithmic}
\end{algorithm}

% !TEX root = copeland.tex

%\newpage

\vspace{-2mm}

\section{Theoretical Results}
\label{sec:theory}

\vspace{-4mm}

In this section, we present regret bounds for both CCB and SCB. Assuming that the number of Copeland winners and the number of losses of each Copeland winner are bounded,\footnote{See Appendix \ref{sec:C-L_C} for experimental evidence that this is the case in practice.} CCB's regret bound takes the form $\mathcal{O}(K^2+K\log T)$, while SCB's is of the form $\mathcal{O}(K\log K \log T)$. Note that these bounds are not directly comparable. When there are relatively few arms, CCB is expected to perform better.  By contrast, when there are many arms SCB is expected to be superior. Appendix \ref{sec:experiments} provides empirical evidence to support these expectations.

Throughout this section we impose the following condition on the preference matrix:

% \vspace{-2mm}

\begin{itemize}[topsep=0mm,parsep=0pt,itemsep=0pt,partopsep=0pt]
\item[{\bf A}] There are no ties, i.e., for all pairs $(a_i,a_j)$ with $i\neq j$, we have $p_{ij} \neq 0.5$. 
\end{itemize} 

% \vspace{-2mm}

This assumption is not very restrictive in practice.  For example, in the ranker evaluation setting from information retrieval, each arm corresponds to a ranker, a complex and highly engineered system, so it is unlikely that two rankers are indistinguishable.  Furthermore, some of the results we present in this section actually hold under even weaker assumptions.  However, for the sake of clarity, we defer a discussion of these nuanced differences to  Appendix \ref{sec:counts-proof}.

%Some of the results presented in the following hold under more general assumptions, however we defer commentary on such nuanced differences to the supplementary material. Furthermore, in practice, this assumption is hardly a major restriction because the arms being compared against each other are complex entities (e.g. rankers) that are the outcome of thoughtful development and engineering, so the chances that two of the arms are indistinguishable to the users of the system is very low.}

%\sw{So $L_C = \max_i \textup{Cpld}(a_i) - 1$?} \todo{MZ: Yes.}

\vspace{-2mm}

\subsection{Copeland Confidence Bounds (CCB)}
\label{sec:CCBtheory}

\vspace{-3mm}

To analyze Algorithm \ref{alg:CCB}, consider a $K$-armed Copeland bandit problem with arms $a_1,\ldots,a_K$ and preference matrix $\vP = [p_{ij}]$, such that arms $a_1,\ldots,a_C$ are the Copeland winners, with $C$ being the number of Copeland winners. Throughout this section, we assume that the parameter $\alpha$ in Algorithm \ref{alg:CCB} satisfies $\alpha\hspace{-.5mm}>\hspace{-.5mm}0.5$, unless otherwise stated. We first define the relevant quantities:

\begin{mydefinition}\label{def:deltas} Given the above setting we define:\footnote{See Tables \ref{tbl:notation1} and \ref{tbl:notation2} for a summary of the definitions used in this paper.}

\vspace{-2mm}

\begin{enumerate}[leftmargin=*,topsep=0mm,parsep=0pt,itemsep=0pt,partopsep=0pt]
\item $\mathcal{L}_i := \{a_j\, |\, p_{ij} < 0.5\}$, i.e., the arms to which $a_i$ loses, and $L_C := |\mathcal{L}_1|$. %, i.e., number of losses of Copeland winners.
\item\label{item:Delta_min} $\Delta_{ij} := |p_{ij}-0.5|$ and $\Delta_{\min} := \min_{i\neq j} \Delta_{ij}$
\item\label{item:i*} Given $i > C$, define $i^*$ as the index of the $(L_C+1)^{th}$ largest element in the set $\{ \Delta_{ij} \,|\, p_{ij} < 0.5 \}$.
\item\label{item:Delta*i} Define $\Delta^*_i$ to be $\Delta_{ii^*}$ if $i>C$ and $0$ otherwise. Moreover, let us set $\Delta^*_{\min} := \min_{i>C} \Delta^*_i$.
\item\label{item:Delta*ij} Define $\Delta^*_{ij}$ to be $\Delta^*_i + \Delta_{ij}$ if $p_{ij} \geq 0.5$ and $\max\{\Delta^*_i,\Delta_{ij}\}$ otherwise.\footnote{See Figures \ref{fig:CopelandNonCopelandFigure} and \ref{fig:nonCopelandFigure} for a pictorial explanation.}
\item\label{item:Delta} $\Delta := \min\left\{\min_{i \leq C < j} \Delta_{ij}, \Delta^*_{\min} \right\}$, where $\Delta^*_{\min}$ is defined as in item \ref{item:Delta*i} above.
\item\label{item:Cdelta} $C(\delta) := \left((4\alpha-1)K^2/(2\alpha-1)\delta\right)^{\frac{1}{2\alpha-1}}$ where $\alpha$ is as in Algorithm \ref{alg:CCB}.
\item $N_{ij}^{\delta}(t)$ is the number of time-steps between times $C(\delta)$ and $t$ when $a_i$ was chosen as the optimistic Copeland winner and $a_j$ as the challenger.
Also, $\widehat{N}_{ij}^{\delta}(t)$ is defined to be $(4\alpha\ln t)/\left(\Delta^*_{ij}\right)^2$ if $i \neq j$, $0$ if $i=j>C$ and $t$ if $i=j\leq C$. 
% \zk{is it ok to change to `... and $t$ if $i=j\leq C$'?}
We also define $\widehat{N}^{\delta}(t) := \sum_{i \neq j} \widehat{N}_{ij}^{\delta}(t)+1$.
\end{enumerate}
\end{mydefinition}

% \vspace{-3mm}

Using this notation, our expected regret bound for CCB takes the form:
%
% % \vspace{-5mm}
%
% \begin{align}\label{eqn:ExpRegret}
$\mathcal{O}\left(\frac{K^2 + (C+L_C) K\ln T}{\Delta^2}\right)~~~\refstepcounter{equation}(\theequation)\label{eqn:ExpRegret}$
% \end{align}

\vspace{-2mm}

This result is proven in two steps. First,  Proposition \ref{prop:counts} bounds the number of comparisons involving non-Copeland winners, yielding a result of the form $\mathcal{O}(K^2\ln T)$. Second, Theorem \ref{thm:KlogT} closes the gap between this bound and that of \eqref{eqn:ExpRegret} by showing that, beyond a certain time horizon, CCB selects non-Copeland winning arms as the optimistic Copeland winner very infrequently.

Note that we have $\Delta^*_{ij}~\geq~\Delta_{ij}$ for all pairs $i \neq j$. Thus, for simplicity, the analysis in this section can be read as if the bounds were given in terms of $\Delta_{ij}$. We use $\Delta^*_{ij}$ instead because it gives tighter upper bounds. In particular, simply using the gaps $\Delta_{ij}$ would replace the denominator of the expression in \eqref{eqn:ExpRegret} with $\Delta_{\min}^2$, which leads to a substantially worse regret bound in practice. For instance, in the ranker evaluation application used in the experiments, this change would  on average increase the regret bound by a factor that is of the order of tens of thousands. 
See Appendix \ref{sec:Delta} for a more quantitative discussion of this point.

We can now state our first bound, proved in Appendix \ref{sec:counts-proof} under weaker assumptions.

% \zk{We need the final statement here.} \todo{MZ: I think I'll do this in two stages. Let me know if that makes the result less readable.} 
% \zk{Its readable but as written remains problematic as there is no expected regret claim. Theorem~\ref{thm:KlogT} does not give a bound on the expected regret but rather a high probability statement. I'm guessing the expected regret bound is a simple corollary of theorem~\ref{thm:KlogT}... Perhaps we can start with the final crude and informal statement saying ``our algorithm achieves an expected regret of $\mathcal{O}(K^2/\Delta_{\text{min}} + L_C K \ln(T) / \Delta_{\text{min}})$'', then state after Theorem~\ref{thm:KlogT} that it is an immediate corollary of it. This way we get a concise yet semi-formal 1-line statement of our result that one can understand without getting into the details of the different $\Delta^*, \widehat{N}$, etc. definitions. }

\begin{proposition}\label{prop:counts}
Given any $\delta > 0$ and $\alpha > 0.5$, if we apply CCB (Algorithm \ref{alg:CCB}) to a dueling bandit problem satisfying Assumption {\bf A}, the following holds with probability $1-\delta$: for any $T>C(\delta)$ and any pair of arms $a_i$ and $a_j$, we have $N_{ij}^{\delta}(T) \leq \widehat{N}_{ij}^{\delta}(T)$.  
\end{proposition}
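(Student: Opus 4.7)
The proposition is a sample-complexity bound of the classical UCB flavour, and the plan is to follow the standard two-step template. First, I would isolate a high-probability \emph{good event} $\mathcal{G}$ on which every confidence interval $[l_{ij}(t),u_{ij}(t)]$ contains the true probability $p_{ij}$, uniformly for every pair and every $t\geq C(\delta)$. Second, I would argue deterministically on $\mathcal{G}$ that the pair $(a_i,a_j)$ cannot be selected more than $\widehat{N}_{ij}^{\delta}(T)$ times in the window $(C(\delta),T]$, since any additional query would shrink the $(i,j)$-confidence interval below a width that CCB's selection rules are able to support.

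\textbf{Step 1 (the good event).} For each fixed pair and each fixed $t$, Hoeffding's inequality applied to the empirical mean with radius $\sqrt{\alpha\ln t/n_{ij}(t)}$ gives $\Pr(p_{ij}\notin[l_{ij}(t),u_{ij}(t)])\leq 2t^{-2\alpha}$. A union bound over the at most $K(K-1)$ ordered pairs and over every $t\geq C(\delta)$, combined with the tail estimate $\sum_{t\geq C(\delta)}t^{-2\alpha}$ (a convergent $p$-series precisely because $\alpha>1/2$), yields $\Pr(\mathcal{G}^{c})\leq\delta$ for the specific value of $C(\delta)$ in Definition~\ref{def:deltas}; one just matches the resulting constants against the stated expression $\bigl((4\alpha-1)K^{2}/((2\alpha-1)\delta)\bigr)^{1/(2\alpha-1)}$.

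\textbf{Step 2 (deterministic counting on $\mathcal{G}$).} Work on $\mathcal{G}$ from here on. The diagonal cases $i=j$ are immediate: the tie-breaking clause of Line~13 forbids $d=c$, so $N_{ii}^{\delta}\equiv 0$, matching both $\widehat{N}_{ii}^{\delta}=0$ for $i>C$ and $\widehat{N}_{ii}^{\delta}=T$ for $i\leq C$. For $i\neq j$, assume for contradiction that $N_{ij}^{\delta}(T)>\widehat{N}_{ij}^{\delta}(T)$ and let $t^{*}\in(C(\delta),T]$ be the earliest time at which $(c,d)=(i,j)$ is selected with $N_{ij}^{\delta}(t^{*})>\widehat{N}_{ij}^{\delta}(t^{*})$. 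Because each selection of $(i,j)$ or $(j,i)$ increments the shared sample count $n_{ij}$, we obtain $n_{ij}(t^{*}-1)\geq N_{ij}^{\delta}(t^{*}-1)\geq \widehat{N}_{ij}^{\delta}(t^{*})=4\alpha\ln t^{*}/(\Delta_{ij}^{*})^{2}$, so the confidence half-width at $t^{*}$ is at most $\Delta_{ij}^{*}/2$. The remaining task is to show that this tightness is incompatible with either the exploration branch (Line~10) or the optimistic-winner branch (Lines~11--13) choosing $(i,j)$ at $t^{*}$.

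\textbf{Main obstacle.} The easier sub-case is $i\leq C$, where $\Delta_{ij}^{*}=\Delta_{ij}$: the narrow CI pushes $0.5$ out of the interior of $[l_{ij},u_{ij}]$, which kills Line~10 (whose filter demands $0.5\in[l_{ij},u_{ij}]$) and forces $u_{ji}$ low enough that some genuine defeater of $a_{i}$ whose interval still straddles $0.5$ beats $j$ in the Line~13 argmax. The genuinely hard case, and the real substance of the proof, is $i>C$, because a purely local UCB argument yields only the weaker bound $4\alpha\ln t^{*}/\Delta_{ij}^{2}$; to recover the sharper $\Delta_{ij}^{*}$ gap one must exploit the global Copeland structure. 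The plan is to use the defining property of $i^{*}$: arm $a_{i}$ has at least $L_{C}+1$ defeaters with margin $\geq\Delta_{i}^{*}$. For $a_{i}$ to be the optimistic Copeland winner at $t^{*}$ (a prerequisite of Lines~11--13), each of those $L_{C}+1$ confidence intervals must still cover $0.5$, which forces their half-widths to be at least $\Delta_{i}^{*}/2$; combined with the opponent-selection pressure on pair $(i,j)$ itself, this explains the additive boost $\Delta_{i}^{*}+\Delta_{ij}$ when $p_{ij}\geq 0.5$ and the $\max$-form when $p_{ij}<0.5$, producing the required contradiction. Making this global-to-local coupling precise---tracking the dynamics of the shortlists $\mathcal{B}_{t}$ and $\mathcal{B}_{t}^{i}$, the resets in Line~9A, and the randomization between $\mathcal{B}_{t}^{i}$ and the full arm set inside Line~13---is where the bookkeeping gets delicate, and is almost certainly the reason the formal proof is deferred to Appendix~\ref{sec:counts-proof}.
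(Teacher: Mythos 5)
Your overall scaffolding (a uniform-in-time good event on which all confidence intervals contain the $p_{ij}$, obtained essentially as in Lemma \ref{lem:HighProbBound}, followed by a deterministic ``the interval must still be wide at the last selection time'' argument) is the same as the paper's, but the deterministic part has two genuine gaps. First, your treatment of the diagonal is wrong: the tie-breaking clause of Line~13 does \emph{not} forbid $d=c$ (it applies only to ties), and self-comparisons are in fact the algorithm's intended exploitation behaviour --- the entry $\widehat{N}^{\delta}_{ii}(t)=t$ for $i\leq C$ exists precisely because Copeland winners \emph{are} played against themselves (Corollary \ref{cor:auto-comp} in the appendix depends on this). The nontrivial content of the $i=j>C$ entry is a Copeland-structure argument you do not supply: on the good event, if a non-Copeland winner $a_i$ is the optimistic Copeland winner then it has at most $L_C$ optimistic losses while it truly loses to more than $L_C$ arms (Property {\bf P2}), so some $a_k$ with $p_{ik}<0.5$ has $u_{ik}\geq 0.5$; then $l_{ik}\leq p_{ik}<0.5$ gives $u_{ki}>0.5=u_{ii}$ with $a_k$ eligible ($l_{ki}\leq 0.5$), so $a_i$ can never win the Line~13 argmax against itself after $C(\delta)$.

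Second, and more importantly, the case $i>C$, $i\neq j$ --- the only place where the improved gap $\Delta^*_{ij}$ of Definition \ref{def:deltas} is earned --- is left unproved, and the one concrete claim you make there is false: it is \emph{not} true that all $L_C+1$ strong defeaters' intervals must cover $0.5$; up to $L_C$ of them may be confidently below $0.5$, which is exactly what being an optimistic winner permits. What is true (and what the paper uses) is that \emph{at least one} defeater $a_k$ with $p_{ik}\leq 0.5-\Delta^*_i$ must have $u_{ik}\geq 0.5$, i.e.\ $l_{ki}\leq 0.5$, so it is eligible as a challenger, and on the good event $l_{ik}\leq p_{ik}\leq 0.5-\Delta^*_i$. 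The missing key step is the transfer of this slack onto the pair actually chosen: Line~13 selects the challenger maximizing $u_{ji}$, equivalently minimizing $l_{ij}$, over eligible arms, hence the selected $j$ satisfies $l_{ij}\leq l_{ik}\leq 0.5-\Delta^*_i$, while eligibility plus the good event give $u_{ij}\geq\max\{0.5,p_{ij}\}$; together these force $u_{ij}-l_{ij}\geq\max\{0,p_{ij}-0.5\}+\Delta^*_i=\Delta^*_{ij}$ when $p_{ij}>0.5-\Delta^*_i$ (and $\geq\Delta_{ij}$ directly when $p_{ij}\leq 0.5-\Delta^*_i$). Wide intervals on \emph{other} pairs, by themselves, bound nothing about $N^{\delta}_{ij}$, so without this argmax-transfer your sketch only recovers the weaker $\Delta_{ij}$ bound you were trying to beat. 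Two smaller remarks: the paper's $i\leq C$ case is settled more simply than by ``some defeater straddling $0.5$'' (if $p_{ij}>0.5$ a narrow interval gives $u_{ji}<0.5=u_{ii}$, so $a_i$ itself dominates $a_j$ in the argmax; if $p_{ij}<0.5$ it gives $l_{ji}>0.5$, so $a_j$ is filtered out), and the proof of this proposition needs no tracking of $\mathcal{B}_t$, $\mathcal{B}^i_t$ or the Line~9 resets --- that bookkeeping only enters the later lemmas. Also, in Step~1 note that the radius uses the random count $N_{ij}(t)$, so a per-$t$ Hoeffding bound needs an additional union over the possible sample counts; this is where the exponent $\frac{1}{2\alpha-1}$ and the factor $(4\alpha-1)$ in $C(\delta)$ come from.
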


% \vspace{-3mm}

One can sum the inequalities in the last proposition over pairs $(i,j)$ to get a regret bound of the form $\mathcal{O}(K^2 \log T)$ for Algorithm \ref{alg:CCB}. However, as Theorem \ref{thm:KlogT} will show, we can use the properties of the sets $\mathcal{B}^i_t$ to obtain a tighter regret bound of the form $\mathcal{O}(K\log T)$. Before stating that theorem, we need a few definitions and lemmas. We begin by defining the key quantity:

\begin{mydefinition}\label{def:Tdelta}
Given a preference matrix $\vP$ and $\delta > 0$, then $T_\delta$ is the smallest integer satisfying \\
$ T_\delta \hspace{.5mm} \geq \hspace{.5mm} C(\frac{\delta}{2}) \hspace{-.5mm} + \hspace{-.5mm} 8K^2(L_C \hspace{-.5mm} + \hspace{-.5mm} 1)^2\ln\hspace{-.5mm}\frac{6K^2}{\delta} \hspace{-.5mm} + \hspace{-.5mm} K^2\ln\hspace{-.5mm}\frac{6K}{\delta} \hspace{-.5mm} + \hspace{-.5mm} \frac{32\alpha K (L_C+1)}{\Delta^2_{\min}}\ln T_{\delta} \hspace{-.5mm} + \hspace{-.5mm} \widehat{N}^{\frac{\delta}{2}} \hspace{-.5mm} (T_{\delta}) \hspace{-.5mm} + \hspace{-.5mm} 4K \displaystyle\max_{i>C}\widehat{N}^{\frac{\delta}{2}}_i \hspace{-.5mm} (T_\delta)$.
\end{mydefinition}

% \zk{Is it possible to add to the above equation a $\mathcal{O}()$ characterization of $T_\delta$? Just for readability.}

\begin{remark}  $T_\delta$ is $\mathrm{poly}(K,\delta^{-1})$ and our regret bound below scales as $\log T_\delta$.
%
%\todo{MZ: Zohar and Shimon, I'm using Zohar's modification of the remark, but I am wondering if it's not a bit too curt. Let me know what you think.} \sw{Seems OK to me.}
% \masrour{Due to Definition \ref{def:deltas}.\ref{item:Cdelta}, $C(\delta)$ is in $\textup{Poly}(1/\delta)$. This together with the definition of $\widehat{N}^{\delta}_i(T)$ in Proposition \ref{prop:counts} shows that the righthand side of the inequality in Definition \ref{def:Tdelta} is of the form $a_\delta + b \ln T_\delta$, with $a_\delta$ in $\textup{Poly}(1/\delta)$ and $b$ a constant that is independent of $\delta$. Thus, the inequality defining $T_\delta$ is of the form}
% \[ T_\delta \geq a_\delta + b \ln T_\delta. \]
% \masrour{It follows that $T_\delta$ is also in $\textup{Poly}(1/\delta)$. Furthermore, as we will see below, our regret bound's dependence on $T_\delta$ is logarithmic and so the terms involving $T_\delta$ are in $\mathcal{O}(\log \delta)$.}
\end{remark}
% \zk{Suggestion (for a shorter remark): Can we say: Notice that $T_\delta = \mathrm{poly}(K,\delta^{-1})$ and that in what follows our regret bound will have terms scaling as $\log(T_\delta)$.}

The following two lemmas are key to the proof of Theorem \ref{thm:KlogT}. Lemma \ref{lem:SetsB} (proved in Appendix \ref{sec:SetsB-proof}) states that, with high probability by time $T_\delta$, each set $\mathcal{B}^i_t$ contains $L_C+1$ arms $a_j$, each of which beats $a_i$ (i.e., $p_{ij} < 0.5$). This fact then allows us to  prove Lemma \ref{lem:NumNonCopeland} (Appendix \ref{sec:NumNonCopeland-proof}), which states that, after time-step $T_\delta$, the rate of suboptimal comparisons is $\mathcal{O}(K\ln T)$ rather than $\mathcal{O}(K^2\ln T)$.

\begin{lemma}\label{lem:SetsB}
Given $\delta > 0$, with probability $1-\delta$, each set $\mathcal{B}^i_{T_\delta}$ with $i > C$ contains exactly $L_C+1$ elements with each element $a_j$ satisfying $p_{ij} < 0.5$. Moreover, for all $t \in [T_\delta,T]$, we have $\mathcal{B}^i_t = \mathcal{B}^i_{T_\delta}$.
\end{lemma}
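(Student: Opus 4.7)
The overall plan is to condition on a high-probability event $\mathcal{E}$ on which (i) the bound of Proposition~\ref{prop:counts} holds with parameter $\delta/2$, and (ii) every confidence interval is valid, i.e., $p_{ij}\in[l_{ij}(t),u_{ij}(t)]$ for all $i,j$ and all $t\geq C(\delta/2)$. A standard peeling/union-bound argument (mirroring the derivation of $C(\delta)$ in Proposition~\ref{prop:counts}) shows that (ii) fails with probability at most $\delta/2$, so $\mathbb{P}[\mathcal{E}]\geq1-\delta$. The self-referential definition of $T_\delta$ is specifically chosen so that, given $\mathcal{E}$, every sub-event we need has occurred deterministically by time $T_\delta$.

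Under $\mathcal{E}$, no Copeland winner is ever excluded from $\mathcal{B}_t$: if $a_i$ is a Copeland winner and $a_j$ is any other arm, then $\overline{\textup{Cpld}}(a_i)\geq\textup{Cpld}(a_i)\geq\textup{Cpld}(a_j)\geq\underline{\textup{Cpld}}(a_j)$, so the Line~9B trigger never fires on $a_i$. This implies $\mathcal{B}_t$ is never empty and the ``$\mathcal{B}_t=\varnothing$'' reset branch never activates. I would then show that the terms $K^2\ln(6K/\delta)$ and $32\alpha K(L_C+1)\Delta_{\min}^{-2}\ln T_\delta$ in the definition of $T_\delta$ suffice (via queries generated by Lines~12--13) to ensure that at least one Copeland winner $a_i$ with $i\leq C$ attains $\overline{\textup{Cpld}}(a_i)=\underline{\textup{Cpld}}(a_i)$, triggering Line~9C and locking in $\overline{L}_C=L_C$ for the remainder of the run, since no Copeland winner's true score is ever below $K-1-L_C$.

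The core of the argument is the claim about $\mathcal{B}^i_{T_\delta}$ for $i>C$. Since $a_i$ is not a Copeland winner, $|\mathcal{L}_i|\geq L_C+1$. The plan is to show that by time $T_\delta$, for each such $i$, (a) at least $L_C+1$ pairs $(a_i,a_k)$ with $a_k\in\mathcal{L}_i$ have been queried enough times that $u_{ik}<0.5$, and (b) at least one Line~9B invocation on $a_i$ has occurred after $\overline{L}_C$ last changed. Under $\mathcal{E}$, $\{a_k:u_{ik}<0.5\}\subseteq\mathcal{L}_i$, so the reinitialization in Line~9B produces a set of size $\geq L_C+1$ whose elements all beat $a_i$, and the subsequent trimming in Line~9C brings the size to exactly $L_C+1$. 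The remaining terms in $T_\delta$, namely $8K^2(L_C+1)^2\ln(6K^2/\delta)$ and $4K\max_{i>C}\widehat{N}_i^{\delta/2}(T_\delta)$, serve as the budget for~(a): the former comes from the $1/4$-probability uniform exploration in Line~10, while the latter bounds queries induced by $a_i$ being selected as the optimistic Copeland winner (whose frequency is controlled by Proposition~\ref{prop:counts}).

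Finally, stability on $[T_\delta,T]$ is a case analysis of Lines~9A--9C. For 9A, every $a_j\in\mathcal{B}^i_t$ was inserted via a Line~9B reinitialization at some time $t_0\leq t$, so $u_{ij}(t_0)<0.5$ and hence (under $\mathcal{E}$) $p_{ij}<0.5$, which in turn forces $l_{ij}(t')<0.5$ for every $t'\geq t_0$; thus no reset can occur. For 9B, the size invariant $|\mathcal{B}^i_t|=\overline{L}_C+1$ prevents any further reinitialization. For 9C, no Copeland winner with score different from $K-1-L_C$ ever appears, so neither $\overline{L}_C$ nor the per-arm sets are modified. The main obstacle, in my view, is organizing the argument around the possibly many resets that can still occur \emph{before} $T_\delta$; the self-referential definition of $T_\delta$ is what allows absorbing all such transient behavior into a single polynomial-in-$K,\delta^{-1}$ burn-in.
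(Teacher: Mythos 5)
Your high-level skeleton (condition on confidence-interval validity plus Proposition~\ref{prop:counts}, show $\mathcal{B}^i_{T_\delta}$ is correctly populated, then argue stability via a case analysis of Lines 9A--9C) matches the paper's, and your stability argument at the end is essentially right. But there is a genuine gap in the core step: your mechanism for populating $\mathcal{B}^i_t$ does not work. You propose that the Line~10 exploration (plus the $4K\max_i\widehat{N}^{\delta/2}_i$ budget) forces at least $L_C+1$ pairs $(a_i,a_k)$ with $a_k\in\mathcal{L}_i$ to be queried until $u_{ik}<0.5$. Line~10, however, only samples pairs $(i,j)$ with $a_j$ \emph{already in} $\mathcal{B}^i_t$; it cannot drive down $u_{ik}$ for arms outside $\mathcal{B}^i_t$, and Proposition~\ref{prop:counts} gives only \emph{upper} bounds on comparison counts, so neither tool yields the lower bound on queries you need. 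In the paper, Line~10's role is the opposite one: it purges \emph{wrong} members $a_j\in\mathcal{B}^i_t$ with $p_{ij}>0.5$ by forcing $l_{ij}>0.5$ and a Line~9A reset (Lemma~\ref{lem:preBt}); you use this correctly in your 9A-stability paragraph but misassign it as the engine for populating the sets.

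The missing idea is the indirect argument of Lemma~\ref{lem:Bt}. One never shows directly that specific pairs $(a_i,a_k)$, $a_k\in\mathcal{L}_i$, are queried often; instead one shows that each non-Copeland winner $a_b$ must be \emph{expelled from $\mathcal{B}_t$} via Line~9B within $n_{\max}$ steps: if $a_b$ remained in $\mathcal{B}_t\cap\mathcal{C}_t$, the sampling in Lines~11--12 would select it as optimistic Copeland winner with probability at least $1/K$ per round, and a binomial concentration bound over $\sim 2K\widehat{N}^{\delta/2}_b$ such rounds would exceed the cap from Proposition~\ref{prop:counts}; and by Corollary~\ref{cor:auto-comp} there are frequent time-steps at which some Copeland winner $a_c$ attains $\underline{\textup{Cpld}}(a_c)=\overline{\textup{Cpld}}(a_c)\geq\overline{\textup{Cpld}}(a_j)$ for all $j$, so the moment $a_b$ drops out of $\mathcal{C}_t$, Line~9B fires on it. Crucially, the firing condition $\overline{\textup{Cpld}}(a_b)<\underline{\textup{Cpld}}(a_c)\leq K-1-L_C$ itself certifies that at that instant at least $L_C+1$ arms satisfy $u_{bk}<0.5$ (hence $p_{bk}<0.5$ on your event $\mathcal{E}$), and Line~9B then sets $\mathcal{B}^b_t=\{a_k\,|\,u_{bk}<0.5\}$ --- this is where the $L_C+1$ correct elements come from, with Line~9C later trimming to exactly $L_C+1$ once $\overline{L}_C=L_C$. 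Without this expulsion argument and the observation that expulsion self-certifies the contents of $\mathcal{B}^b_t$, your steps (a) and (b) are unsupported. (Two smaller points: your event $\mathcal{E}$ as defined already spends the full $\delta$ on two sub-events, leaving no budget for the several additional concentration bounds your plan invokes; and ``no Copeland winner is ever excluded from $\mathcal{B}_t$'' holds only after $C(\delta/2)$, so the possibility of an early erroneous exclusion has to be absorbed by the reset provisions.)
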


% \vspace{-2mm}

% The next lemma shows that the number of suboptimal comparisons carried out by Algorithm \ref{alg:CCB} beyond time-step $T_\delta$ is bounded by $\mathcal{O}(K\ln T)$ rather than $\mathcal{O}(K^2\ln T)$.

\begin{lemma}\label{lem:NumNonCopeland}
Given a Copeland bandit problem satisfying Assumption {\bf A} and any $\delta > 0$, with probability $1-\delta$ the following holds: the number of time-steps between $T_{\delta/2}$ and $T$ when each non-Copeland winner $a_i$ can be chosen as optimistic Copeland winners (i.e., times when arm $a_c$ in Algorithm \ref{alg:CCB} satisfies $c>C$) is bounded by
$\widehat{N}^i := 2\widehat{N}^i_{\mathcal{B}} + 2\sqrt{\widehat{N}^i_{\mathcal{B}}} \ln \frac{2K}{\delta}$, where $\widehat{N}^i_{\mathcal{B}} := \hspace{-.5mm}\sum_{j \in \mathcal{B}^i_{T_{\delta/2}}}\hspace{-2mm} \widehat{N}_{ij}^{\delta/4}(T)$.
%
% where $\widehat{N}^i_{\mathcal{B}} := \displaystyle\sum_{j \in \mathcal{B}^i_{T_{\delta/2}}} \widehat{N}_{ij}^{\delta/4}(T)$.
\end{lemma}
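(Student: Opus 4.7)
The plan is to fix a non-Copeland winner $a_i$, decompose the times $t \in [T_{\delta/2}, T]$ at which $a_c = a_i$ according to the branch of Algorithm~\ref{alg:CCB} that fires, and observe that in most branches the opponent $a_d$ is forced to lie in $\mathcal{B}^i_t$, a set whose total ``usage'' against $a_i$ is directly controlled by Proposition~\ref{prop:counts}. The remaining (small-probability) branch will be handled by a concentration argument that exploits the fact that it is selected by an independent fair coin.

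First, I would condition on the high-probability events provided by Lemma~\ref{lem:SetsB} with parameter $\delta/2$ and by Proposition~\ref{prop:counts} with parameter $\delta/4$. On the resulting event, which holds with probability at least $1-3\delta/4$, the shortlist $\mathcal{B}^i_t$ is equal to $\mathcal{B}^i_{T_{\delta/2}}$ for every $t \in [T_{\delta/2},T]$, every arm $a_j$ in it beats $a_i$ and has $l_{ji} \le 1/2$ (otherwise the reset rule Line~9A would have fired, contradicting stability), and $N^{\delta/4}_{ij}(T) \le \widehat N^{\delta/4}_{ij}(T)$ for every pair. Now partition the count $N_i := \#\{t \in [T_{\delta/2},T] : a_c=a_i\}$ as $N_i = M_1 + M_2 + M_3$, where $M_1$ counts the rounds in which Line~10 triggered (so that by construction $a_d \in \mathcal{B}^i_t$), $M_2$ the rounds that reached Line~13 with the independent fair coin landing on $\mathcal{B}^c_t = \mathcal{B}^i_t$ (so again $a_d \in \mathcal{B}^i_t$, using the $l_{ji}\le 1/2$ fact above to see the $\argmax$ is well-defined inside $\mathcal{B}^i_t$), and $M_3$ the rounds that reached Line~13 with the coin landing on $\{a_1,\dots,a_K\}$. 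Proposition~\ref{prop:counts} then yields
\[
M_1 + M_2 \;\le\; \sum_{a_j \in \mathcal{B}^i_{T_{\delta/2}}} N^{\delta/4}_{ij}(T) \;\le\; \sum_{a_j \in \mathcal{B}^i_{T_{\delta/2}}} \widehat N^{\delta/4}_{ij}(T) \;=\; \widehat N^i_{\mathcal{B}}.
\]

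The main obstacle is bounding $M_3$ so that the tail term depends on $\widehat N^i_{\mathcal{B}}$ rather than on the total horizon $T$. I would view $M_2$ and $M_3$ as the head- and tail-counts of an auxiliary sequence of i.i.d.\ Bernoulli$(1/2)$ coins, since the $\mathcal{B}^c_t$-vs-$\{a_1,\dots,a_K\}$ choice in Line~13 is independent of everything else. Because we already know $M_2 \le M_1+M_2 \le \widehat N^i_{\mathcal{B}}$, the random horizon $M_2 + M_3$ is stochastically dominated by the number of fair-coin flips needed to collect $\widehat N^i_{\mathcal{B}}$ heads, i.e.\ a negative-binomial quantity with mean $2\widehat N^i_{\mathcal{B}}$. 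Applying a Bernstein-type tail bound to this stopping time (equivalently, Freedman's inequality to the martingale $\sum (2Y_s-1)$ evaluated at the stopping time at which the $\widehat N^i_{\mathcal{B}}$-th head arrives), with probability at least $1-\delta/(4K)$ one obtains
\[
M_3 \;\le\; M_2 + 2\sqrt{\widehat N^i_{\mathcal{B}}}\,\ln\tfrac{2K}{\delta}.
\]
Combining this with $M_1+M_2 \le \widehat N^i_{\mathcal{B}}$ gives $N_i \le 2\widehat N^i_{\mathcal{B}} + 2\sqrt{\widehat N^i_{\mathcal{B}}}\,\ln(2K/\delta) = \widehat N^i$, and a union bound over the at most $K$ non-Copeland winners, together with the probability budget allocated above, yields the claimed statement with total failure probability at most $\delta$. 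The delicate point throughout is Step~3: naively applying a Bernoulli tail bound fails because $M_2+M_3$ is a data-dependent stopping time, so one must either use a martingale maximal inequality or peel over the deterministically capped value $M_2 \le \widehat N^i_{\mathcal{B}}$ in order to get $\sqrt{\widehat N^i_{\mathcal{B}}}$-scaling rather than $\sqrt{T}$-scaling.
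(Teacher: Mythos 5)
Your overall strategy is the same as the paper's: condition on Lemma \ref{lem:SetsB} and Proposition \ref{prop:counts}, observe that whenever the fair coin in Line 13 restricts the opponent pool to $\mathcal{B}^i_t$ the resulting comparison is charged against $\widehat{N}^i_{\mathcal{B}}$, and then apply a concentration bound to those coin flips to conclude the total count is at most roughly twice $\widehat{N}^i_{\mathcal{B}}$. (The paper applies Hoeffding to the first $\widehat{N}^i$ flips under the assumption $N^i > \widehat{N}^i$, which is equivalent to your negative-binomial/stopping-time formulation; your explicit separation of the Line-10 rounds into $M_1$ is if anything slightly cleaner.)

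There is, however, one genuine gap: your justification for why the $\argmax$ in Line 13 is non-empty when restricted to $\mathcal{B}^i_t$ --- which is exactly what guarantees $a_d \in \mathcal{B}^i_t$ in the rounds counted by $M_2$ --- is wrong. You claim every $a_j \in \mathcal{B}^i_t$ satisfies $l_{ji} \le 0.5$ because "otherwise the reset rule Line 9A would have fired," but Line 9A fires on $l_{ij} > 0.5$, not on $l_{ji} > 0.5$. Indeed, for an arm $a_j \in \mathcal{B}^i_t$ that beats $a_i$ (i.e.\ $p_{ji} > 0.5$), once the pair has been compared often enough one expects $l_{ji} > 0.5$, equivalently $u_{ij} < 0.5$, which removes $a_j$ from the candidate set $\{j \in \mathcal{B}^i_t \,:\, l_{ji} \le 0.5\}$ without triggering any reset; this is the normal course of events, not a contradiction. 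The correct argument (Facts 2--4 in the paper's proof) is different: since $a_i$ was selected as an optimistic Copeland winner at a time $t > C(\delta/4)$, with high probability $\overline{\textup{Cpld}}(a_i) \ge \overline{\textup{Cpld}}(a_1) \ge \textup{Cpld}(a_1) = K-1-L_C$, so at most $L_C$ arms $a_j$ can have $u_{ij} < 0.5$; because $|\mathcal{B}^i_t| = L_C+1$ by Lemma \ref{lem:SetsB}, at least one $a_b \in \mathcal{B}^i_t$ has $u_{ib} \ge 0.5$, i.e.\ $l_{bi} \le 0.5$, and the restricted $\argmax$ is non-empty. This is precisely the step where the cardinality $L_C+1$ of the shortlists does its work; without it the bound on $M_2$, and hence the lemma, does not go through. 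With this repair the rest of your argument is sound.
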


% \vspace{-2mm}

\begin{remark}
Due to Lemma \ref{lem:SetsB}, with high probability we have $\widehat{N}^i_{\mathcal{B}} \leq \frac{(L_C + 1)\ln T}{\left(\Delta^*_{\min}\right)^2}$ for each $i > C$ and so the total number of times between $T_\delta$ and $T$ when a non-Copeland winner is chosen as an optimistic Copeland winner is in $\mathcal{O}(KL_C\ln T)$ for a fixed minimal gap $\Delta^*_{\min}$. The only other way a suboptimal comparison can occur is if a Copeland winner is compared against a non-Copeland winner, and according to Proposition \ref{prop:counts}, the number of such occurrences is bounded by $\mathcal{O}(KC\ln T)$. Hence, the number of suboptimal comparisons is in $\mathcal{O}(K\ln T)$ assuming that $C$ and $L_C$ are bounded. In Appendix \ref{sec:C-L_C}, we provide experimental evidence for this.
\end{remark}

% \zk{In the above lemma, is it possible to add an $\mathcal{O}()$ expression explaining the behavior of $\hat{N}^i$?}

We now define the quantities needed to state the main theorem.

\begin{mydefinition}\label{def:CCBconst}
% Given a preference matrix satisfying Assumption {\bf A}, 
We define the following three quantities:
% \zk{we are missing a definition of $\widehat{N}(T)$. Did you mean $\widehat{N}^\delta(T)$?} MZ: Yes, thanks!
% \begin{align*}
$A_\delta^{(1)} := C(\delta/4) + \widehat{N}^\delta(T_{\delta/2})$,
$A_\delta^{(2)} := \sum_{i > C} \frac{\sqrt{L_C+1}}{\Delta^*_i} \ln \frac{2K}{\delta}$ and 
$A^{(3)} := \sum_{i \leq C < j} \frac{1}{\left(\Delta_{ij}\right)^2} + 2\sum_{i > C} \frac{L_C+1}{\left(\Delta^*_i\right)^2}$.
% \end{align*}
\end{mydefinition}

%\masrour{Finally, we will state our main regret result for Algorithm \ref{alg:CCB}.}

\begin{theorem}\label{thm:KlogT}
Given a Copeland bandit problem satisfying Assumption {\bf A} and any $\delta > 0$ and $\alpha > 0.5$, with probability $1-\delta$, the regret accumulated by CCB is bounded by the following:

\vspace{-5mm}

\begin{align*}
A_\delta^{(1)} + A_\delta^{(2)} \sqrt{\ln T} + A^{(3)} \ln T \hspace{1mm} \leq \hspace{1mm} A_\delta^{(1)} + A_\delta^{(2)} \sqrt{\ln T} + \frac{2K(C+L_C+1)}{\Delta^2} \ln T.
\end{align*}
\end{theorem}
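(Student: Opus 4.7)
The plan is to partition the time horizon $[1,T]$ into a burn-in window $[1,T_{\delta/2}]$ and an asymptotic window $(T_{\delta/2},T]$. All regret incurred in the burn-in window will be absorbed into $A_\delta^{(1)}$, while the $A_\delta^{(2)}\sqrt{\ln T}$ and $A^{(3)}\ln T$ terms will arise from the asymptotic window. To set up the high-probability event I will invoke Proposition \ref{prop:counts} with parameter $\delta/4$ and Lemma \ref{lem:NumNonCopeland} with parameter $\delta/2$ (which already incorporates Lemma \ref{lem:SetsB} internally); a single union bound then caps the total failure probability at $\delta$.

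For the burn-in window I would further split at $C(\delta/4)$. Before $C(\delta/4)$ the proposition is vacuous, so I bound the regret trivially by the number of rounds, contributing the first summand of $A_\delta^{(1)}$. On $[C(\delta/4),T_{\delta/2}]$ the proposition yields $N_{ij}^{\delta/4}(T_{\delta/2})\leq \widehat{N}_{ij}^{\delta/4}(T_{\delta/2})$ for every pair; summing over $i\neq j$ and adding the trailing ``$+1$'' produces the second summand $\widehat{N}^\delta(T_{\delta/2})$ of $A_\delta^{(1)}$ (per-round regret is uniformly at most $2$, which I absorb into an overall constant).

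In the asymptotic window, every suboptimal round falls into one of two buckets: (i) the optimistic Copeland winner $a_c$ is itself suboptimal ($c>C$), or (ii) $a_c$ is a true winner ($c\leq C$) but the opponent $a_d$ is suboptimal ($d>C$). Bucket (ii) is bounded directly by Proposition \ref{prop:counts}: since $\Delta^*_i=0$ for $i\leq C$ we have $\widehat{N}_{ij}^{\delta/4}(T)\propto \ln T/\Delta_{ij}^{2}$, and summing over $i\leq C<j$ reconstructs the first piece of $A^{(3)}\ln T$. For bucket (i), Lemma \ref{lem:NumNonCopeland} supplies the count bound $\widehat{N}^i=2\widehat{N}^i_{\mathcal{B}}+2\sqrt{\widehat{N}^i_{\mathcal{B}}}\ln(2K/\delta)$ for each $i>C$. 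By Lemma \ref{lem:SetsB}, with high probability $\mathcal{B}^i_{T_{\delta/2}}$ consists of exactly $L_C+1$ arms $a_j$ satisfying $p_{ij}<0.5$, and for each such $j$ the identity $\Delta^*_{ij}=\max\{\Delta^*_i,\Delta_{ij}\}\geq \Delta^*_i$ yields $\widehat{N}^i_{\mathcal{B}}\leq 4\alpha(L_C+1)\ln T/(\Delta^*_i)^2$. Substituting back, the linear part of $\widehat{N}^i$ contributes the $2(L_C+1)/(\Delta^*_i)^2$ piece of $A^{(3)}$, while the square-root part, once summed over $i>C$, assembles into $A_\delta^{(2)}\sqrt{\ln T}$. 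The final inequality then follows from Definition \ref{def:deltas}: we have $\Delta\leq \Delta_{ij}$ for $i\leq C<j$ and $\Delta\leq \Delta^*_{\min}\leq \Delta^*_i$ for $i>C$, so counting pairs gives $A^{(3)}\leq 2K(C+L_C+1)/\Delta^2$.

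The main obstacle, and what separates this proof from the naive $\mathcal{O}(K^2\log T)$ argument one would get by summing Proposition \ref{prop:counts} over all pairs, is establishing the per-arm bound $\widehat{N}^i_{\mathcal{B}}\leq (L_C+1)\ln T/(\Delta^*_i)^2$ in place of the uniform $(L_C+1)\ln T/(\Delta^*_{\min})^2$ bound stated in the remark after Lemma \ref{lem:NumNonCopeland}. Tightening this requires invoking Lemma \ref{lem:SetsB} to certify that $\mathcal{B}^i_{T_{\delta/2}}$ contains only genuine beaters of $a_i$, so that the inequality $\Delta^*_{ij}\geq \Delta^*_i$ can be legitimately applied to every term in the sum defining $\widehat{N}^i_{\mathcal{B}}$; once that is done, everything else is a routine union bound and arithmetic manipulation.
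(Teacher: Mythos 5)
Your proposal is correct and follows essentially the same route as the paper: split at $T_{\delta/2}$, absorb the burn-in window into $A_\delta^{(1)}$ via Proposition \ref{prop:counts}, and handle the asymptotic window by combining Lemma \ref{lem:NumNonCopeland} (for rounds with $c>C$) with Proposition \ref{prop:counts} (for rounds with $c\leq C<d$), which is exactly how the paper's terse two-case proof, together with the remark following Lemma \ref{lem:NumNonCopeland}, assembles $A_\delta^{(2)}\sqrt{\ln T}+A^{(3)}\ln T$. Your write-up is in fact more explicit than the paper's about the two buckets in the asymptotic window and about why $\Delta^*_{ij}\geq\Delta^*_i$ for $a_j\in\mathcal{B}^i_{T_{\delta/2}}$, but these are details the paper delegates to its definitions and remarks rather than a genuinely different argument.
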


\vspace{-2mm}

For a general assessment of the above quantities, assuming that $L_C$ and $C$ are both $\mathcal{O}(1)$, %(i.e., the number of losses suffered by the Copeland winner and the number of Copeland winners are constant with respect to $K$), 
the above quantities in terms of $K$ become $A_\delta^{(1)} = \mathcal{O}(K^2)$, $A_\delta^{(2)} = \mathcal{O}(K\log(K))$, $A^{(3)}=\mathcal{O}(K)$. Hence, the above bound boils down to the expression in \eqref{eqn:ExpRegret}.
We now turn to the proof of the theorem.

\begin{proof}[Proof of Theorem \ref{thm:KlogT}]
Let us consider the two disjoint time-intervals $[1,T_{\delta/2}]$ and $(T_{\delta/2},T]$:
\begin{description}[leftmargin=*,topsep=0mm,parsep=0pt,itemsep=1pt,partopsep=0pt]
\item[{[1,$\mathbf{T}_{\delta/2}$]}:] In this case, applying Proposition \ref{prop:counts} to $T_\delta$, we get that the number of time-steps when a non-Copeland winner was compared against another arm is bounded by $A_\delta^{(1)}$. As the maximum regret such a comparison can incur is $1$, this deals with the first term in the above expression.

\item[{($\mathbf{T}_{\delta/2}$,T]:}] In this case, applying Lemma \ref{lem:NumNonCopeland}, we get the other two terms in the above regret bound. \qedhere
\end{description}
\end{proof}

\vspace{-3mm}

Now that we have the high probability regret bound given in Theorem \ref{thm:KlogT}, we can deduce the expected regret result claimed in \eqref{eqn:ExpRegret} for $\alpha > 1$, as a corollary by integrating $\delta$ over the interval $[0,1]$.
%
% \sw{Why not formulate this as a corollary?} \todo{MZ: It's too trivial to be a corollary: it's just integration. Also, I don't want to copy and paste whole chunks of the RUCB paper, which is what would happen if I were to write down a formal corollary and prove it, for fear that the reviewers would think that the analysis for CCB is too similar to the one for RUCB.}
%
% \masrour{Using the argument in \citep[Figure 5]{RUCB2014}, we can integrate out $\delta$ from the high probability regret bound in Theorem \ref{thm:KlogT} to get the expected regret bound promised in Equation \eqref{eqn:ExpRegret}. The methods for integrating the $C(\delta/4)$ term in $A^{(1)}_\delta$ and the $\ln \frac{2K}{\delta}$ term in $A^{(2)}_\delta$ are exactly the same as the ones in the proof of \citep[Theorem 5]{RUCB2014}, so we don't repeat them here.} \zk{Do we have them in the appendix? Also, I recommend not citing the RUCB paper here. As a rule of thumb, when reusing techniques from a paper, I find it much better to state so in the introduction and never repeat this in the proof. Saying things like this in the proof give the illusion (at least here its an illusion) that the paper is too incremental. This is especially misleading in this case where the techniques we are discussing were also a technical point in the RUCB paper and not the main idea there either.}

\vspace{-2mm}

\subsection{Scalable Copeland Bandits}
\label{sec:SCBtheory}

\vspace{-3mm}

We now turn to our regret result for SCB, which lowers the $K^2$ dependence in the additive constant of CCB's regret result to $K \log K$. We begin by defining the relevant quantities:

\begin{mydefinition}\label{def:SCBdefs}
Given a $K$-armed Copeland bandit problem and an arm $a_i$, we define the following:
\begin{enumerate}[leftmargin=*,topsep=0mm,parsep=0pt,itemsep=0pt,partopsep=0pt]
\item Recall that $\cop(a_i) := \textup{Cpld}(a_i)/(K-1)$ is called the normalized Copeland score. 
\item\label{def:ep-copeland} $a_i$ is an $\eps$-Copeland-winner if $1-\cop(a_i) \leq \left(1- \cop(a_1) \right) (1+\eps)$.
% \item $[K]$ is the set $\{1,\ldots,K\}$.
% \sw{This was already defined in Section 2.  If we want to repeat it here, it should be first, since the other definitions depend on it.}
\item $\Delta_i:=\max\{\cop(a_1)-\cop(a_{i}),1/(K-1)\}$ % In what follows we will see that the number of times an arm $i$ will appear in a queried pair will depend, among other elements, on $\Delta_i$. 
% The complexity of our problem is expressed using the following terms
and $H_i := \sum_{j \neq i} \frac{1}{\Delta_{ij}^2}$, with $H_{\infty} := \max_i H_i$.
\item $\Delta_i^\eps  = \max \left\{ \Delta_i, \eps(1-\cop(a_1)) \right\}$.
\end{enumerate}
\end{mydefinition}

We now state our main \masrour{scalability} result:

\begin{theorem} \label{thm:scalable-main_m}
Given a Copeland bandit problem satisfying Assumption {\bf A}, the expected regret of SCB (Algorithm~\ref{alg:copeland-scal-reg}) is bounded by
%
% \vspace{-5mm}
%
% \begin{align*}
${\cal O} \left(\frac{1}{K} \sum_{i=1}^K \frac{H_i  (1-\cop(a_{i})) }{ \Delta_i^2} \right) \log(T)$,
% \end{align*}
which in turn can be bounded by
% \begin{align*}
${\cal O} \left(  \frac{K (L_C+\log K)\log T}{ \Delta_{\min}^2} \right)$, 
% \end{align*}
% where $L_C$ is the number of losses of a Copeland winner and $\Delta_{\min} := \min_{i\neq j} \Delta_{ij}$ is the smallest gap (cf. Definition \ref{def:deltas}).
where $L_C$ and $\Delta_{\min}$ are as in Definition \ref{def:deltas}.

% \todo{MZ: Zohar, why isn't the denominator of the first expression $\Delta_i$ instead of $\Delta_i^2$? That's usually what you get with regret bounds for MAB. We need that to bring the $K^2$ down to $K$.} \zk{The denominator does not scale as $1/\Delta_i$ but as $1/\Delta_i^2$ because querying any arm has constant expected regret (unlike MAB). Still, the calculations do not give $K^2$ regret but rather $K$. I removed some of the calculations from the theorem since I don't think this is the place to put them. }

\end{theorem}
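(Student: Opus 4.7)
The plan is to combine three layers of analysis corresponding to the three layers of SCB: the inner KL-based $K$-armed bandit routine invoked via Algorithm~\ref{alg:kl-MAB}; the Copeland-to-MAB reduction carried out by Algorithm~\ref{alg:copeland-apx}, which simulates a reward draw for arm $a_i$ by a sequence of pair comparisons; and the outer squaring-trick shell of Algorithm~\ref{alg:copeland-scal-reg}. The first bound of the theorem is the natural product of the three layers; the second is a purely combinatorial consequence of the first.

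First I would view one execution of Algorithm~\ref{alg:copeland-apx} as a KL-based MAB in which arm $a_i$ has Bernoulli reward of mean $\cop(a_i)$, since $\textup{reward}(i)$ samples $j$ uniformly and returns $\mathbf{1}[p_{ij}>1/2]$. Instantiating the KL sample-complexity bound of \cite{cappe2013kullback} for Algorithm~\ref{alg:kl-MAB}, I would show that with probability at least $1-\delta$ every suboptimal arm $a_i$ is pulled at most $\mathcal{O}(\log T/\Delta_i^2)$ times before the routine returns a Copeland winner. Next I would bound the pair-comparison cost of a single $\textup{reward}(i)$ call: a standard sequential Hoeffding test shows that deciding $p_{ij}>1/2$ with confidence $1-\delta/K^2$ costs $\mathcal{O}(\log(K/\delta)/\Delta_{ij}^2)$ comparisons, and averaging over $j$ uniform in $[K]\setminus\{i\}$ produces expected cost $\mathcal{O}(\log(K/\delta)\,H_i/(K-1))$ per pull. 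Finally I would multiply by the per-comparison regret $2\cop(a_1)-\cop(a_i)-\cop(a_j)$, handle its dependence on the random $j$ via a conditional-expectation argument that uses $\cop(a_j)$ averaged against the weights $1/\Delta_{ij}^2$ to absorb into a $(1-\cop(a_i))$ factor, and sum over $i$ to recover the first bound.

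Second, to lift Algorithm~\ref{alg:copeland-apx}'s PAC bound to an anytime expected-regret bound for SCB, I would use the squaring trick of Algorithm~\ref{alg:copeland-scal-reg}. In phase $r$ with budget $T_r=2^{2^r}$ and failure probability $\log T_r/T_r$, Algorithm~\ref{alg:copeland-apx} returns the Copeland winner before $T_r$ queries with high probability; any remaining budget is spent playing the returned arm against itself at zero regret. The failure event contributes expected regret at most $T_r\cdot(\log T_r/T_r)=\log T_r$, and since the budgets grow doubly exponentially there are only $\mathcal{O}(\log\log T)$ phases and the cumulative regret is dominated by the final completed phase, giving an overall bound of the same form with $T_r$ replaced by $T$ up to constants.

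Finally, for the second bound, I would apply the trivial estimate $H_i\leq (K-1)/\Delta_{\min}^2$ to factor out $1/\Delta_{\min}^2$, reducing the task to proving $\sum_{i=1}^K (1-\cop(a_i))/\Delta_i^2 = \mathcal{O}(K^2(L_C+\log K))$. I would sort the arms by Copeland score, noting that the Copeland winners contribute an $L_C$ term (since $1-\cop(a_1)=L_C/(K-1)$), and for the remaining arms a pigeonhole/telescoping argument on the gaps $\Delta_i$ yields a harmonic-type sum giving the $\log K$ factor. The main obstacle will be the step in the first paragraph that turns the abstract per-pull regret $\Delta_i$ of the KL-MAB into the correct multi-comparison regret: the number of comparisons consumed by a single pull depends on the sampled $j$, and so is correlated with the per-comparison regret, so the naive product-of-expectations bound must be replaced by a careful conditional argument that simultaneously produces the $H_i/K$ cost factor and the $(1-\cop(a_i))$ regret factor.
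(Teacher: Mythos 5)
Your overall architecture matches the paper's: view \texttt{reward}$(i)$ as a Bernoulli arm with mean $\cop(a_i)$, bound the number of pulls via a KL-based MAB analysis, charge each pull an expected cost of ${\cal O}(H_i/(K-1))$ comparisons for resolving $p_{ij}$ versus $1/2$, and lift to an anytime bound with the squaring trick. However, there is a genuine gap in how you produce the factor $(1-\cop(a_i))$ in the first bound. In the paper this factor is \emph{not} a regret factor at all --- it is a variance factor in the sample complexity, coming from the KL confidence regions (Lemma \ref{lem:kl-mab}, via the estimate $d(p+x,p)\leq 2x^2/(1-p)$), so that arm $a_i$ is pulled only ${\cal O}\bigl((1-\cop(a_i))\log(\cdot)/\Delta_i^2\bigr)$ times; the conversion from sample complexity to regret is then trivial because each comparison incurs regret at most $2$. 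Your plan instead takes the weaker pull bound ${\cal O}(\log T/\Delta_i^2)$ and tries to recover $(1-\cop(a_i))$ from the per-comparison regret $2\cop(a_1)-\cop(a_i)-\cop(a_j)$. That decomposes as $(\cop(a_1)-\cop(a_i))+(\cop(a_1)-\cop(a_j))$, and the second summand, weighted by the comparison counts $\propto 1/\Delta_{ij}^2$ over the random opponent $j$, does not absorb into $(1-\cop(a_i))$ --- e.g.\ when $a_i$ is a near-Condorcet winner being verified against poor opponents, the opponent term dominates while $1-\cop(a_i)\approx 0$. You correctly identify this as ``the main obstacle,'' but it is not a technicality to be patched: it is the step the paper's KL-based Lemma \ref{lem:kl-mab} exists to avoid, and without it the first bound is not established.

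The passage from the first bound to the second is also broken. With the $1/K$ prefactor and $H_i\leq (K-1)/\Delta_{\min}^2$, you would need $\sum_i (1-\cop(a_i))/\Delta_i^2={\cal O}(K(L_C+\log K))$; your stated target ${\cal O}(K^2(L_C+\log K))$ leaves you a factor of $K$ short of the theorem. Worse, the correct target is false term-by-term when $L_C$ is large: by Lemma \ref{lem:copeland obs} there can be $\Theta(L_C)$ arms with $1-\cop(a_i)={\cal O}(L_C/(K-1))$ and $\Delta_i=1/(K-1)$, each contributing $\Theta(L_C K)$, for a total of $\Theta(L_C^2K)$. The paper does not derive the second bound from the first expression; it derives both from the query-complexity analysis of Algorithm \ref{alg:copeland-apx}, splitting arms into those within $2(1-\cop(a_1))$ of optimal (at most ${\cal O}(L_C)$ of them by Lemma \ref{lem:copeland obs}, each pulled at most $K$ times thanks to sampling opponents \emph{without replacement}) and the rest (handled by Lemma \ref{lem:rhs0}, $\sum 1/(1-\cop(a_i))={\cal O}(K\log K)$, which is essentially your dyadic/pigeonhole argument). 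The without-replacement cap is the ingredient your plan is missing for the near-optimal arms.
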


%  \log(T) \cdot \tilde{O} \left(\frac{ 1}{K} \cdot \sum_{i=1}^K \frac{H_i  (1-\cop(a_{i}))  }{ \left(\Delta_i\right)^2}   \right) \leq $$

%  $$\log(T) \cdot \tilde{O} \left( H_{\infty}  \cdot  K(1-\cop(a_{1}) \right) = \log(T) \cdot \tilde{O} \left( H_{\infty}  \cdot L_C \right) $$

Recall that SCB is based on Algorithm \ref{alg:copeland-apx}, an arm-identification algorithm that identifies a Copeland winner with high probability.  As a result, Theorem~\ref{thm:scalable-main_m} is an immediate corollary of Lemma~\ref{lem:apx_cop_main_m}, obtained by using the well known squaring trick.  As mentioned in Section \ref{sec:scb}, the squaring trick is a minor variation on the doubling trick that reduces the number of partitions from $\log T$ to $\log \log T$.

%Our solution is based on an arm identification algorithm, i.e.\ a sub-procedure that identifies w.h.p a Copeland winner. In particular, Theorem~\ref{thm:scalable-main_m} is an immediate corollary of Lemma~\ref{lem:apx_cop_main_m}, obtained by using the ``squaring trick'' \cite{AminKS11}, which is a minor variation on the doubling trick, modified to reduce the number of partitions to $\log \log T$, rather then $\log T$. 

Lemma~\ref{lem:apx_cop_main_m} is a result for finding an $\eps$-approximate Copeland winner (see Definition \ref{def:SCBdefs}.\ref{def:ep-copeland}). Note that, for the regret setting, we are only interested in the special case with $\eps=0$, i.e., the problem of identifying the best arm.

\begin{lemma}\label{lem:apx_cop_main_m}
With probability $1-\delta$, Algorithm~\ref{alg:copeland-apx} finds an $\eps$-approximate Copeland winner by time

\vspace{-6mm}

\[ {\cal O}\left(\frac{1}{K}\sum_{i=1}^K\frac{H_i(1-\cop(a_{i}))}{\left(\Delta_i^\eps\right)^2}\right)\log(1/\delta)\leq% $$
{\cal O}\left(H_{\infty}\left(\log(K)+\min\left\{\eps^{-2},L_C\right\}\right)\right)\log(1/\delta). \]

\vspace{-4mm}

assuming\footnote{The exact expression requires replacing $\log(1/\delta)$ with $\log(KH_{\infty}/\delta)$.} $\delta =  (K H_{\infty})^{\Omega(1)}$.
In particular when there is a Condorcet winner ($\cop(a_{1})=1, L_C=0$) or more generally $\cop(a_{1}) = 1-\mathcal{O}(1/K), L_C=\mathcal{O}(1)$, an exact solution is found with probability at least $1-\delta$ by using an expected number of queries of at most
${\cal O} \left(  H_{\infty} (L_C+\log K) \right) \log(1/\delta)$.
\end{lemma}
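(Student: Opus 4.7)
The plan is to view Algorithm~\ref{alg:copeland-apx} as a reduction from the dueling bandit problem to a $K$-armed stochastic bandit whose $i$-th arm has a Bernoulli reward with mean equal to $\cop(a_i)$, up to an $\mathcal{O}(\delta/K^2)$ bias introduced by the adaptive test that determines the sign of $p_{ij}-1/2$ in Line~1 of the algorithm. An $\eps$-Copeland winner of the original problem is then exactly an $\eps$-best arm of this surrogate problem in the sense of Definition~\ref{def:SCBdefs}.\ref{def:ep-copeland}. The argument splits cleanly into two essentially independent halves: (i) bounding the number of calls that Algorithm~\ref{alg:kl-MAB} makes to the surrogate reward, and (ii) bounding the expected number of pairwise dueling queries that each such call consumes.

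For part (i), I would invoke the standard KL-divergence-based sample complexity guarantee for elimination-style algorithms on Bernoulli arms. The key quantitative point is that for two Bernoulli distributions with means $\mu_i < \mu_1$ both close to $1$, the divergence $\mathrm{kl}(\mu_i,\mu_1)$ behaves like $(\mu_1-\mu_i)^2/(1-\mu_i)$, so its inverse gives a per-arm sample count of order $(1-\cop(a_i))\log(1/\delta)/(\Delta_i^\eps)^2$ rather than the weaker Hoeffding-style $\log(1/\delta)/(\Delta_i^\eps)^2$. This variance-sensitive factor $(1-\cop(a_i))$ is exactly what will allow the final sum to telescope into $\log K$ rather than a polynomial in $K$ in the near-Condorcet regime.

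For part (ii), the Chernoff-type sequential test that decides $p_{ij}\gtrless 1/2$ with failure probability $\delta/K^2$ takes an expected $\mathcal{O}(\log(K^2/\delta)/\Delta_{ij}^2)$ pairwise comparisons. Averaging over the uniformly random choice of opponent $a_j$, the expected cost of one call to $\text{reward}(i)$ is $\mathcal{O}(H_i\log(K^2/\delta)/K)$. Multiplying per-arm pull counts by per-arm costs and summing over $i$ yields the first bound $\mathcal{O}\bigl(\tfrac{1}{K}\sum_i H_i(1-\cop(a_i))/(\Delta_i^\eps)^2\bigr)\log(1/\delta)$, with the extra logarithmic overhead absorbed into the stated assumption $\delta=(KH_\infty)^{\Omega(1)}$.

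For the second inequality I would factor $H_\infty$ out of the sum and reduce the problem to showing $\tfrac{1}{K}\sum_i (1-\cop(a_i))/(\Delta_i^\eps)^2 = \mathcal{O}(\log K + \min\{\eps^{-2},L_C\})$. Using the identities $1-\cop(a_i) = |\mathcal{L}_i|/(K-1)$ and $\Delta_i \geq (|\mathcal{L}_i|-L_C)/(K-1)$, the Copeland-winner terms contribute at most $\mathcal{O}(L_C)$ because $\Delta_i^\eps$ is floored at $1/(K-1)$, while the remaining terms form a sum of the shape $\sum_{|\mathcal{L}_i|>L_C}|\mathcal{L}_i|/(|\mathcal{L}_i|-L_C)^2$, a harmonic-type series of length at most $K$ that telescopes into $\mathcal{O}(\log K + L_C)$. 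The alternative $\eps^{-2}$ bound falls out of $\Delta_i^\eps \geq \eps(1-\cop(a_1))$ combined with the combinatorial identity $\sum_i(1-\cop(a_i)) = K/2$. The most delicate step of the whole argument is the variance-sensitive KL inversion in part~(i): extracting the $(1-\cop(a_i))$ factor cleanly, rather than the loose $1$ that a Hoeffding analysis would yield, is what makes the bound tight when $\cop(a_1)$ is close to $1$, and it requires carefully controlling $\mathrm{kl}(\cdot,\cdot)$ near the boundary of $[0,1]$.
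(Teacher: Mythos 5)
Your parts (i) and (ii) --- the reduction to a $K$-armed Bernoulli bandit whose $i$-th mean is $\cop(a_i)$, the variance-sensitive KL inversion yielding the per-arm pull count $(1-\cop(a_i))/(\Delta_i^\eps)^2$, and the expected cost $\mathcal{O}(H_i\log(K/\delta)/K)$ per call to $\text{reward}(i)$ --- are exactly the paper's argument (Lemma~\ref{lem:kl-mab} together with the closing paragraph of Appendix~\ref{sec:SCBAnalysis}), and they correctly deliver the first bound. The gap is in your derivation of the second inequality, and the missing ingredient is the counting lemma (Lemma~\ref{lem:copeland obs}: at most $2d+1$ arms are beaten by at most $d$ arms), which you never invoke but which the paper uses twice and which is indispensable here. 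Concretely: (1) your $\eps^{-2}$ branch applies $\Delta_i^\eps \geq \eps(1-\cop(a_1))$ to \emph{all} arms and then uses $\sum_i(1-\cop(a_i)) = K/2$; this yields $\mathcal{O}\bigl(H_\infty\,\eps^{-2}(1-\cop(a_1))^{-2}\bigr)$, which overshoots the claim by a factor of $(1-\cop(a_1))^{-2} = \Theta(K^2/L_C^2)$ and degenerates entirely when a Condorcet winner exists. The paper instead splits the arms into $A = \{i : 1-\cop(a_i) \leq 2(1-\cop(a_1))\}$ and its complement: Lemma~\ref{lem:copeland obs} gives $|A| = \mathcal{O}\bigl((1-\cop(a_1))K\bigr)$, which cancels exactly one factor of $(1-\cop(a_1))$ and leaves $\mathcal{O}(\eps^{-2})$ after dividing by $K$, while on the complement $1-\cop(a_i)\le 2\Delta_i$ reduces the sum to $\sum_i 1/(1-\cop(a_i)) = \mathcal{O}(K\log K)$ (Lemma~\ref{lem:rhs}, itself proved by dyadic bucketing via Lemma~\ref{lem:copeland obs}). (2) Your harmonic sum $\sum_{|\mathcal{L}_i|>L_C}|\mathcal{L}_i|/(|\mathcal{L}_i|-L_C)^2$ needs Lemma~\ref{lem:copeland obs} even to behave like a harmonic series at all --- a priori many arms could share the same loss count --- and even granting it, writing $m_i = |\mathcal{L}_i|-L_C$ the sum splits as $\sum_i 1/m_i + L_C\sum_i 1/m_i^2$; the first piece is indeed $\mathcal{O}(L_C+\log K)$, but the second is $\Theta(L_C^2)$ whenever $\Theta(L_C)$ arms have $m_i=1$ (which the counting lemma permits, e.g.\ a near-regular tournament among $2L_C+3$ arms that dominate all others). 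So your claimed $\mathcal{O}(\log K + L_C)$ does not follow from the decomposition you propose; to recover the stated bound you need to route the near-optimal arms through the $|A|$-cardinality bound as the paper does, rather than through the gap $|\mathcal{L}_i|-L_C$.
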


% \vspace{-3mm}

In the remainder of this section, we sketch the main ideas underlying the proof of Lemma \ref{lem:apx_cop_main_m}, detailed in  Appendix \ref{sec:SCBAnalysis}. We first  treat the simpler deterministic setting in which a single query suffices to determine which of a pair of arms beats the other. While a solution can easily be obtained using $K(K-1)/2$ many queries, we aim for one with query complexity linear in $K$. The main ingredients of the proof are as follows:

\begin{enumerate}[leftmargin=*,topsep=0mm,parsep=0pt,itemsep=0pt,partopsep=0pt]
\item $\cop(a_i)$ is the mean of a Bernoulli random variable defined as such: sample uniformly at random an index $j$ from the set $\{1,\ldots,K\} \setminus \{i\}$ and return $1$ if $a_i$ beats $a_j$ and $0$ otherwise.
\item Applying a KL-divergence based arm-elimination algorithm (Algorithm \ref{alg:kl-MAB}) to the $K$-armed bandit arising from the above observation, we obtain a bound by dividing the arms into two groups: those with Copeland scores close to that of the Copeland winners, and the rest. For the former, we use the result from Lemma \ref{lem:copeland obs} to bound the number of such arms; for the latter,  the resulting regret is dealt with using Lemma \ref{lem:rhs0}, which exploits the possible distribution of Copeland scores.

\end{enumerate}

Let us state the two key lemmas here:

\begin{lemma} \label{lem:copeland obs}
Let $D \subset \{a_1,\ldots,a_K\}$ be the set of arms for which $\cop(a_{i}) \geq 1-d/(K-1)$, that is arms that are beaten by at most $d$ arms. Then $|D| \leq 2d+1$. 
\end{lemma}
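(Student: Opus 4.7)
The plan is to reduce the statement to a simple counting argument about the tournament restricted to $D$. By Assumption~\textbf{A}, for every pair $(a_i,a_j)$ with $i\neq j$ exactly one of the two arms beats the other, so the number of arms that beat $a_i$ equals $(K-1) - \textup{Cpld}(a_i) = (K-1)(1-\cop(a_i))$. Thus the condition defining $D$, namely $\cop(a_i) \geq 1 - d/(K-1)$, is exactly the statement that each $a_i \in D$ is beaten by at most $d$ arms overall, and in particular by at most $d$ arms from within $D$.

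First I would count, in two ways, the number of ``loss edges'' inside the sub-tournament induced by $D$. On the one hand, since ties are excluded, every unordered pair in $D$ contributes exactly one such loss, giving a total of $\binom{|D|}{2} = |D|(|D|-1)/2$. On the other hand, summing over $a_i \in D$ the number of arms in $D$ that beat $a_i$, we get at most $|D| \cdot d$, because each arm in $D$ is beaten by at most $d$ arms in total (hence at most $d$ inside $D$).

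Combining the two expressions yields
\[
\frac{|D|(|D|-1)}{2} \;\leq\; |D|\cdot d,
\]
and dividing by $|D|$ (trivial if $|D|=0$) gives $|D|-1 \leq 2d$, i.e., $|D| \leq 2d+1$, as claimed.

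There is essentially no obstacle here beyond recognizing that Assumption~\textbf{A} turns the preference matrix into a tournament, so that losses inside $D$ can be double-counted. The only small point worth stating carefully is that the bound $d$ on the number of arms beating a given $a_i \in D$ holds over \emph{all} of $\{a_1,\ldots,a_K\}$, and hence automatically over $D$; this is what makes the argument tight and gives the additive ``$+1$''.
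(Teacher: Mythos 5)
Your proof is correct and is essentially the same as the paper's: both double-count the edges of the tournament restricted to $D$, bounding the in-degree of each node by $d$ to get $|D|(|D|-1)/2 \leq |D|d$ and hence $|D| \leq 2d+1$. Your added remark that the bound $d$ holds over all arms and hence within $D$ is a fair (if implicit in the paper) clarification, but the route is identical.
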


\vspace{-5mm}

\begin{proof}
Consider a fully connected directed graph, whose node set is $D$ and the arc $(a_i,a_j)$ is in the graph if arm $a_{i}$ beats arm $a_{j}$. By the definition of $\cop$, the in-degree of any node $i$ is upper bounded by $d$. Therefore, the total number of arcs in the graph is at most $|D|d$. Now, the full connectivity of the graph implies that the total number of arcs in the graph is exactly $|D|(|D|-1)/2$. Thus, $|D|(|D|-1)/2 \leq |D|d$ and the claim follows. \qedhere

% According to the definition of $D$, the in-degree of any node $i$ is upper bounded by $d$, since for any arc directed to $i$ there is an arm beating $i$, and there are at most $d$ of these. It follows that the total number of arcs in the graph is at most $|D|d$. Since the graph described is a full tournament (i.e, every pair has a single arc), the total number of arcs in the graph is exactly $|D|(|D|-1)/2$. It follows that $|D|(|D|-1)/2 \leq |D|d$ and the claim follows.
\end{proof}

\begin{lemma} \label{lem:rhs0}
%Given any number $c \in [0,1]$, 
The sum $\sum_{\{i | \cop(a_i) < 1\}} \frac{1}{1-\cop(a_i)}$ is in $\mathcal{O}(K\log K)$.
\end{lemma}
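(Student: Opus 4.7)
The plan is to reindex the sum in terms of ``loss counts'' and then exploit Lemma \ref{lem:copeland obs} to control how many arms can have few losses, which turns the sum into a harmonic-type series.

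First I would let $\ell_i := (K-1) - \textup{Cpld}(a_i)$ denote the number of arms that beat $a_i$; then $1-\cop(a_i) = \ell_i/(K-1)$, so the sum of interest equals
\[ \sum_{\{i \,:\, \cop(a_i) < 1\}} \frac{1}{1-\cop(a_i)} \;=\; (K-1) \sum_{\{i \,:\, \ell_i \geq 1\}} \frac{1}{\ell_i}. \]
It therefore suffices to show that $\sum_{\{i \,:\, \ell_i \geq 1\}} 1/\ell_i = \mathcal{O}(\log K)$.

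The next step is to invoke Lemma \ref{lem:copeland obs}, but with the roles of ``winning'' and ``losing'' reversed: replacing the tournament by its reverse (flip every arc), the lemma states that the number of arms with $\ell_i \leq d$ is at most $2d+1$. Order the arms so that $\ell_{(1)} \leq \ell_{(2)} \leq \cdots \leq \ell_{(K)}$. If $\ell_{(k)} = d$, then at least $k$ arms have loss count at most $d$, so $k \leq 2d+1$, whence $\ell_{(k)} \geq \lceil (k-1)/2 \rceil$. In particular $\ell_{(k)} \geq 1$ for all $k \geq 2$ (there is at most one arm with zero losses, namely a Condorcet winner if it exists), and so
\[ \sum_{\{i \,:\, \ell_i \geq 1\}} \frac{1}{\ell_i} \;\leq\; \sum_{k=2}^{K} \frac{1}{\lceil (k-1)/2 \rceil} \;=\; \sum_{j=1}^{K-1} \frac{1}{\lceil j/2 \rceil} \;\leq\; 2\sum_{m=1}^{\lceil (K-1)/2 \rceil} \frac{1}{m} \;=\; \mathcal{O}(\log K). \]
Multiplying by the $(K-1)$ factor pulled out at the start yields the claimed $\mathcal{O}(K\log K)$ bound.

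The only subtlety that requires any care is verifying that Lemma \ref{lem:copeland obs} indeed applies in the reversed direction, i.e.\ that bounding in-degrees versus out-degrees is symmetric in its proof; but since the lemma's argument only uses that the tournament on the relevant subset of arms is complete and that in-degrees are bounded, the same counting works with out-degrees. Apart from this, the argument is purely combinatorial and no probabilistic machinery is needed.
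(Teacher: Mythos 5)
Your proof is correct in substance and takes a genuinely different route from the paper's. The paper's argument (Lemma \ref{lem:rhs} in Appendix \ref{sec:SCBAnalysis}) partitions the arms dyadically by the value of $1-\cop(a_i)$, i.e.\ into buckets $A_\tau=\{i:\,2^{\tau}\le 1-\cop(a_i)<2^{\tau+1}\}$, bounds $|A_\tau|\le 2^{\tau+2}(K-1)+1$ via Lemma \ref{lem:copeland obs}, and sums over the $\mathcal{O}(\log K)$ nonempty buckets. You instead reindex by the integer loss counts $\ell_i$ and run an order-statistics argument: since at most $2d+1$ arms can have $\ell_i\le d$, the $k$-th smallest loss count is at least $\lceil (k-1)/2\rceil$, so $\sum_i 1/\ell_i$ is a harmonic-type sum of order $\log K$, and restoring the factor $K-1$ gives $\mathcal{O}(K\log K)$. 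Both arguments hinge on exactly the same counting lemma; yours is slightly more elementary and yields a marginally better constant (roughly $2K\ln K$ versus the paper's $5(K-1)(\lceil\log_2(K-1)\rceil+1)$). Two small remarks. First, the ``reversal'' you worry about at the end is unnecessary: Lemma \ref{lem:copeland obs} is already stated in terms of the number of arms that \emph{beat} $a_i$ (the in-degree of the ``beats'' graph), which is precisely your $\ell_i$, so it applies verbatim with no flipping of arcs. Second, your displayed inequality $\sum_{\{i:\ell_i\ge 1\}}1/\ell_i\le\sum_{k=2}^{K}1/\lceil(k-1)/2\rceil$ silently drops the arm with the smallest loss count in the case where no Condorcet winner exists (for a $3$-cycle the left side is $3$ while the right side is $2$); including the $k=1$ term, which contributes at most $1$ since $\ell_{(1)}\ge 1$ in that case, repairs the bound and leaves the $\mathcal{O}(K\log K)$ conclusion untouched.
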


\vspace{-4mm}

\begin{proof}
Follows from Lemma \ref{lem:copeland obs} via a careful partitioning of arms. Details are in Appendix \ref{sec:SCBAnalysis}. 
\end{proof}
%
%\todo{MZ: Zohar, at this point I'm debating whether the full statement of Lemma \ref{lem:kl-mab} should be added here or not. The `for' argument is that it then allows Equation \ref{eq:query-det} to be written down explicitly, which would make it easier to point at a specific formula, whereas the against argument is both that the statement itself doesn't add much to the story (it's only Equation \ref{eq:query-det} that I care about) and that a nasty reviewer may look at that and think that we are claiming that lemma as one of our main contributions (especially given how much space it takes). It's very likely that a similar lemma is buried on page 35 of one of the papers written by this French mafia, and then they could use that to beat us over the head because we didn't carefully read through all 5 million bandits papers written by these guys in Lille and Paris. This is kind of a stupid point, but I think it's worth thinking about.}
%\zk{I don't think we should put Lemma \ref{lem:kl-mab} here. The statements we have are accurate enough. Anyone interested in further details will read the appendix in any case. }

Given the structure of Algorithm \ref{alg:copeland-apx}, the stochastic case is similar to the deterministic case for the following reason: while the latter requires a single comparison between arms $a_i$ and $a_j$ to determine which arm beats the other, in the stochastic case, we need roughly $\frac{\log(K\log(\Delta_{ij}^{-1})/\delta)}{\Delta_{ij}^2}$ comparisons between the two arms to correctly answer the same question with probability at least $1-\delta/K^2$.

\vspace{-2mm}

\section{Conclusion}
\label{sec:conclusion}

\vspace{-4mm}

In many applications that involve learning from human behavior, feedback is more reliable when provided in the form of pairwise preferences. In the dueling bandit problem, the goal is to use such pairwise feedback to find the most desirable choice from a set of options. Most existing work in this area assumes the existence of a Condorcet winner, i.e., an arm that beats all other arms with probability greater than $0.5$. Even though these results have the advantage that the bounds they provide scale linearly in the number of arms, their main drawback is that in practice the Condorcet assumption is too restrictive. % in theory or in practice.
By contrast, other results that do not impose the Condorcet assumption achieve bounds that scale quadratically in the number of arms.

In this paper, we set out to solve a natural generalization of the problem, where instead of assuming the existence of a Condorcet winner, we seek to find a Copeland winner, which is guaranteed to exist. We proposed two algorithms to address this problem: one for small numbers of arms, called CCB; and a more scalable one, called SCB, that works better for problems with large numbers of arms. We provided theoretical results bounding the regret accumulated by each algorithm: these results improve substantially over existing results in the literature, by filling the gap that exists in the current results, namely the discrepancy between results that make the Condorcet assumption and are of the form $\mathcal{O}(K\log T)$ and the more general results that are of the form $\mathcal{O}(K^2\log T)$. 

Moreover, we have included empirical results on both a dueling bandit problem arising from a real-life application domain and a large-scale synthetic problem used to test the scalability of SCB. The results of these experiments show that CCB beats all existing Copeland dueling bandit algorithms, while SCB outperforms CCB on the large-scale problem.

One open question raised by our work is how to devise an algorithm that has the benefits of both CCB and SCB, i.e., the scalability of the latter together with the former's better dependence on the gaps. At this point, it is not clear to us how this could be achieved.

Another interesting direction for future work is an extension of both CCB and SCB to problems with a continuous set of arms. Given the prevalence of cyclical preference relationships in practice, we hypothesize that the non-existence of a Condorcet winner is an even greater issue when dealing with an infinite number of arms. Given that both our algorithms utilize confidence bounds to make their choices, we anticipate that continuous-armed UCB-style algorithms like those proposed in \cite{kleinberg:2008,Bubeck:2011,Srinivas:2010,Munos:2011,Bull:2011,deFreitas:2012,Valko:2013} can be combined with our ideas to produce a solution to the continuous-armed Copeland bandit problem that does not rely on the convexity assumptions made by algorithms such as the one proposed in \cite{Yue:2009}.

Finally, it is also interesting to expand our results to handle scores other than the Copeland score, such as an $\eps$-insensitive variant of the Copeland score (as in \cite{Busa-Fekete:2014}), or completely different notions of winners, such as the Borda, the Random Walk or the von Neumann winners (see, e.g., \cite{altman2008axiomatic,CDB:2015}).

% A key challenge when using the first two notions is to achieve a well-motivated notion of ranking between the arms as well as a notion of regret.  MZ: I'm removing this because Borda and Random Walk come with a ranking of the arms and it's not relevant in the caese of the von Neumann winner.

% % \section*{Acknowledgments} 

% % \newpage

\bibliographystyle{unsrt}
\setlength{\bibsep}{1.2pt}
\bibliography{copeland}

% % \newpage

% !TEX root = copeland.tex

\appendix
\section*{}
{\Huge \underline{Appendix}}

%\onecolumn
%
%%\onecolumn{
%{\hrule height 0.01in}
%{\center {\Large {\bf \quad \quad \quad \quad \quad \quad \quad \quad \quad Appendix: Copeland Dueling Bandits}}}
%%{\Large {\bf Appendix: Copeland Dueling Bandits}}
%\vspace{0.18in}
%{\hrule height 0.01in}
%%}
%
%\begin{multicols*}{2}
%

% !TEX root = copeland.tex

\section{Experimental Results}
\label{sec:experiments}

To evaluate our methods CCB and SCB, we apply them to three Copeland dueling bandit problems. The first is a $5$-armed problem arising from \emph{ranker evaluation} in the field of \emph{information retrieval} (IR)~\citep{mann:intr08}.  The second is a $500$-armed synthetic example created to test the scalability of SCB.  The third is an example with a Condorcet winner which shows how CCB compares against RUCB when the condition required by RUCB is satisfied.

All three experiments follow the experimental approach in \citep{YueJoachims:2011,RUCB2014} and use the given preference matrix to simulate comparisons between each pair of arms $(a_i,a_j)$ by drawing samples from Bernoulli random variables with mean $p_{ij}$. We compare our two proposed algorithms against the state of the art $K$-armed dueling bandit algorithm, RUCB \cite{RUCB2014}, and Copeland SAVAGE, PBR and RankEl. We include RUCB in order to verify our claim that $K$-armed dueling bandit algorithms that assume the existence of a Condorcet winner have linear regret if applied to a Copeland dueling bandit problem without a Condorcet winner. Note that in all our plots, the horizontal time axes use a log scale, while the vertical axes, which measure cumulative regret, use a linear scale. 

The first experiment uses a $5$-armed problem arising from \emph{ranker evaluation} in the field of \emph{information retrieval} (IR)~\citep{mann:intr08}, detailed in Appendix \ref{sec:exp-det}. Figure \ref{fig:SmallExp} shows the regret accumulated by CCB, SCB, the Copeland variants of SAVAGE, PBR and RankEl, as well as RUCB on this problem. CCB outperforms all other algorithms in this $5$-armed experiment. 

\begin{figure}[!t]

\centering
\includegraphics[width=.78\columnwidth,clip=true,trim=0mm 2mm 0mm 0mm]{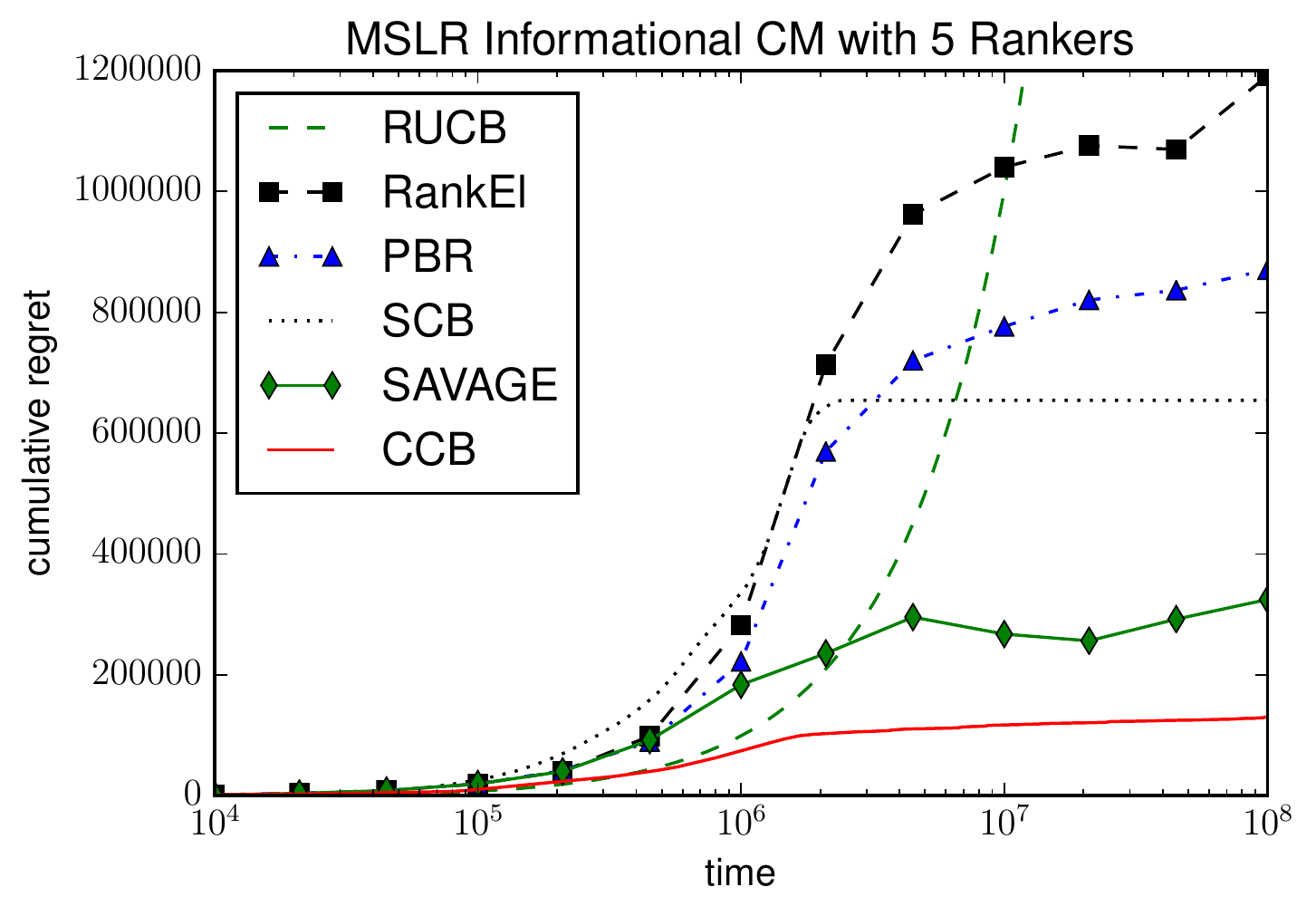}

\vspace{-3mm}

\caption{Small-scale regret results for a 5-armed Copeland dueling bandit problem arising from ranker evaluation.}

\vspace{5mm}

\label{fig:SmallExp}
\end{figure}

Note that three of the baseline algorithms under consideration here (i.e., SAVAGE, PBR and RankEl) require the horizon of the experiment as an input. Therefore, we ran independent experiments with varying horizons and recorded the accumulated regret: the markers on the curves corresponding to these algorithms represent these numbers. Consequently,  the regret curves are not monotonically increasing. For instance, SAVAGE's cumulative regret at time $2\times10^7$ is lower than at time $10^7$ because the runs that produced the former number were not continuations of those that resulted in the latter, but rather completely independent. Furthermore, RUCB's cumulative regret grows linearly, which is why the plot does not contain the entire curve.

\begin{figure}[!b]

\vspace{5mm}

\centering
\includegraphics[width=.65\columnwidth,clip=true,trim=0mm 2mm 0mm 2mm]{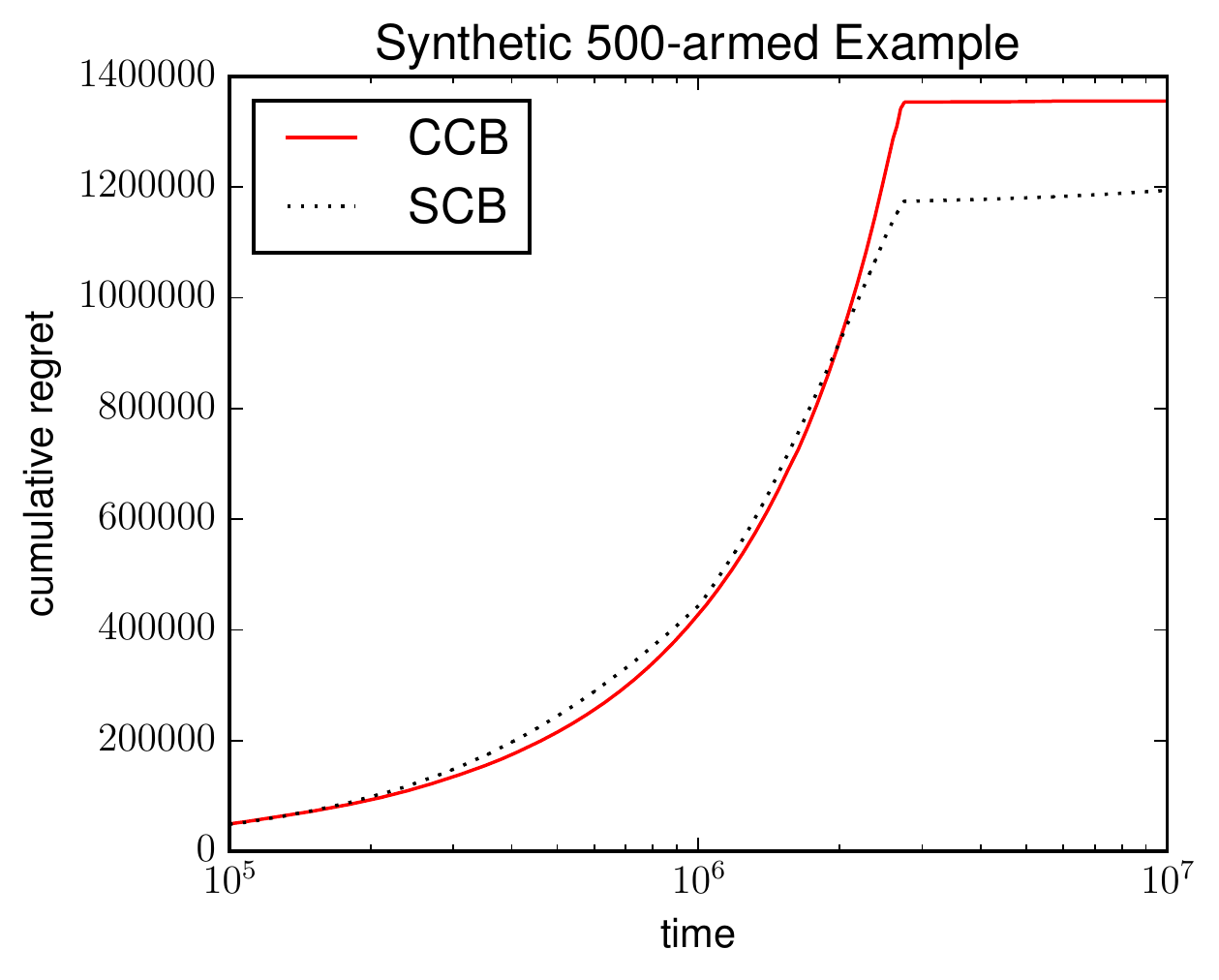}
% \vspace*{-1\baselineskip}

\vspace{-3mm}

\caption{Large-scale regret results for a synthetic 500-armed Copeland dueling bandit problem.}

% \vspace{-3mm}

\label{fig:LargeExp}
\end{figure}

The second experiment uses a $500$-armed synthetic example created to test the scalability of SCB.
In particular, we fix a preference matrix in which the three Copeland winners are in a cycle, each with a Copeland score of 498, and the other arms have Copeland scores ranging from 0 to 496.

Figure \ref{fig:LargeExp}, which depicts the results of this experiment, shows that when there are many arms, SCB can substantially outperform CCB. We omit SAVAGE, PBR and RankEl from this experiment because they scale poorly in the number of arms \cite{Urvoy:2013,Busa-Fekete:2013,Busa-Fekete:2014}. %\mdr{@MZ: Why do we see the two curves for SCB and CCB hit a ceiling ``fairly hard'' and change into an almost horizontal line?}

The reason for the sharp transition in the regret curves of CCB and SCB in the synthetic experiment is as follows. Because there are many arms, as long as one of the two arms being compared is not a Copeland winner, the comparison can result in substantial regret; since both algorithms choose the second arm in each round based on some criterion other than the Copeland score, even if the first chosen arm in a given time-step is a Copeland winner, the incurred regret may be as high as $0.5$. The sudden transition in Figure \ref{fig:LargeExp} occurs when the algorithm becomes confident enough of its choice for the first arm to begin comparing it against itself, at which point it stops accumulating regret.

% Finally, for an example with a Condorcet winner and a look at how CCB compares against RUCB on such examples, the reader is referred to Appendix~\ref{sec:exp-det}. In fact, these results show that CCB outperforms RUCB in this setting as well, even though it is the situation that RUCB was designed for.

The third experiment is an example with a Condorcet winner designed to show how CCB compares against RUCB when the condition required by RUCB is satisfied. The regret plots for SAVAGE and SCB were excluded here since they both perform substantially worse than either RUCB or CCB, as expected. This example was extracted in the same fashion as the example used in the ranker evaluation experiment detailed in Appendix \ref{sec:exp-det}, with the sole difference that this time we ensured that one of the rankers is a Condorcet winner. The  results, depicted in Figure \ref{fig:Exp2}, show that CCB enjoys a slight advantage over RUCB in this case. We attribute this to the careful process of identifying and utilizing the weaknesses of non-Copeland winners, as carried out by lines 12 and 18 of Algorithm~\ref{alg:CCB}.

\begin{figure}[!h]

\centering
\includegraphics[width=.65\columnwidth,clip=true,trim=0mm 2mm 0mm 0mm]{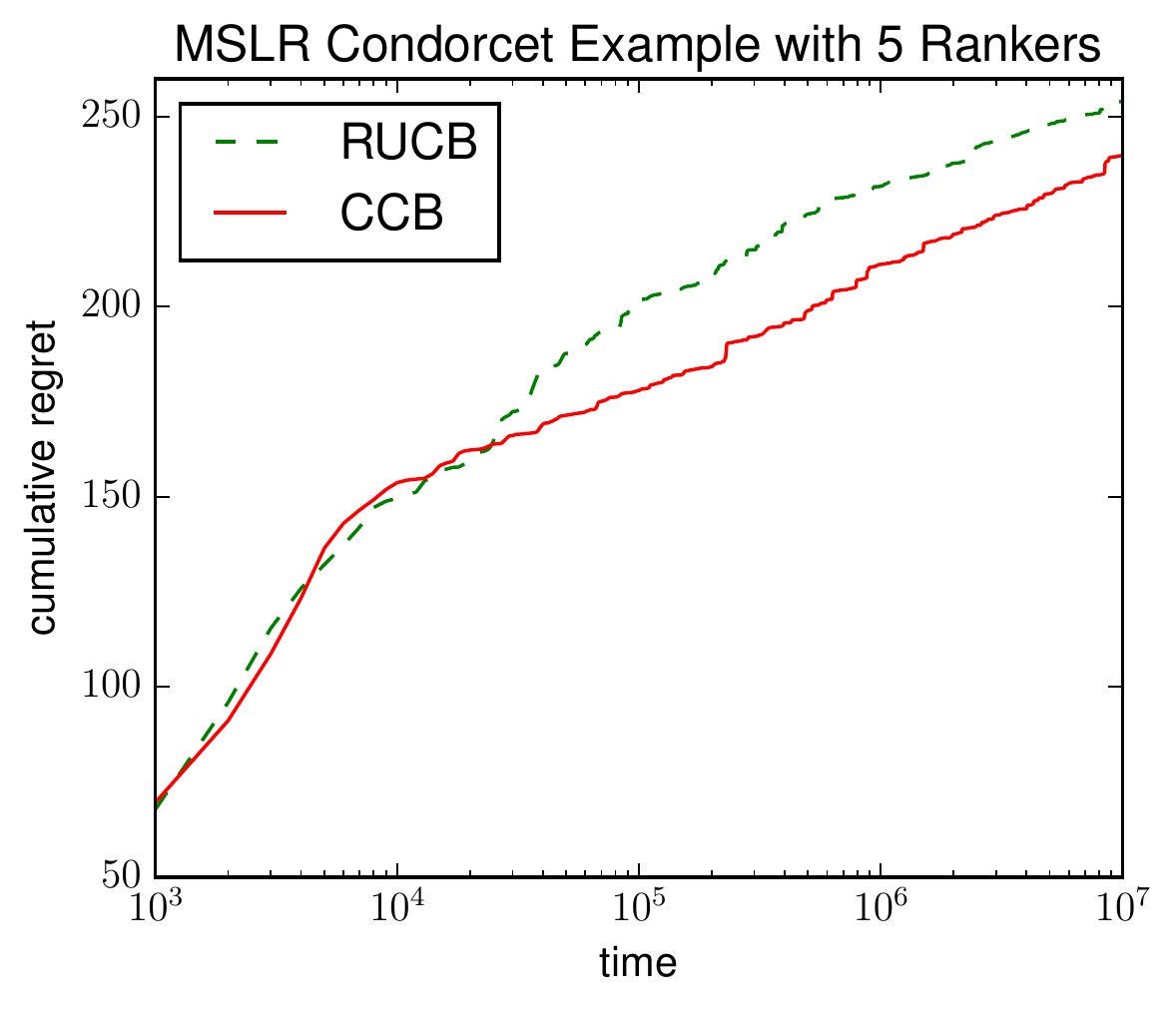}
\vspace*{-1\baselineskip}

\caption{Regret results for a Condorcet example.}

\vspace{5mm}

\label{fig:Exp2}
\end{figure}

\section{Ranker Evaluation Details}
\label{sec:exp-det}

%\begin{description}[leftmargin=*,topsep=0mm,parsep=0pt,itemsep=1pt,partopsep=0pt]
%\item[Ranker evaluation experiment:] 
A ranker is a function that takes as input a user's search query and ranks the documents in a collection according to their relevance to that query.  Ranker evaluation aims to determine which among a set of rankers performs best.  One effective way to achieve this is to use \emph{interleaved comparisons}~\citep{radlinski2008:how}, which interleave the ranked lists of documents proposed by two rankers and present the resulting list to the user, whose subsequent click feedback is used to infer a noisy preference for one of the rankers.  Given a set of $K$ rankers, the problem of finding the best ranker can then be modeled as a $K$-armed dueling bandit problem, with each arm corresponding to a ranker. 

We use interleaved comparisons to estimate the preference matrix for the full set of rankers included with the MSLR dataset\footnote{\url{http://research.microsoft.com/en-us/projects/mslr/default.aspx}},
from which we select $5$ rankers such that a Condorcet winner does not exist.
%~\cite{url:mslr-web30k} 
% and estimate the corresponding preference matrix interleaved comparisons. \sw{This is a bit confusing.  Wouldn't you have to learn the preference matrix first before you could ascertain whether there was a Condorcet winner?} 
%The interested reader can find a more detailed and technical description of the experimental setup in Section \ref{sec:exp-det}.
The MSLR dataset consists of relevance judgments provided by expert annotators assessing the relevance of a given document to a given query. Using this data set, we create a set of 136 rankers, each corresponding to a ranking feature provided in the data set, e.g., PageRank.  The ranker evaluation task in this context corresponds to determining which single feature constitutes the best ranker~\citep{hofmann:irj13}.

To compare a pair of rankers, we use \emph{probabilistic interleave} (PI)~\citep{hofmann11:probabilistic}, a recently developed method for interleaved comparisons. To model the user's click behavior on the resulting interleaved lists, we employ a probabilistic user model~\citep{hofmann11:probabilistic,craswell08:experimental} that uses as input the manual labels (classifying documents as relevant or not for given queries) provided with the MSLR dataset. Queries are sampled randomly and clicks are generated probabilistically by conditioning on these assessments in a way that resembles the behavior of an actual user \citep{guo09:efficient}. Specifically, we employ an informational click model in our ranker evaluation experiments~\cite{hofmann13:balancing}. 

The informational click model simulates the behavior of users whose goal is to acquire knowledge about multiple facets of a topic, rather than seeking a specific page that contains all the information that they need. As such, in the informational click model, the user tends to continue examining documents even after encountering a highly relevant document. The informational click model is one of the three click models utilized in the ranker evaluation literature, along with the perfect and navigational click models \cite{hofmann13:balancing}. It turns out that the full preference matrix of the feature vectors of the MSLR dataset has a Condorcet winner when the perfect or the navigational click-models are used. As we will see in Appendix~\ref{sec:condorcet}, using the informational click model that is no longer true. 

Following \citep{YueJoachims:2011,RUCB2014}, we first use the above approach to estimate the comparison probabilities $p_{ij}$ for each pair of rankers and then use these probabilities to simulate comparisons between rankers. More specifically, we estimate the full preference matrix, called the \emph{informational preference matrix}, by performing $400,000$ interleaved comparisons on each pair of the $136$ feature rankers.

% \newpage

% \newpage

\section{Assumptions and Key Quantities}
\label{sec:assumptions}

\begin{figure}[!b]

% \vspace{3mm}

\centering
\includegraphics[width=.75\textwidth]{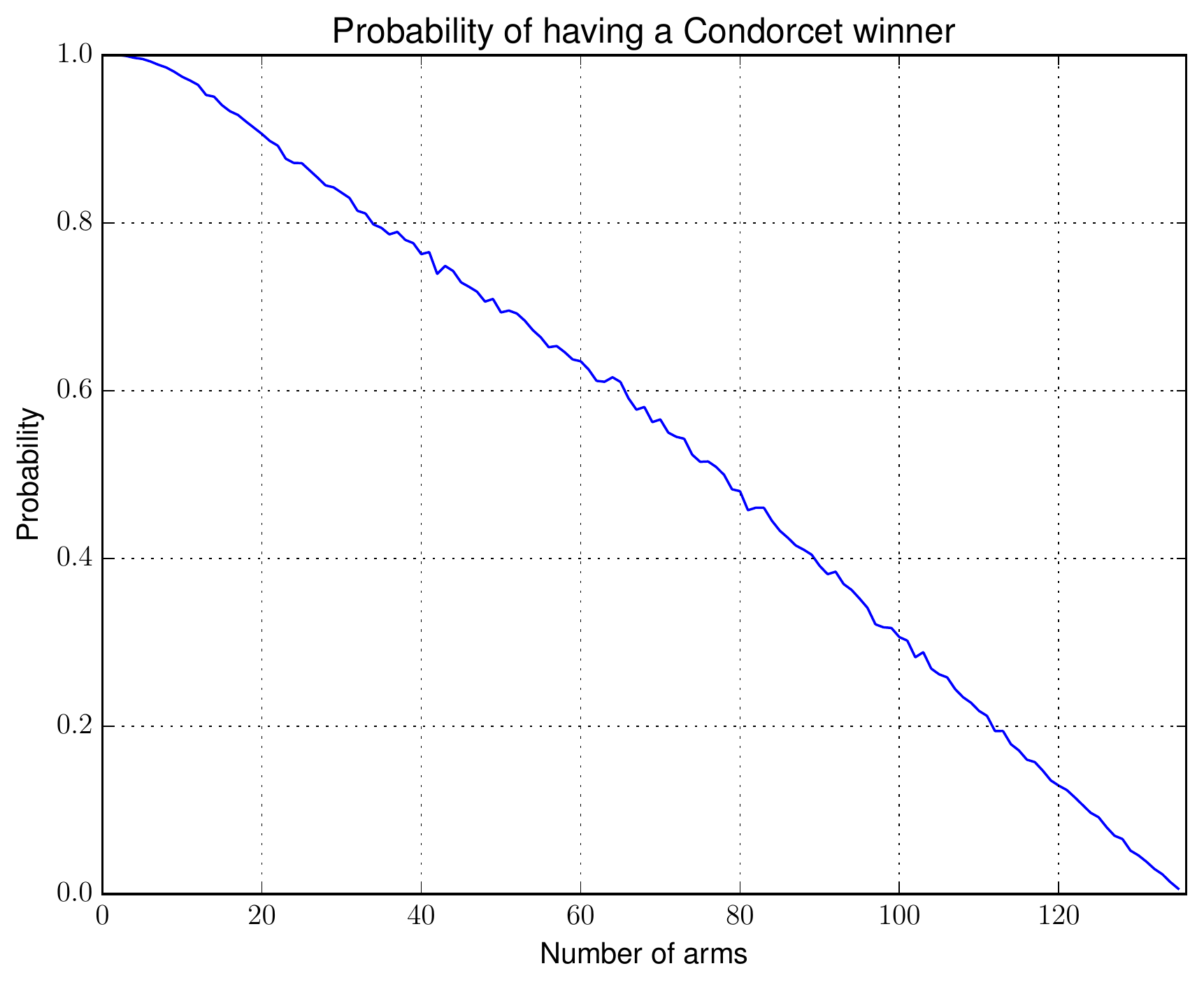}

\vspace{-3mm}

\caption{The probability that the Condorcet assumption holds for subsets of the feature rankers. The probability is shown as a function of the size of the subset.}

\vspace{3mm}

\label{fig:prob-condorcet}
\end{figure}

In this section, we provide quantitative analysis of the various assumptions, definitions and quantities that were discussed in the main body of the paper.

\subsection{The Condorcet Assumption}
\label{sec:condorcet}

To test how stringent the Condorcet assumption is, we use the informational preference matrix described in Section \ref{sec:exp-det} to estimate for each $K=1,\ldots,136$ the probability $P_K$ that a given $K$-armed dueling bandit problem, obtained from considering $K$ of our $136$ feature rankers, would have a Condorcet winner by randomly selecting $10,000$ $K$-armed dueling bandit problems and counting the ones with Condorcet winners. As can be seen from Figure \ref{fig:prob-condorcet}, as $K$ grows the probability that the Condorcet assumption holds decreases rapidly. \masrour{We hypothesize that this is because the informational click model explores more of the list of ranked documents than the navigational click model, which was used in \cite{RUCB2014}, and so it is more likely to encounter non-transitivity phenomena of the sort described in \cite{gardner:1970}.}

%\sw{Perhaps we should say explicitly why this is so different from Figure 3 in the ICML paper, which made the Condorcet assumption look so reasonable.} \todo{MZ: Is this good enough?}

%\zk{Would it be possible to move this section to the main body of the paper? It's pretty short has a convincing argument for the motivation of our result.} \todo{MZ: I'll move it if we had space left in the first 8 pages.}
%
%\zk{Also, another interesting addition would be to provide the distribution of $L_C$, that is the number of losses of the Copeland winner. We implicit claim that it is a small constant, much smaller than $K$ - is that the case here?} \todo{MZ: Thanks for the suggestion. I've added them.}

\subsection{Other Notions of Winners}
\label{sec:other-winners}

As mentioned in Section \ref{sec:related-work}, numerous other definitions of what constitutes the best arm have been proposed, some of which specialize to the Condorcet winner, when it exists. This latter property is desirable both in preference learning and social choice theory: the Condorcet winner is the choice that is preferred over all other choices, so if it exists, there is good reason to insist on selecting it. The Copeland winner, as discussed in this paper, and the von Neumann winner \cite{CDB:2015} satisfy this property, while the Borda (a.k.a.\ Sum of Expectations) and the Random Walk (a.k.a.\ PageRank) winners \cite{busa2014survey} do not.
%\zk{If we want some short definitions: 
The von Neumann winner is in fact defined as a distribution over arms such that playing it will maximize the probability to beat any fixed arm. The Borda winner is defined as the arm maximizing the score $\sum_{j \neq i} p_{ij}$ and can be interpreted as the arm that beats other arms by the most, rather than beating the most arms. The Random Walk winner is defined as the arm we are most likely to visit in some Markov Chain determined by the preference matrix.
In this section, we provide some numerical evidence for the similarity of these notions in practice, based on the sampled preference matrices obtained from the ranker evaluation from IR, which was described in the last section. Table \ref{tbl:winners} lists the percentage of preference matrices for which pairs of winner overlapped. In the case of the von Neumann winner, which is defined as a probability distribution over the set of arms \cite{CDB:2015}, we used the support of the distribution (i.e., the set of arms with non-zero probability) to define overlap with the other definitions.

\begin{table}[h]

\vspace{-5mm}

\centering
% \small
    \caption{Percentage of matrices for which the different notions of winners overlapped}
    \label{tbl:winners}
    \vspace{2mm}
    \begin{tabular}{c|cccc}
	 Overlap & Copeland & von Neumann & Borda & Random Walk  \\
	 \hline
	 Copeland & \%100   &  \%99.94 &  \%51.49 &  \%56.15 \\
	 von Neumann & \%99.94 &  \%100   &  \%77.66 &  \%82.11 \\
	 Borda & \%51.49 &  \%77.66 &  \%100   &  \%94.81 \\
	 RandomWalk & \%56.15 &  \%82.11 &  \%94.81 &  \%100   
     \end{tabular}

 % \vspace{-5mm}

\end{table}

As these numbers demonstrate, the Copeland and the von Neumann winners are very likely to overlap, as are the Borda and Random Walk winners, while the first two definitions are more likely to be incompatible with the latter two. %\zk{I didn't understand the sentence `though the first two definitions are more distant from the latter two'. Also, do we have a 1-line definition of the other winner types anywhere? If not, I would either avoid discussing their definitions or put something here.} 
Furthermore, in the case of \%94.2 of the preference matrices, all Copeland winners were contained in the support of the von Neumann winner, suggesting that in practice the Copeland winner is a more restrictive notion of what constitutes a winner.  

\begin{figure}[!b]

\vspace{-2mm}

\centering
\includegraphics[width=.97\textwidth]{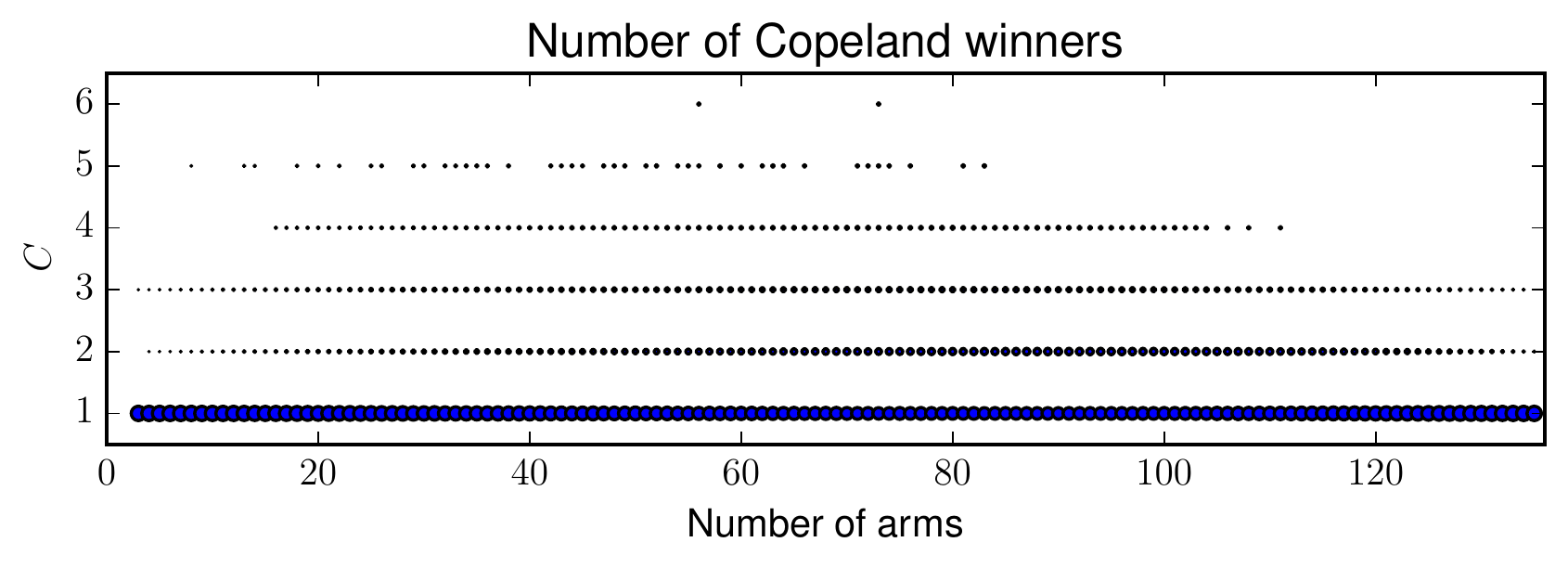}
\includegraphics[width=.97\textwidth]{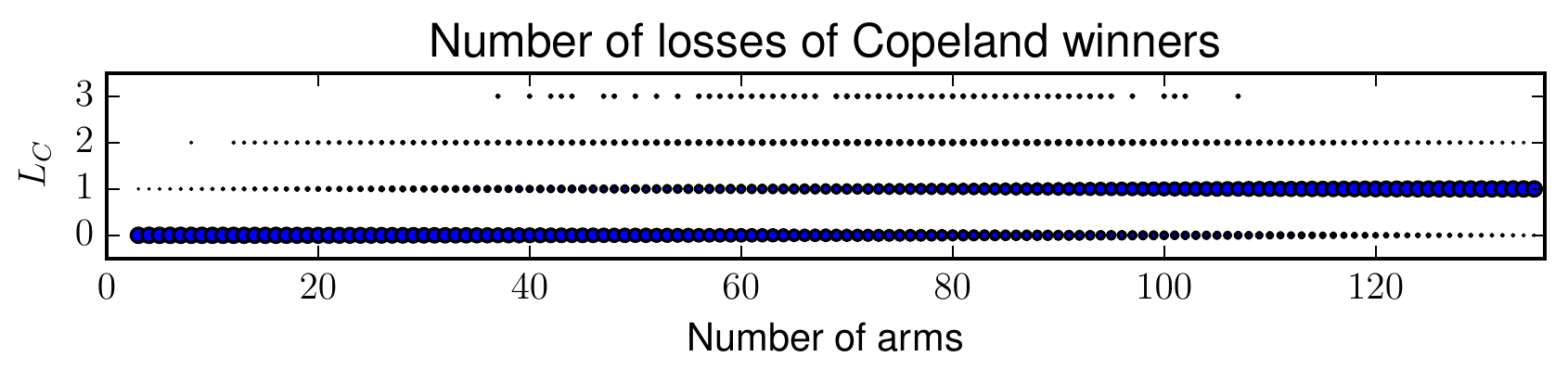}

\vspace{-3mm}

\caption{Observed values of the parameters $C$ and $L_C$: the area of the circle with coordinates $(x,y)$ is proportional to the percentage of examples with $K=x$ which satisfied $C=y$ (in the top plot) or $L_C=y$ in the bottom plot.}

% \vspace{6mm}

\label{fig:CandL_C}
\end{figure}

\subsection{The Quantities $C$ and $L_C$}
\label{sec:C-L_C}

We also examine additional quantities relevant to our regret bounds: the number of Copeland winners, $C$; the number of losses of each Copeland winner, $L_C$; and the range of values in which these quantities fall. Using the above randomly chosen preference sub-matrices, we counted the number of times each possible value for $C$ and $L_C$ was observed. The results are depicted in Figure \ref{fig:CandL_C}: the area of the circle with coordinates $(x,y)$ is proportional to the percentage of examples with $K=x$ which satisfied $C=y$ (in the top plot) or $L_C=y$ (in the bottom plot). As these plots show, the parameters $C$ and $L_C$ are generally much lower than $K$.

\subsection{The Gap $\Delta$}
\label{sec:Delta}

The regret bound for CCB, given in \eqref{eqn:ExpRegret}, depends on the gap $\Delta$ defined in Definition \ref{def:deltas}.\ref{item:Delta}, rather than the smallest gap $\Delta_{\min}$ as specified in Definition \ref{def:deltas}.\ref{item:Delta_min}. The latter would result in a looser regret bound and Figure \ref{fig:gap-ratios} quantifies this deterioration in the ranker evaluation example under consideration here. In particular, the plot depicts the average of the ratio between the two bounds (the one using $\Delta$ and the one using $\Delta_{\min}$) across the $10,000$ sampled preference matrices used in the analysis of the Condorcet winner for each $K$ in the set $\{2,\ldots,135\}$. The average ratio decreases as the number of arms approaches $136$ because, as $K$ increases, the sampled preference matrices increasingly resemble the full preference matrix and so their gaps $\Delta$ and $\Delta_{\min}$ approach those of the full $136$-armed preference matrix as well. As it turns out, the ratio $\Delta^2/\Delta_{\min}^2$ for the full matrix is equal to $1,419$. Hence, the curve in Figure \ref{fig:gap-ratios} approaches that number as the number of arms approaches $136$.

\begin{figure}[!h]
\centering
\includegraphics[width=.85\textwidth]{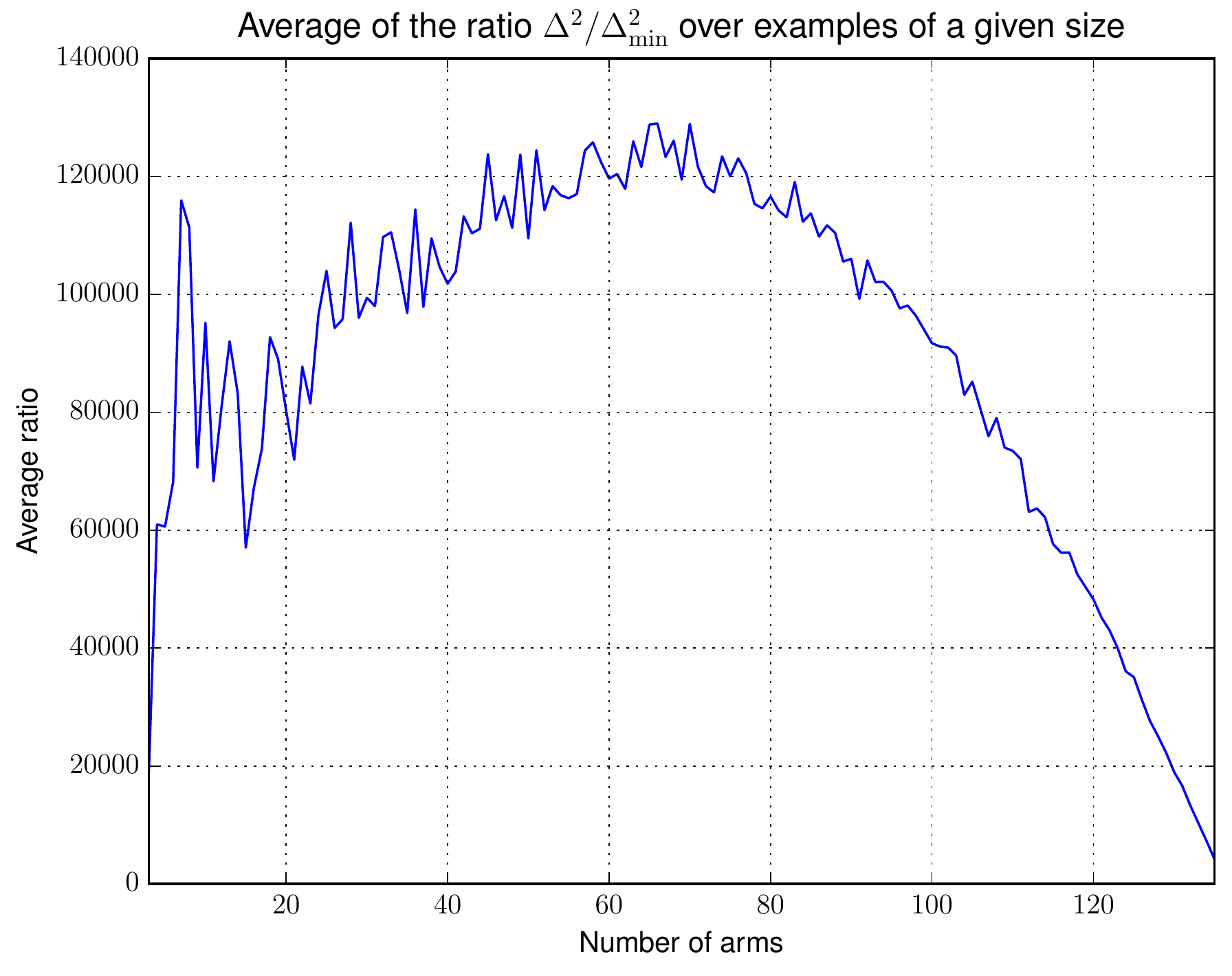}

\vspace{-3mm}

\caption{The average advantage gained by having the bound in~\eqref{eqn:ExpRegret} depend on $\Delta$ rather than $\Delta_{\min}$: for each number of arms $K$, the expectation is taken across the $10,000$ $K$-armed preference matrices obtained using the sampling procedure described above.}

% \vspace{6mm}

\label{fig:gap-ratios}
\end{figure}

\newpage

\section{Background Material}
\label{sec:prelim}

\noindent{\bf Maximal Azuma-Hoeffding Bound} \citep[\S A.1.3]{Cesa-Bianchi:2006}: Given random variables $X_1,\ldots,X_N$ with common range $[0,1]$ satisfying $\mathbf{E}[X_n | X_1, \ldots, X_{n-1} ] = \mu$, define the partial sums $S_n = X_1 + \cdots + X_n$. Then, for all $a > 0$, we have
\begin{align*}
P\Big(\max_{n\leq N} S_n > n\mu & + a\Big) \leq e^{-2a^2/N} \\
P\Big(\min_{n\leq N} S_n < n\mu & - a\Big) \leq e^{-2a^2/N}
\end{align*}

Here, we will quote a useful Lemma that we will refer to repeatedly in our proofs:

\begin{lemma}[Lemma 1 in \cite{RUCB2014}]\label{lem:HighProbBound} Let $\vP := \left[ p_{ij} \right]$ be the preference matrix of a $K$-armed dueling bandit problem with arms $\{a_1,\ldots,a_K\}$. Then, for any dueling bandit algorithm and any $\alpha > \frac{1}{2}$ and $\delta > 0$, we have
\begin{equation*}  P\Big( \forall\,t>C(\delta),i,j,\; p_{ij} \in [l_{ij}(t),u_{ij}(t)] \Big) > 1-\delta. \end{equation*}
\end{lemma}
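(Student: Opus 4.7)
The plan is to combine Hoeffding's inequality with a union bound over pairs of arms and over the number of comparisons made of each pair, exploiting the fact that the confidence radius $\sqrt{\alpha \ln t / n}$ is monotonically increasing in $t$ for each fixed $n$. This monotonicity means that, for each pair $(a_i,a_j)$ and each value $n$, the confidence interval is tightest precisely when $n_{ij}(t)$ first equals $n$, so we need only verify the bound at one representative time per value of $n$, rather than at every $t$.

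First I would set up the probability space so that for each ordered pair $(i,j)$ there is an underlying iid $\bernoulli(p_{ij})$ sequence $Y_1^{(i,j)}, Y_2^{(i,j)}, \ldots$ whose entries are revealed to the algorithm when it queries that pair. Let $\hat p_n^{(i,j)}$ be the empirical mean of the first $n$ terms and let $\tau_n^{(i,j)}$ be the (random) time step at which the pair is compared for the $n$-th time; note the deterministic inequality $\tau_n^{(i,j)} \geq n$. Crucially, the $Y_k^{(i,j)}$ are iid \emph{unconditionally} on the algorithm's choices, so Hoeffding's inequality yields, for every $n$ and every $\epsilon>0$, $P(|\hat p_n^{(i,j)} - p_{ij}|>\epsilon) \leq 2e^{-2n\epsilon^2}$.

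Next I would observe that if the confidence bound fails for the pair $(i,j)$ at some $t>C(\delta)$ with $n_{ij}(t)=n$, then it must already fail at the earliest such time $t_n^\star := \max\{\tau_n^{(i,j)},\,\lceil C(\delta)\rceil+1\}$, because $\sqrt{\alpha\ln t/n}$ only grows with $t$. Using $\tau_n^{(i,j)}\geq n$ gives $t_n^\star \geq \max\{n, C(\delta)\}$, and then Hoeffding with $\epsilon = \sqrt{\alpha \ln t_n^\star/n}$ produces
\[
P\bigl(|\hat p_n^{(i,j)} - p_{ij}|>\sqrt{\alpha \ln t_n^\star/n}\bigr) \;\leq\; 2 (t_n^\star)^{-2\alpha} \;\leq\; 2 \max\{n,C(\delta)\}^{-2\alpha}.
\]

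Finally I would take a union bound over the $K^2$ ordered pairs and over $n\geq 1$, bounding the resulting tail sum by comparison with an integral:
\[
K^2 \sum_{n=1}^\infty 2\max\{n,C(\delta)\}^{-2\alpha} \;\leq\; K^2\Bigl(2\,C(\delta)^{1-2\alpha} + \tfrac{2}{2\alpha-1}\,C(\delta)^{1-2\alpha}\Bigr) \;=\; \tfrac{4\alpha K^2}{2\alpha-1}\,C(\delta)^{1-2\alpha},
\]
which is at most $\delta$ by the definition of $C(\delta)$ in Definition~\ref{def:deltas} (up to a minor cosmetic difference between $4\alpha$ and $4\alpha-1$ that a slightly more careful discrete-sum estimate removes). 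The hypothesis $\alpha>1/2$ is precisely what makes the tail $\sum_n n^{-2\alpha}$ converge, and this is the main conceptual obstacle, since a naive union bound over $t$ would produce $\sum_{t>C(\delta)} 2K^2 t^{1-2\alpha}$, which diverges for $\alpha\in(1/2,1]$. Reducing a sum over $t$ to a sum over $n$ via the monotonicity observation above is precisely what lets the lemma hold throughout the stated range $\alpha>1/2$.
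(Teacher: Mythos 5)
Your proof is correct. Note, however, that this paper does not prove the lemma at all: it is imported verbatim as Lemma~1 of the cited RUCB paper, so there is no in-paper argument to compare against. Your derivation is the standard one (and, as far as the record goes, matches the proof in that reference): the decisive move of peeling over the number of comparisons $n$ rather than over the time index $t$, exploiting that the radius $\sqrt{\alpha\ln t/n}$ is increasing in $t$ for fixed $n$, is exactly what makes the statement hold throughout the range $\alpha>1/2$. Two small polish points. First, invoking Hoeffding with the random threshold $\epsilon=\sqrt{\alpha\ln t_n^\star/n}$ is, as written, an abuse; the clean version uses monotonicity once more to pass to the deterministic threshold $\sqrt{\alpha\ln(\max\{n,C(\delta)\})/n}$ \emph{before} applying Hoeffding --- which is precisely the inequality you end up using, so nothing is actually missing. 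Second, on the constant: since $l_{ij}(t)=1-u_{ji}(t)$ and the empirical frequencies for $(i,j)$ and $(j,i)$ are complementary, the failure events of the two ordered pairs coincide (and the diagonal never fails, as $u_{ii}=l_{ii}=p_{ii}=1/2$), so the union bound needs only $\binom{K}{2}<K^2/2$ terms; this factor of two more than absorbs the discrepancy between your $4\alpha/(2\alpha-1)$ and the $(4\alpha-1)/(2\alpha-1)$ appearing in the definition of $C(\delta)$.
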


\section{Proof of Proposition \ref{prop:counts}}
\label{sec:counts-proof}

Before starting with the proof, let us point out the following two properties that can be derived from Assumption {\bf A} in Section \ref{sec:theory}:

\begin{itemize}[topsep=0mm,parsep=0pt,itemsep=0pt,partopsep=0pt]
\item[{\bf P1}]\label{P1} There are no ties involving a Copeland winner and a non-Copeland winner, i.e., for all pairs of arms $(a_i,a_j)$ with $i \leq C < j$, we have $p_{ij} \neq 0.5$.
\item[{\bf P2}]\label{P2} Each non-Copeland winner has more losses than every Copeland winner, i.e., for every pair of arms $(a_i,a_j)$, with $i \leq C < j$, we have $|\mathcal{L}_i| < |\mathcal{L}_j|$.
\end{itemize}

% \sw{Aren't these trivial and unnecessary given the new {\bf A}?}

\masrour{Even though we have assumed in the statement of Proposition \ref{prop:counts} that Assumption {\bf A} holds, it turns out that the proof provided in this section holds as long as the above two properties hold.}

{\bf Proposition \ref{prop:counts}} \emph{Applying CCB to a dueling bandit problem satisfying properties {\bf P1} and {\bf P2}, we have the following bounds on the number of comparisons involving various arms for each $T>C(\delta)$: for each pair of arms $a_i$ and $a_j$, such that either at least one of them is not a Copeland winner or $p_{ij} \neq 0.5$, with probability $1-\delta$ we have}
\begin{equation}\label{eqn:counts}
N_{ij}^{\delta}(T) \leq \widehat{N}_{ij}^{\delta}(T) := \left\{\begin{array}{cl} \dfrac{4\alpha\ln T}{\left(\Delta^*_{ij}\right)^2} & \textup{if } i \neq j \\[15pt] 0 & \textup{if $i=j>C$} \end{array}\right.
\end{equation}

% \sw{None of the cases below seem to address the case where $j > C$, $i \leq C$, and $p_{ij} = 0.5$, even though this is required by the proof statement.  Since $N_{ij}^{\delta}(T)$ is not symmetric, this is not covered by Case 2.} \zk{With the property {\bf P1} the two cases are $i \leq C$ and $i>C$ so everything is covered.} 

\begin{proof}[Proof of Proposition \ref{prop:counts}] We will prove these bounds by considering a number of cases separately:
\begin{enumerate}[leftmargin=*]
\item\label{item:CNC} {\bf $i \leq C$ and $p_{ij} \neq 0.5$}: %\sw{2nd condition unnecessary?} 
First of all, since $a_i$ is a Copeland winner, this means that according to the definitions in Tables \ref{tbl:notation1} and \ref{tbl:notation2}, $\Delta_{ij}^*$ is simply equal to $\Delta_{ij}$; secondly, assuming by way of contradiction that $N_{ij}^\delta(t) > \frac{4\alpha\ln T}{\Delta_{ij}} > 0$, %\sw{This case applies because $p_{ij} \neq 0.5 \implies i \neq j$?}, MZ: Yes.
then we have $\tau_{ij} > C(\delta)$ and so by Lemma \ref{lem:HighProbBound}, we have with probability $1-\delta$ that the confidence interval $[l_{ij}(\tau_{ij}),u_{ij}(\tau_{ij})]$ contains the preference probability $p_{ij}$. But, in order for arm $a_j$ to have been chosen as the challenger to $a_i$, we must also have $0.5 \in [l_{ij}(\tau_{ij}),u_{ij}(\tau_{ij})]$; to see this, let us consider the two possible cases:
\begin{enumerate}[leftmargin=*]
\item If we have $p_{ij} > 0.5$, then having
\[ 0.5 \notin [l_{ij}(\tau_{ij}),u_{ij}(\tau_{ij})] \]
implies that we have $l_{ij}(\tau_{ij}) > 0.5$, which in turn implies 
\[ u_{ji}(\tau_{ij}) = 1-l_{ij}(\tau_{ij}) < 0.5 = u_{ii}(\tau_{ij}), \]
but this is impossible since in that case $a_i$ would've been chosen as the challenger.
\item If we have $p_{ij} < 0.5$, then have
\[ 0.5 \notin [l_{ij}(\tau_{ij}),u_{ij}(\tau_{ij})] \]
implies that we have $u_{ij}(\tau_{ij}) < 0.5$, but this is impossible because it means that we had $l_{ji}(\tau_{ij}) > 0.5$, and CCB would've eliminated it from considerations in its second round.
\end{enumerate}
% \zk{Suggestion: Perhaps we can state a lemma / observation that no matter what happens, if we compare arms $a_i,a_j$ then $0.5 \in [l_{ij}, u_{ij}]$. This is a trivial property of an algorithm based on confidence bounds. Also, this property is being used several times, either implicitly or explicitly in the proof} MZ: Maybe for the CRC.
So, in either case, we cannot have $0.5 \notin [l_{ij}(\tau_{ij}),u_{ij}(\tau_{ij})]$. Therefore, at time $\tau_{ij}$, we must have had $u_{ij}(\tau_{ij})-l_{ij}(\tau_{ij}) > |p_{ij}-0.5| =: \Delta_{ij}$. From this, we can conclude the following, using the definition of $u_{ij}$ and $l_{ij}$:
\begin{align*}
 & u_{ij}(\tau_{ij})-l_{ij}(\tau_{ij}) := 2\sqrt{\frac{\alpha \ln \tau_{ij}}{N_{ij}(\tau_{ij})}} \geq \Delta_{ij} \\
 \therefore\quad & 2\sqrt{\frac{\alpha \ln \tau_{ij}}{N_{ij}^\delta(\tau_{ij})}} \geq \Delta_{ij} \quad \because \; N_{ij}^\delta(\tau_{ij}) \leq N_{ij}(\tau_{ij}) \\
 \therefore\quad & 2\sqrt{\frac{\alpha \ln T}{N_{ij}^\delta(\tau_{ij})}} \geq \Delta_{ij} \quad \because \; \tau_{ij} \leq T \\
 \therefore\quad & N_{ij}^\delta(\tau_{ij}) \leq \frac{4\alpha \ln T}{\Delta_{ij}^2},
\end{align*}
% \sw{This is the only place where you use the fact that $\Delta_{ij}^* = \Delta_{ij}$, right?  So how about mentioning it here first instead of earlier?} MZ: I'll deal with this later.

giving us the desired bound. The reader is referred to Figure \ref{fig:CopelandNonCopelandFigure} for an illustration of this argument.  %\sw{I don't know what I am supposed to get out of this figure.  What exactly is being illustrated?  Also, the notation doesn't seem quite consistent with the proof text.} \zk{I think that the reference to the figure should appear elsewhere - the best place would be right after the definition of $\Delta_{ij}^*$}

\begin{figure}[t]

\centering
\includegraphics[width=.95\textwidth]{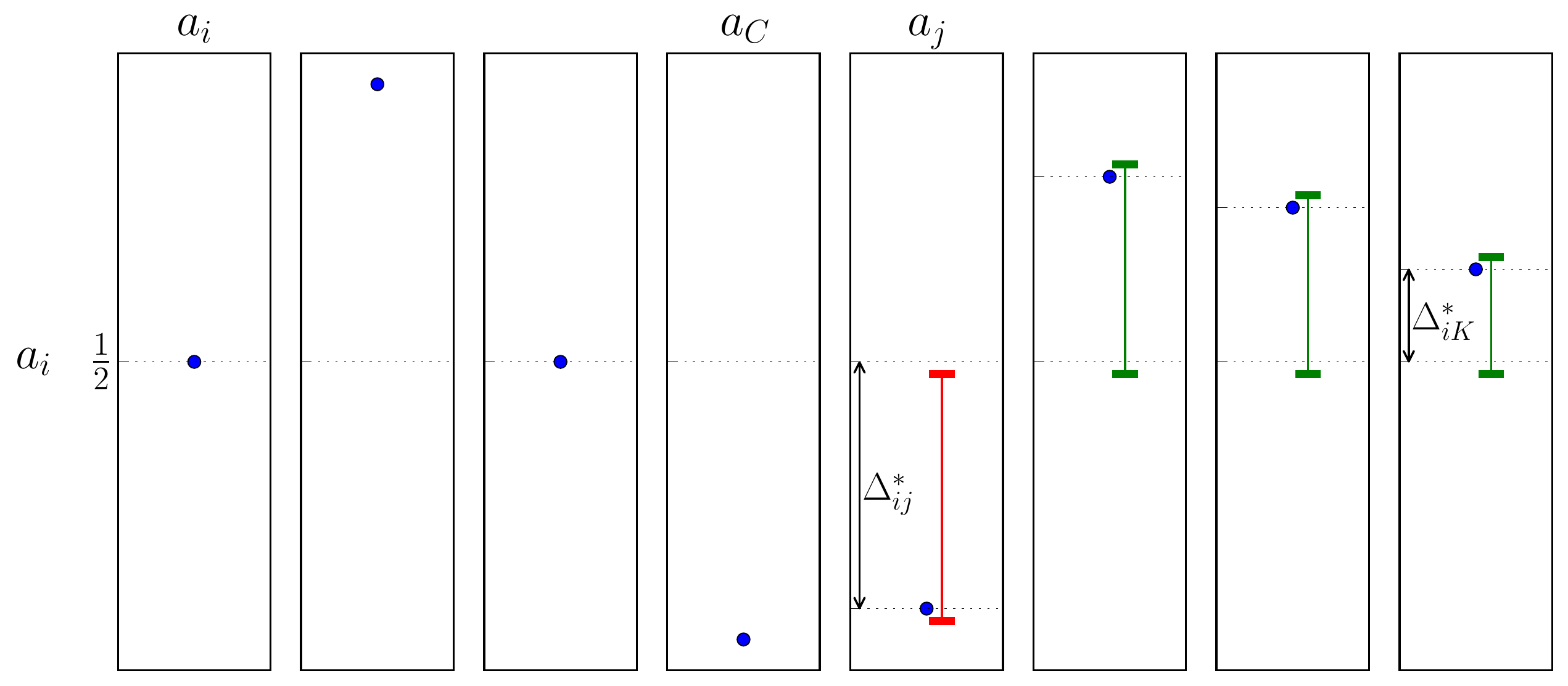}

% \vspace{-5mm}

\caption{\masrour{This figure illustrates the definition of the quantities $\Delta^*_i$ and $\Delta^*_{ij}$ in the case that arm $a_i$ is a Copeland winner, as well as the idea behind Case 1 in the proof of Proposition \ref{prop:counts}. In this setting we have $\Delta^*_i = 0$ and $\Delta^*_{ij} = \Delta_{ij}$. On the one hand, by Lemma \ref{lem:HighProbBound}, we know that the confidence intervals will contain the $p_{ij}$ (the blue dots in the plots), and on the other as soon as the confidence interval of $p_{ij}$ stops containing $0.5$ for some arm $a_j$, we know that it could not be chosen to be compared against $a_i$. In this way, the gaps $\Delta^*_{ij}$ regulate the number of times that arm each arm can be chosen to be played against $a_i$ during time-steps when $a_i$ is chosen as optimistic Copeland winner.}} %Case CopelandNonCopeland: the number of times a non-Copeland winner arm could've been chosen to be played against a Copeland winner.}
\label{fig:CopelandNonCopelandFigure}

\vspace{5mm}

\end{figure}

\item {\bf $C < i$}: Let us deal with the two cases included in Inequality \eqref{eqn:counts} separately:
%, dealing with the easier part first:
\begin{enumerate}[leftmargin=*]
\item {$i=j>C$:} In plain terms, this says that with probability $1-\delta$ no non-Copeland winner will be compared against itself after time $C(\delta)$. The reason for this is the following set of facts:
\begin{itemize}[leftmargin=*]
\item Since $a_i$ is a non-Copeland winner, we have by Property {\bf P1} that it loses to more arms than any Copeland winner.
\item For $a_i$ to have been chosen as an optimistic Copeland winner, it has to have (optimistically) lost to no more than $L_C$ arms, which means that there exists an arm $k$ such that $p_{ik} < 0.5$, but $u_{ik} \geq 0.5$.
\item By Lemma \ref{lem:HighProbBound}, for all time steps after $C(\delta)$, we have $l_{ik} \leq p_{ik} < 0.5$, and so in the second round we have $u_{ki} > 0.5 = u_{ii}$, and so $a_i$ could be not chosen as the challenger to itself. %\sw{You mean could NOT be chosen?} MZ: Yes, thanks!
\end{itemize}
\item {$i \neq j$:} In the case that $a_i$ is not a Copeland winner and $a_j$ is different from $a_i$, we distinguish between the following two cases, where $\Delta_i^*$ is defined as in Tables \ref{tbl:notation1} and \ref{tbl:notation2}:
\begin{enumerate}[leftmargin=4mm]
\item $p_{ij} \leq 0.5-\Delta_i^*$: In this case, the definition of $\Delta_i^*$ reduces to $\Delta_{ij}$. %\zk{can we just say: $i,j$ where chosen hence $u_{ij} \geq 1/2$; $l_{ij} \leq p_{ij}$ hence $u_{ij}-l_{ij} \geq \Delta_{ij}$. Plugging in the definitions of $u_{ij},l_{ij}$ leads to the occlusion.} 
Now, since when choosing the challenger, CCB 
%\zk{changed Copeland RUCB to CCB - any other such references should be changed as well.} MZ: Thanks!
eliminates from consideration any arm $a_j$ that has $l_{ji} > 0.5$, the last time-step $\tau_{ij}$ after $C(\delta)$ when $a_j$ was chosen as the challenger for $a_i$, we must've had $u_{ij}(\tau_{ij}) := 1-l_{ji}(\tau_{ij}) \geq 0.5$. On the other hand, Lemma \ref{lem:HighProbBound} implies that we must also have $l_{ij}(\tau_{ij}) \leq p_{ij}$, and therefore, we have $u_{ij}(\tau_{ij})-l_{ij}(\tau_{ij}) \geq \Delta_{ij}$; so, doing the same calculation as in part \ref{item:CNC} of this proof, we have
\begin{align*}
\hspace{-10mm} & u_{ij}(\tau_{ij})-l_{ij}(\tau_{ij}) := 2\sqrt{\frac{\alpha \ln \tau_{ij}}{N_{ij}(\tau_{ij})}} \geq \Delta_{ij} \\
\hspace{-10mm} \therefore\quad & 2\sqrt{\frac{\alpha \ln \tau_{ij}}{N_{ij}^\delta(\tau_{ij})}} \geq \Delta_{ij} \quad \because \; N_{ij}^\delta(\tau_{ij}) \leq N_{ij}(\tau_{ij}) \\
\hspace{-10mm} \therefore\quad & 2\sqrt{\frac{\alpha \ln T}{N_{ij}^\delta(\tau_{ij})}} \geq \Delta_{ij} \quad \because \; \tau_{ij} \leq T \\
\hspace{-10mm} \therefore\quad & N_{ij}^\delta(\tau_{ij}) \leq \frac{4\alpha \ln T}{\Delta_{ij}^2},
\end{align*}

\item $p_{ij} > 0.5-\Delta_i^*$: Repeating the above argument about $u_{ij}(\tau_{ij})$, we can deduce that $u_{ij}(\tau_{ij}) \geq 0.5$ must hold. On the other hand, Lemma \ref{lem:HighProbBound} states that with probability $1-\delta$ we have $u_{ij}(\tau_{ij}) \geq p_{ij}$. Putting these two together we get
\begin{equation}\label{eqn:NCub} u_{ij}(\tau_{ij}) \geq \max\{0.5,p_{ij}\}. \end{equation}
On the other hand, we will show next that with probability $1-\delta$, we have $l_{ij}(\tau_{ij}) \leq 0.5-\Delta_i^*$; this is a consequence of the following facts:
\begin{itemize}[leftmargin=*]
\item Since $a_i$ was chosen as the optimistic Copeland winner, we can deduce that $a_i$ had no more that $L_C$ optimistic losses.
\item Let $a_{k_1},\ldots,a_{k_{l}}$ be the $l \leq L_C$ arms to which $a_i$ lost optimistically during time-step $\tau_{ij}$. Then, the smallest $p_{ik}$ with $k \notin \{k_1,\ldots,k_{l}\}$, must be less than to equal to the $\{L_C+1\}^{th}$ smallest element in the set $\{p_{ik} \, | \, k=1,\ldots,K \}$. 
\item This, in turn, is equal to the $\{L_C+1\}^{th}$ smallest element in the set $\{p_{ik} | p_{ik} < 0.5 \}$ (since this latter set of numbers are the smallest ones in the former set). But, this is equal to $0.5-\Delta_i^*$ by definition.
\end{itemize}
So, we have the desired bound on $l_{ij}(\tau_{ij})$ and combining this with Inequality \eqref{eqn:NCub}, we have
\[ \hspace{-5mm} u_{ij}(\tau_{ij})-l_{ij}(\tau_{ij}) \geq \max\{0,p_{ij}-0.5\}+\Delta_i^* = \Delta_{ij}^*, \]
% \zk{Do you mean $\max\{0, p_{ij}-0.5\}+\Delta_i^*$?}
where the last equality follows directly from the definition of $\Delta_{ij}^*$ and the fact that $p_{ij} > 0.5-\Delta_i^*$. Now, repeating the same calculations as before, we can conclude that with probability $1-\delta$, we have
\[ N_{ij}^\delta(\tau_{ij}) \leq \frac{4\alpha \ln T}{\left(\Delta_{ij}^*\right)^2}. \]
\end{enumerate}
\end{enumerate}

A pictorial depiction of the various steps in this part of the proof can be found in Figure \ref{fig:nonCopelandFigure}. \qedhere
\end{enumerate}

\end{proof}

\newpage

\begin{figure}[t]

\centering
\includegraphics[width=.95\textwidth]{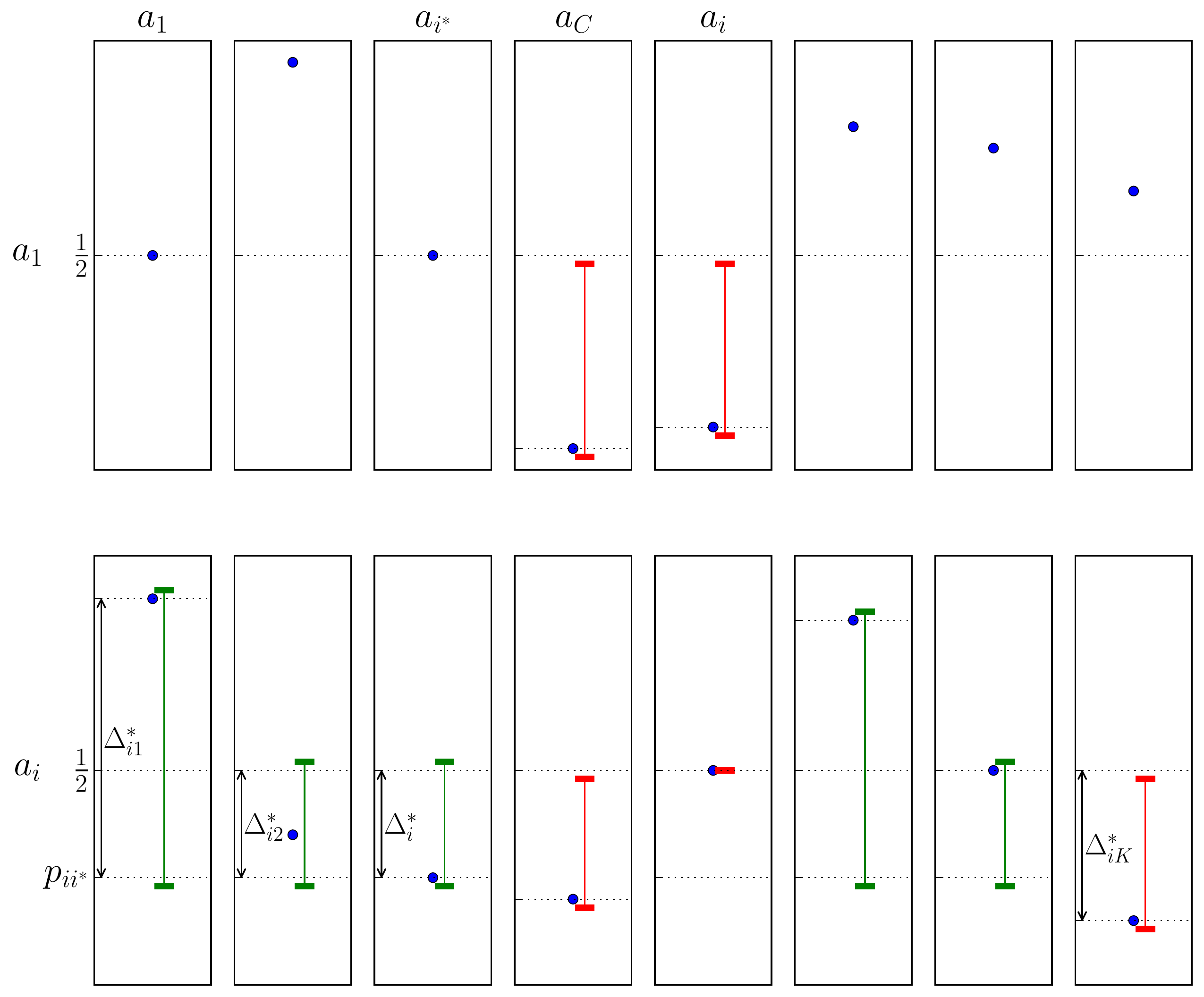}

\vspace{-3mm}

\caption{This figure illustrates the definition of the quantities $\Delta^*_i$ and $\Delta^*_{ij}$ in the case that arm $a_i$ is not a Copeland winner, as well as the idea behind Case 2 in the proof of Proposition \ref{prop:counts}. The bottom row of plots in the figure corresponds to the confidence intervals around probabilities $p_{ij}$ (depicted using the blue dots) for $j=1,\ldots,K$, while the top row corresponds to those for probabilities $p_{1j}$, where $a_1$ is by assumption one of the Copeland winners (although we could use any other Copeland winner instead). \vspace{1mm} \protect\\ %
The two boxes in the top row with red intervals represent arms to which $a_1$ loses (i.e. $p_{1j} < 0.5$), the number of which happens to be $2$ in this example, which means that $L_C = 2$. Now, by Definition \ref{def:deltas}.\ref{item:i*}, $i^*$ is the index with the index $j$ with the $(L_C+1)^{th}$ (in this case $3^{rd}$) lowest $p_{ij}$, and since the three lowest $p_{ij}$ in this example are $p_{iK}, p_{iC}$ and $p_{ii^*}$, this means that the column labeled as $a_{i^*}$ is indeed labeled correctly. Given this, Definition \ref{def:deltas}.\ref{item:Delta*i} tells us that $\Delta^*_i$ is the size of the gap shown in the block corresponding to pair $(a_i,a_{i^*})$. \vspace{1mm} \protect\\ %
Moreover, by Definition \ref{def:deltas}.\ref{item:Delta*ij}, the gap $\Delta^*_{ij}$ is defined using one of the following three cases: (1) if we have $p_{ij} < p_{ii^*}$ (as with the ones with red confidence intervals in the bottom row of plots), then we get $\Delta^*_{ij} := \Delta_{ij} = 0.5-p_{ij}$; (2) if we have $p_{ii^*} < p_{ij} \leq 0.5$ (as in the plots in the $2^{nd}, 3^{rd}$ and $7^{th}$ column of the bottom row), then we get $\Delta^*_{ij} := \Delta^*_i$; (3) if we have $0.5 < p_{ij}$ (as in the $1^{st}$ and $6^{th}$ column in the bottom row), then we get $\Delta^*_{ij} := \Delta_{ij} + \Delta^*_i$. \vspace{1mm} \protect\\ %
The reasoning behind this trichotomy is as follows: in the case of arms $a_j$ in group (1), they are not going to be chosen to be played against $a_i$ as soon as top of the interval goes below $0.5$, and by Lemma \ref{lem:HighProbBound}, we know that the bottom of the interval will be below $p_{ij}$. In the case of the arms in groups (2) and (3), the bottom of their interval needs to be below $p_{ii^*}$ because otherwise that would mean that neither arm $a_{i^*}$ nor arms in group (1) were eligible to be included in the $\argmax$ expression in Line 13 of Algorithm \ref{alg:CCB}, which can only happen if we have $u_{ij} < 0.5$ for $j=i^*$ as well as the arms in group (1), from which we can deduce that the optimistic Copeland score of $a_i$ must have been lower than $K-1-L_C$, and so $a_i$ could not have been chosen as an optimistic Copeland winner. Using the same argument, we can also see that the tops of the confidence intervals corresponding to arms in group (2) must be above $0.5$, or else it would be impossible for $a_i$ to be chosen as an optimistic Copeland winner. Moreover, by Lemma \ref{lem:HighProbBound}, the intervals of the arms $a_j$ in group (3) must contain $p_{ij}$.} %Case nonCopeland: the number of times a non-Copeland winner arm could've been chosen as an optimistic Copeland winner.}}
\label{fig:nonCopelandFigure}

\end{figure}

\clearpage

\section{Proof of Lemma \ref{lem:SetsB}}
\label{sec:SetsB-proof}

Let us begin with the following direct corollary of Proposition \ref{prop:counts}:
\begin{corollary}\label{cor:auto-comp}
Given any $\delta > 0$, any $T > C(\delta)$ and any sub-interval of length $\widehat{N}^{\delta}(T):=\sum_{i \neq j} \widehat{N}_{ij}^{\delta}(T)+1$, with probability $1-\delta$, there is at least one time-step when there exists $c \leq C$ such that 
\begin{align}
\underline{\textup{Cpld}}(a_c) = \textup{Cpld}(a_c) & = \overline{\textup{Cpld}(a_c)} \nonumber \\
& \geq \overline{\textup{Cpld}(a_j)}\;\; \forall\, j, \label{eqn:topCpld}
\end{align}
\end{corollary}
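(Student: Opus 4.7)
\emph{Plan.} The plan is to combine Proposition~\ref{prop:counts} with a pigeonhole argument. Summing the bound $N^{\delta}_{ij}(T)\le\widehat{N}^{\delta}_{ij}(T)$ over all pairs $i\ne j$, we conclude that with probability at least $1-\delta$ the total number of time-steps in $[C(\delta),T]$ on which \emph{distinct} arms are compared is at most $\sum_{i\ne j}\widehat{N}^{\delta}_{ij}(T)=\widehat{N}^{\delta}(T)-1$, while Proposition~\ref{prop:counts} simultaneously yields $N^{\delta}_{ii}(T)=0$ for every non--Copeland winner (because $\widehat{N}^{\delta}_{ii}(T)=0$ in that case). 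Applied to any sub-interval of length $\widehat{N}^{\delta}(T)$, pigeonhole then forces at least one time-step $t^{\ast}$ in the sub-interval to be a self-comparison $(c,c)$, and moreover $c\le C$ must hold.

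Given such a $t^{\ast}$, I would derive the two required assertions separately. The inequality $\overline{\textup{Cpld}}(a_c)\ge\overline{\textup{Cpld}}(a_j)$ for every $j$ is immediate, since Line~12 of Algorithm~\ref{alg:CCB} samples $a_c$ from $\mathcal{C}_{t^{\ast}}$ and by the definition of $\mathcal{C}_{t^{\ast}}$ in Line~8 this already means $\overline{\textup{Cpld}}(a_c)$ attains the maximum. For the equality $\underline{\textup{Cpld}}(a_c)=\textup{Cpld}(a_c)=\overline{\textup{Cpld}}(a_c)$, I would inspect the argmax in Line~13: since $u_{cc}=l_{cc}=\tfrac{1}{2}$, the selection of $d=c$ under the anti-tie rule forces every $j\ne c$ in the chosen set $\mathcal{B}^{i}$ to satisfy either $l_{jc}>\tfrac{1}{2}$ (equivalently $u_{cj}<\tfrac{1}{2}$) or $u_{jc}<\tfrac{1}{2}$ (equivalently $l_{cj}>\tfrac{1}{2}$). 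In either case the interval $[l_{cj},u_{cj}]$ does not straddle $\tfrac{1}{2}$, and combined with Lemma~\ref{lem:HighProbBound} (which places $p_{cj}$ in this interval) and Assumption~\textbf{A} (which rules out $p_{cj}=\tfrac{1}{2}$) this determines the sign of $p_{cj}-\tfrac{1}{2}$ and yields the claimed equalities.

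The main obstacle I anticipate is that Line~13 randomly picks $\mathcal{B}^{i}$ between $\mathcal{B}^{c}_{t^{\ast}}$ and $\{a_1,\dots,a_K\}$, and the argument of the previous paragraph applies uniformly to every $j\ne c$ only in the second case. I would address this in one of two ways. Route~(i): slightly inflate the window length by a constant factor and apply a Chernoff bound to the independent $\tfrac{1}{2}$-coins of Line~13 so that, conditional on the event that a self-comparison occurs, with probability at least $1-\delta$ at least one such self-comparison in the window uses the full set $\{a_1,\dots,a_K\}$, at which point the argument above applies verbatim. Route~(ii): argue that whenever $\mathcal{B}^{c}_{t^{\ast}}$ is selected and $d=c$ still results, the bookkeeping of Line~9 (in particular Line~9C, which empties $\mathcal{B}^{c}_{t}$ precisely when $\overline{\textup{Cpld}}(a_c)=\underline{\textup{Cpld}}(a_c)$) together with Lemma~\ref{lem:HighProbBound} already forces the equality $\overline{\textup{Cpld}}(a_c)=\underline{\textup{Cpld}}(a_c)$ at $t^{\ast}$ itself. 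Route~(i) is the cleaner of the two and is the one I would pursue first.
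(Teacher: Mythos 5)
Your core argument is the paper's own: summing the bounds of Proposition~\ref{prop:counts} over ordered pairs $i\neq j$ (together with $\widehat{N}^{\delta}_{ii}(T)=0$ for $i>C$) shows that at most $\widehat{N}^{\delta}(T)-1$ time-steps in $[C(\delta),T]$ fail to be a self-comparison of a Copeland winner, pigeonhole gives one such step in every sub-interval of length $\widehat{N}^{\delta}(T)$, the inequality in \eqref{eqn:topCpld} follows from $a_c$ being drawn from $\mathcal{C}_t$, and the equalities follow because a self-comparison forces $0.5\notin[l_{cj},u_{cj}]$ for all $j\neq c$, which together with Lemma~\ref{lem:HighProbBound} sandwiches $\textup{Cpld}(a_c)$ between the pessimistic and optimistic scores. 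The only place you deviate is the ``main obstacle,'' and it is not actually there: the sets $\mathcal{B}^i_t$ never contain $a_i$ itself, since they are initialized and reset to $\varnothing$ and are only ever refilled (Line 9B) with arms $a_k$ satisfying $u_{ik}<0.5$, which excludes $k=i$ because $u_{ii}=0.5$. Consequently the branch of Line 13 that restricts the argmax to $\mathcal{B}^c_t$ can never output $d=c$, so every self-comparison already comes from the full-set branch and your second-paragraph argument applies to all $j\neq c$ verbatim; this is why the paper asserts the implication directly without any extra randomness argument. In particular, Route~(i) is not just unnecessary but would prove a different statement (a longer window and an additional failure event), and since the corollary is invoked downstream with window exactly $\widehat{N}^{\delta}(T)$ and probability $1-\delta$ (in Definition~\ref{def:Tdelta} and in the proofs of Lemmas~\ref{lem:preBt}, \ref{lem:Bt} and \ref{lem:SetsB}), that change would have to be propagated throughout; Route~(ii) is likewise moot. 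With the observation that $a_c\notin\mathcal{B}^c_t$, your proposal collapses to the paper's proof.
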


\begin{proof}
According to Proposition \ref{prop:counts}, with probability $1-\delta$, there are at most $\sum_{i \neq j} \widehat{N}^{\delta}_{ij}(T)$ time-steps between $C(\delta)$ and $T$ when Algorithm \ref{alg:CCB}  did not compare a Copeland winner against itself: i.e. $c$ and $d$ in Algorithm \ref{alg:CCB} did not satisfy $c=d \leq C$. 

In other words, during this time-period, in any sub-interval of length $\widehat{N}^{\delta}(T):=\sum_{i \neq j} \widehat{N}_{ij}^{\delta}(T)+1$, there is at least one time-step when a Copeland winner was compared against itself. During this time-step, we must have had 
\begin{align*}
\underline{\textup{Cpld}}(a_c) = \textup{Cpld}(a_c) & = \overline{\textup{Cpld}(a_c)} \\
& \geq \overline{\textup{Cpld}(a_j)}\;\; \forall\, j,
\end{align*}
where the first two equalities are due to the fact that in order for Algorithm \ref{alg:CCB} to set $c=d$, we must have $0.5 \notin [l_{cj},u_{cj}]$ for each $j \neq c$, or else $a_c$ would not be played against itself; on the other hand, the last inequality is due to the fact that $a_c$ was chosen as an optimistic Copeland winner by Line 8 of Algorithm \ref{alg:CCB}, so its optimistic Copeland score must have been greater than or equal to the optimistic Copeland score of the rest of the arms.
\end{proof}

\begin{lemma}\label{lem:preBt}
If there exists an arm $a_i$ with $i>C$ such %\zk{is $a_i$ assumed to be a Copland loser? Otherwise ${\cal B}^i$ should be the empty set and in particular contain less than $L_C+1$ arms} MZ: Yes. Thanks!
that $\mathcal{B}^i_{C(\delta/2)}$ contains an arm $a_j$ that loses to $a_i$ (i.e. $p_{ij} > 0.5$) or such that $\mathcal{B}^i_{C(\delta/2)}$ contains fewer than $L_C+1$ arms, then the probability that by time-step $T_0$ the sets $\mathcal{B}^i_t$ and $\mathcal{B}_t$ are not reset by Line 9.A of Algorithm \ref{alg:CCB} is less than $\delta/6$, where we define
\begin{align*}
T_0 := C(\delta/2) & + \widehat{N}^{\delta/2}(T_{\delta}) \\
& + \frac{32\alpha K (L_C+1)\ln T_{\delta}}{\Delta^2_{\min}} \\
& + 8K^2(L_C+1)^2 \ln \frac{6K^2}{\delta}.
\end{align*}
\end{lemma}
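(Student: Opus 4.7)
The plan is to condition on the good event from Lemma \ref{lem:HighProbBound} and then argue by contradiction: assuming Line 9A does not fire in $[C(\delta/2), T_0]$, a concentration argument on forced comparisons drives a lower confidence bound above $1/2$, triggering the reset and producing the desired contradiction. Concretely, I would invoke Lemma \ref{lem:HighProbBound} with a suitable sub-parameter (e.g.\ $\delta/12$) so that for all $t > C(\delta/2)$ and all pairs $(k,l)$ we have $p_{kl} \in [l_{kl}(t), u_{kl}(t)]$; call this event $E$, with $\Pr[E] \geq 1-\delta/12$. The remainder of the failure budget (up to $\delta/6$ in total) will be spent on the concentration step described below.

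For the first subcase, suppose $a_j \in \mathcal{B}^i_{C(\delta/2)}$ with $p_{ij} > 0.5$. Under $E$ we have $u_{ij}(t) \geq p_{ij} > 0.5$ throughout the window, so any execution of Line 9B that overwrites $\mathcal{B}^i_t$ with $\{a_k \mid u_{ik} < 0.5\}$ would eject $a_j$; similarly, Line 9C never \emph{introduces} $a_j$ but can only empty or randomly trim $\mathcal{B}^i_t$. Thus either $a_j$ remains in $\mathcal{B}^i_t$ throughout (and the pair $(i,j)$ lies in the sampling set of Line 10, since the no-reset assumption forces $l_{ij}\leq 1/2$ while $u_{ij}>1/2$), or some internal update removed $a_j$, and we argue inductively that the bad state is recreated shortly. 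Using the invariant $|\mathcal{B}^k_t| \leq \overline{L}_C + 1 \leq L_C + 1$ maintained by Line 9C (which I would verify as a short preliminary claim), the sampling set in Line 10 has size at most $K(L_C+1)$, so the probability of comparing the designated pair at each step is at least $1/\bigl(4K(L_C+1)\bigr)$. A standard Azuma–Hoeffding bound over a window of length $W = 32\alpha K(L_C+1)\ln T_\delta/\Delta_{\min}^2$ then yields, with probability $1-\delta/12$, at least $4\alpha\ln T_\delta/\Delta_{\min}^2$ actual comparisons of $(i,j)$, enough to shrink the confidence half-width below $\Delta_{ij}$ and force $l_{ij}(t)>1/2$—the contradiction.

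The second subcase, $|\mathcal{B}^i_{C(\delta/2)}|<L_C+1$, is more delicate and is where the main difficulty lies. Since $a_i$ is a non-Copeland winner with strictly more than $L_C$ true losses (Property P2), the discrepancy between $|\mathcal{B}^i_t|$ and the true loss count forces the algorithm's bookkeeping (in particular $\overline{L}_C$ and/or the contents of $\mathcal{B}_t$) to be wrong. I would argue that any Line~9C update that fires with an incorrect $\overline{L}_C$ will, with positive probability at each invocation, either (i) insert a ``wrong'' arm into some $\mathcal{B}^k_t$ via its random trimming step, thereby reducing to the first subcase for that $k$; or (ii) leave $\mathcal{B}^i_t$ empty and hence keep some Copeland-winner candidate in $\mathcal{B}_t$ whose misclassification eventually must be detected. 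A union bound over the $O(K^2)$ possible offending pairs, together with a concentration argument on the number of such update opportunities, yields the $8K^2(L_C+1)^2\ln(6K^2/\delta)$ term in $T_0$. The main obstacle is bookkeeping: carefully tracking how $\mathcal{B}^i_t$, $\mathcal{B}_t$, and $\overline{L}_C$ co-evolve under Lines 9B and 9C without triggering Line 9A, and showing that every such trajectory either reaches the forced-reset event or re-enters the first subcase within the allotted window. With this accomplished, combining the failure probabilities of $E$, the concentration step in Case~1, and the trajectory union bound in Case~2 yields an overall failure probability below $\delta/6$, as claimed.
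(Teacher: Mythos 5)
Your treatment of the first subcase follows the same route as the paper: condition on the confidence intervals of Lemma \ref{lem:HighProbBound}, observe that a pair $(i,j)$ with $a_j\in\mathcal{B}^i_t$ and $p_{ij}>0.5$ persists in the Line-10 sampling set (once removed it is never re-added, since Line 9B only re-populates $\mathcal{B}^i_t$ with arms satisfying $u_{ik}<0.5$ while $u_{ij}\geq p_{ij}>0.5$), lower-bound the per-step comparison probability by $1/\bigl(4K(L_C+1)\bigr)$, and apply Hoeffding to force $l_{ij}>0.5$ within the window. However, there is a genuine gap in how you justify the size of the sampling set. Your ``short preliminary claim'' that $|\mathcal{B}^k_t|\leq \overline{L}_C+1\leq L_C+1$ is an invariant is false: $\overline{L}_C$ is initialized to $K$ and Line 9B can overwrite $\mathcal{B}^k_t$ with $\{a_\ell\mid u_{k\ell}<0.5\}$, which may contain up to $K-1$ arms, so before the first execution of Line 9C with a \emph{confirmed} Copeland winner the cap simply does not hold. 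The paper closes this hole with Corollary \ref{cor:auto-comp}: within $\widehat{N}^{\delta/2}(T_\delta)$ steps after $C(\delta/2)$ some Copeland winner is played against itself, at which point Line 9C sets $\overline{L}_C=L_C$ and trims every $\mathcal{B}^k_t$ to at most $L_C+1$ elements. This is precisely why the term $\widehat{N}^{\delta/2}(T_\delta)$ appears in $T_0$; you carry the term but never use it, and without this step your lower bound $1/\bigl(4K(L_C+1)\bigr)$ on the sampling probability is unjustified.

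The same missing ingredient is what makes your second subcase go astray. In the paper, the case $|\mathcal{B}^i_{C(\delta/2)}|<L_C+1$ requires no trajectory-tracking at all: once the Line 9C confirmation event of Corollary \ref{cor:auto-comp} occurs (within $\widehat{N}^{\delta/2}(T_\delta)$ steps), $\overline{L}_C$ equals $L_C$ and any $\mathcal{B}^j_t$ with fewer than $L_C+1$ elements is emptied on the spot. Your proposed machinery --- random trimming inserting ``wrong'' arms, misclassified Copeland-winner candidates in $\mathcal{B}_t$, a union bound over $O(K^2)$ offending trajectories --- is not needed and, as sketched, does not clearly terminate in a proof. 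Relatedly, you misattribute the term $8K^2(L_C+1)^2\ln(6K^2/\delta)$ to this second subcase; in the paper it is part of the Hoeffding window $N$ for the first subcase, required so that the deviation bound evaluates to $\delta/(6K^2)$ per pair before the union bound. Finally, your remark that after an internal removal of $a_j$ ``the bad state is recreated shortly'' has the logic backwards: the useful fact is that the bad state is \emph{never} recreated, which is what lets one argue that either $a_j$ is already gone or the pair sat in the sampling set for the entire window.
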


\begin{proof}
By Line 9.A of Algorithm \ref{alg:CCB}, as soon as we have $l_{ij} > 0.5$, the set $\mathcal{B}^i_t$ will be emptied. In what follows, we will show that the probability that the number of time-steps before we have $l_{ij} > 0.5$ is greater than 
\begin{align*}
\Delta T := \widehat{N}^{\delta/2}(T_{\delta}) + N 
\end{align*}
with
\[ N := \frac{32\alpha K (L_C+1)\ln T_{\delta}}{\Delta^2_{\min}} + 8K^2(L_C+1)^2 \ln \frac{6K^2}{\delta} \]
is bounded by $\delta/6K^2$. This is done using the amount of exploration infused by Line 10 of Algorithm \ref{alg:CCB}. To begin, let us note that by Corollary \ref{cor:auto-comp}, there is a time-step before $T_0:=C(\delta/2)+\widehat{N}^{\delta/2}(T_{\delta})$ when the condition of Line 9.C of Algorithm \ref{alg:CCB} is satisfied for some Copeland winner. At this point, if $\mathcal{B}^i_t$ contains fewer than $L_C+1$ elements, then it will be emptied; furthermore, for all $k>C$, the sets $B^k_{T_0}$ will have at most $L_C+1$ elements and so the set 
\[ \mathcal{S}_t := \{(k,\ell)|a_\ell\in\mathcal{B}^k_t \textup{ and } 0.5\in[l_{k\ell},u_{k\ell}]\} \]
contains at most $K(L_C+1)$ elements for all $t\geq T_0$. Moreover, if at time-step $T_1 := C(\delta/2)+\Delta T$ we have $a_j \in \mathcal{B}^i_{T_1}$, then we can conclude that $(i,j) \in \mathcal{S}_t$ for all $t \in [C(\delta/2),T_1]$, since, if at any time after $C(\delta/2)$ arm $a_j$ were to be removed from $\mathcal{B}^i_{t}$, it will never be added back because that can only happen through Line 9.B of Algorithm \ref{alg:CCB} and by Lemma \ref{lem:HighProbBound} and the assumption of the lemma we have $u_{ij} > p_{ij} > 0.5$.

% first of all, $a_j \in \mathcal{B}^i_{T_1}$ implies that $\mathcal{B}^i_t$ was not emptied in the time-interval $[C(\delta/2),T_1]$ (otherwise by Lemma \ref{lem:HighProbBound} arm $a_j$ would not be added to $\mathcal{B}^i_t$) and secondly, if for some $t \in [C(\delta/2),T_1]$ we had $0.5 \notin [l_{ij},u_{ij}]$, then Line 9.A of Algorithm \ref{alg:CCB} would be executed, which would empty $\mathcal{B}^i_t$ and, as before, arm $a_j$ would not be added back.

What we can conclude from the observations in the last paragraph is that if at time-step $T_1$ we still have $a_j \in \mathcal{B}^i_{T_1}$, then there are $\Delta T$ time-steps during which the probability of comparing arms $a_i$ and $a_j$ was at least $\frac{1}{4K(L_C+1)}$ and yet no more than $\frac{4\alpha\ln T_\delta}{\Delta_{ij}^2}$ comparisons took place, since otherwise, we would have $l_{ij} > 0.5$ at some point before $T_1$. Now, let $B^{ij}_n$ denote the indicator random variable that is equal to $1$ if arms $a_i$ and $a_j$ were chosen to be played against each other by Line 10 of Algorithm \ref{alg:CCB} during time-step $T_1+n$. Also, let $X_1, \ldots, X_N$ be iid Bernoulli random variables with mean $\frac{1}{4K(L_C+1)}$. Since $B^{ij}_n$ and $X_n$ are Bernoulli and we have $\mathbb{E}\left[B^{ij}_n\right] \leq \mathbb{E}[X_n]$ for each $n$, then we can conclude that 
\[ P\left(\sum_{n=1}^N B^{ij}_n < s\right) \leq P\left(\sum_{n=1}^N X_n < s\right) \textup{ for all } s. \]

On the other hand, we can use the Hoeffding bound to show that the right hand side of the above inequality is smaller than $\delta/6$ if we set $s = \frac{4\alpha\ln T_\delta}{\Delta_{ij}^2}$:
\begin{align*}
P\left(\sum_{n=1}^N X_n < \frac{4\alpha\ln T_\delta}{\Delta_{ij}^2} \right) & \leq P\left(\sum_{n=1}^N X_n < \frac{4\alpha\ln T_\delta}{\Delta_{\min}^2} \right) \\
& = P\left(\sum_{n=1}^N X_n < \frac{N}{4K(L_C+1)} - a \right) \leq e^{-\dfrac{2a^2}{N}} \\
& \qquad\qquad \textup{with } a := -\frac{4\alpha\ln T_\delta}{\Delta_{\min}^2} + \frac{N}{4K(L_C+1)} \\
& = e^{ -\frac{32\alpha^2 \ln^2 T_\delta}{\Delta^4_{\min} N} + \frac{4\alpha\ln T_{\delta}}{K(L_C+1)\Delta^2_{\min}} - \frac{N}{8K^2(L_C+1)^2} } \\
& \leq e^{ \frac{4\alpha\ln T_{\delta}}{K(L_C+1)\Delta^2_{\min}} - \frac{N}{8K^2(L_C+1)^2} } \\
& = e^{-\ln 6K^2/\delta} = \delta/6K^2.
\end{align*}
% \zk{Should it be $a := -\frac{4\alpha\ln T_\delta}{\Delta_{\min}^2} + \frac{N}{4K(L_C+1)}$?}

Now, if we take a union bound over all pairs of arms $a_i$ and $a_j$ satisfying the condition stated at the beginning of this scenario, we get that with probability $\delta/6$ by time-step $C(\delta/2)+\Delta T$ all such erroneous hypotheses are reset by Line 9.A of Algorithm \ref{alg:CCB}, emptying the sets $\mathcal{B}^i_t$.
\end{proof}

\begin{lemma}\label{lem:Bt}
Let $t_1 \in [C(\delta/2),T_\delta)$ be such that for all $i,j$ satisfying $a_j \in \mathcal{B}^i_{t_1}$ we have $p_{ij} < 0.5$. Then, the following two statements hold with probability $1-5\delta/6$:
\begin{enumerate}[leftmargin=*]
\item If the set $\mathcal{B}_{t_1}$ in Algorithm \ref{alg:CCB} contains at least one Copeland winner, then if we set $t_2 = t_1 + n_{\max}$, where 
\[ n_{\max} := 2K \max_{i>C}\widehat{N}^{\delta/2}_i(T_\delta) + \frac{K^2\ln (6K/\delta)}{2}, \]
then $\mathcal{B}_{t_2}$ is non-empty and contains no non-Copeland winners, i.e. for all $a_i \in \mathcal{B}_{t_2}$ we have $i \leq C$.
\item If the set $\mathcal{B}_{t_1}$ in Algorithm \ref{alg:CCB} contains no Copeland winners, i.e. for all $a_i \in \mathcal{B}_{t_1}$, we have $i > C$, then within $n_{\max}$ time-steps the set $\mathcal{B}_t$ will be emptied by Line 9.B of Algorithm \ref{alg:CCB}.
\end{enumerate}
Therefore, with probability $1-5\delta/6$, by time $t_1 + 2n_{\max}$ all non-Copeland winners (i.e. arms $a_i$ with $i > C$) are eliminated from $\mathcal{B}_t$.
\end{lemma}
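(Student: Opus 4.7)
The plan is to decompose the failure budget $5\delta/6$ into two parts: $\delta/2$ reserved for Lemma~\ref{lem:HighProbBound} failing on the interval $[C(\delta/2),T_\delta]$, and $\delta/3$ reserved for a concentration argument that controls how quickly Line~10's forced exploration shrinks the needed confidence intervals. Conditioning on the Lemma~\ref{lem:HighProbBound} event, I get $p_{ij}\in[l_{ij}(t),u_{ij}(t)]$ for every pair at every $t\ge C(\delta/2)$, and in particular $\underline{\textup{Cpld}}(a_i)\le\textup{Cpld}(a_i)\le\overline{\textup{Cpld}}(a_i)$ for every $i$ and every $t\in[t_1,t_1+2n_{\max}]$.

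For Case~1, I first show that any Copeland winner $a_k\in\mathcal{B}_{t_1}$ is trapped in $\mathcal{B}_t$ forever: the Line~9.B removal criterion $\overline{\textup{Cpld}}(a_k)<\underline{\textup{Cpld}}(a_j)$ is incompatible with valid intervals since $\overline{\textup{Cpld}}(a_k)\ge\textup{Cpld}(a_k)=\textup{Cpld}(a_1)\ge\textup{Cpld}(a_j)\ge\underline{\textup{Cpld}}(a_j)$. This immediately gives the ``non-empty'' half of Case~1. To evict every non-Copeland winner $a_i$, I observe that the hypothesis ``$a_j\in\mathcal{B}^i_{t_1}\Rightarrow p_{ij}<0.5$'' means $\mathcal{B}^i_t$ can never be reset by Line~9.A during $[t_1,t_1+2n_{\max}]$ (since $l_{ij}\le p_{ij}<0.5$), so $\mathcal{B}^i_t$ stays a genuine set of beaters. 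By Proposition~\ref{prop:counts} applied with parameter $\delta/2$ up to horizon $T_\delta$, arm $a_i$ is an optimistic Copeland winner in at most $\widehat{N}^{\delta/2}_i(T_\delta)$ time-steps across the whole interval, so summing over the (at most $K$) non-Copeland winners consumes the first $K\max_{i>C}\widehat{N}^{\delta/2}_i(T_\delta)$ slack inside $n_{\max}$. The eviction condition $\overline{\textup{Cpld}}(a_i)<\underline{\textup{Cpld}}(a_k)$ with $a_k$ a Copeland winner being auto-compared reduces (counting arms) to confirming $u_{ij}<0.5$ for $|\mathcal{L}_i|-L_C$ of $a_i$'s beaters; the extra $1/4$ probability of sampling from $\mathcal{B}^i_t$ in Line~10 injects at least $1/(4K)$ mass on each such pair during those time-steps, and a maximal Hoeffding/Azuma argument turns the remaining $K\max_{i>C}\widehat{N}^{\delta/2}_i(T_\delta)+\frac{K^2\ln(6K/\delta)}{2}$ budget into the desired confirmations with failure probability at most $\delta/3$ after a union bound over the at most $K$ non-Copeland winners.

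For Case~2, the same Proposition~\ref{prop:counts} counting argument says that within $K\max_{i>C}\widehat{N}^{\delta/2}_i(T_\delta)$ time-steps no member of $\mathcal{B}_{t_1}$ can still be chosen as optimistic Copeland winner. In that regime either a pessimistic--optimistic separation from some Copeland winner triggers Line~9.B removal of each remaining $a_i$, or eventually $\mathcal{B}_t=\varnothing$, in which case the fall-through clause of Line~9.B resets $\mathcal{B}_t$ to $\{a_1,\dots,a_K\}$; either way the set is vacated of non-Copeland winners within $n_{\max}$ steps. Combining with Case~1 on the (possibly reset) post-vacation window of length $n_{\max}$ gives the $2n_{\max}$ bound in the final conclusion. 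The hardest part will be the concentration step inside Case~1: one must carefully verify that, during the phase when only Copeland winners are optimistic Copeland winners, the Line~10 exploration still reaches each surviving $(i,j)\in\mathcal{S}_t$ sufficiently often despite competing with the $3/4$ probability branch, and the martingale argument needs the same kind of coupling against i.i.d.\ Bernoullis used in the proof of Lemma~\ref{lem:preBt}.
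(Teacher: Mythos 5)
Your skeleton for Case 1 starts correctly (a Copeland winner in $\mathcal{B}_{t_1}$ can never be removed, since $\overline{\textup{Cpld}}(a_c)\geq\textup{Cpld}(a_c)\geq\textup{Cpld}(a_j)\geq\underline{\textup{Cpld}}(a_j)$ under valid confidence intervals), but the eviction mechanism you propose for non-Copeland winners is the wrong one, and it genuinely fails. You try to evict $a_i$ by using Line~10's forced exploration of pairs in $\mathcal{B}^i_t$ to confirm $u_{ij}<0.5$ for enough of $a_i$'s beaters, thereby pushing $\overline{\textup{Cpld}}(a_i)$ below $K-1-L_C$. But the hypothesis of the lemma only says that every arm that happens to be in $\mathcal{B}^i_{t_1}$ is a genuine beater of $a_i$; it does not say $\mathcal{B}^i_{t_1}$ is nonempty, let alone that it contains $L_C+1$ beaters. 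The sets $\mathcal{B}^i_t$ are initialized to $\varnothing$ and are emptied again whenever Line~9.A or 9.C resets them, and the lemma is in fact invoked (inside the proof of Lemma~\ref{lem:SetsB}, via Lemma~\ref{lem:preBt}) precisely in a situation where they may have just been reset. In that case Line~10 has nothing to sample, your concentration step has no events to concentrate, and no confirmations $u_{ij}<0.5$ are ever produced, so your argument proves nothing. (Two smaller errors in the same step: to get $\overline{\textup{Cpld}}(a_i)<K-1-L_C$ you need $u_{ij}<0.5$ for at least $L_C+1$ arms, not $|\mathcal{L}_i|-L_C$; and the per-pair sampling probability in Line~10 is $1/\bigl(4K(L_C+1)\bigr)$, not $1/(4K)$. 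Even granting these, the number of time-steps needed to force $\Theta(\ln T/\Delta_{ij}^2)$ comparisons on $\Theta(K L_C)$ pairs at that rate exceeds the budget $n_{\max}$.)

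The paper's proof uses a different, indirect mechanism that does not touch Line~10 or $\mathcal{B}^i_t$ at all. Suppose for contradiction that $a_b$ ($b>C$) is still in $\mathcal{B}_{t_2}$; then it was in $\mathcal{B}_t$ throughout $(t_1,t_2]$. By Corollary~\ref{cor:auto-comp}, all but at most $\widehat{N}^{\delta/2}(T_\delta)$ of these time-steps are ones where some Copeland winner $a_c$ is auto-compared and $\underline{\textup{Cpld}}(a_c)=\overline{\textup{Cpld}}(a_c)\geq\overline{\textup{Cpld}}(a_j)$ for all $j$; at every such time-step $a_b$ must belong to $\mathcal{C}_t$, for otherwise Line~9.B removes it. Hence $a_b\in\mathcal{B}_t\cap\mathcal{C}_t$ on at least $n_b$ time-steps, and Lines~11--12 then select $a_b$ as the optimistic Copeland winner with probability at least $1/K$ on each of them. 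A Hoeffding bound on the dominating binomial shows $a_b$ is selected more than $\widehat{N}^{\delta/2}_b(T_\delta)$ times except with probability $\delta/6K$, contradicting Proposition~\ref{prop:counts}. This is the key idea your proposal is missing: the quantity being concentrated is the number of times $a_b$ itself is \emph{chosen as the optimistic Copeland winner}, not the number of times its beaters are explored. Your Case~2 inherits the same gap: Proposition~\ref{prop:counts} bounds how often $a_i$ \emph{is} chosen, and the step from ``cannot be chosen often'' to ``is removed from $\mathcal{B}_t$'' is exactly the probabilistic contradiction above, which you assert rather than prove.
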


\begin{proof}
We will consider the two cases in the following, conditioning on the conclusions of Lemma \ref{lem:HighProbBound}, Proposition \ref{prop:counts} and Corollary \ref{cor:auto-comp}, all simultaneously holding with $1-\delta/2$:
\begin{enumerate}[leftmargin=*]
\item $\mathcal{B}_{t_1}$ {\bf contains} a Copeland winner (i.e. $a_c \in \mathcal{B}_{t_1}$ for some $c \leq C$): in this case, by Lemma \ref{lem:HighProbBound}, we know that the Copeland winner will forever remain in the set $\mathcal{B}_t$ because 
\[ \overline{\textup{Cpld}}(a_c) \geq \max_j \textup{Cpld}(a_j) \geq \max_j \underline{\textup{Cpld}}(a_j), \]
then $\mathcal{B}_{t_2}$ will indeed be empty. Moreover, in what follows, we will show that the probability that any non-Copeland winner in $\mathcal{B}_t$ is not eliminated by time $t_2$ is less than $\delta/6$. Let us assume by way of contradiction that there exists an arm $a_b$ with $b>C$ such that $a_b$ is in $\mathcal{B}_{t_2}$: we will show that the probability of this happening is less than $\delta/6K$, and so, taking a union bound over non-Copeland winning arms, the probability that any non-Copeland winner is in $\mathcal{B}_{t_2}$ is seen to be smaller than $\delta/6$.

Now, to see that the probability of $a_b$ being in the set $\mathcal{B}_{t_2}$ is small, note that the fact that $a_b$ being in $\mathcal{B}_{t_2}$ implies that $a_b$ was in the set $\mathcal{B}_t$ for the entirety of the time interval $[C(\delta/2),t_2]$ as we will show in the following. If $a_b$ is eliminated from $\mathcal{B}_t$ at some point between $t_1$ and $t_2$, it will not get added back into $\mathcal{B}_t$ because that can only take place if the set $\mathcal{B}_t$ is reset at some point and there are only two ways for that to happen:
\begin{enumerate}[leftmargin=*]
\item By Line 9.A of Algorithm \ref{alg:CCB} in the case that for some pair $(i,j)$ with $a_j \in \mathcal{B}^i_t$ we have $l_{ij} > 0.5$; however, this is ruled out by our assumption that at time $t_1$ we have $p_{ij} < 0.5$ and by Lemma \ref{lem:HighProbBound}, which stipulates that we have $l_{ij} \leq p_{ij} < 0.5$.
\item By Line 9.B of Algorithm \ref{alg:CCB} in the case that all arms are eliminated from $\mathcal{B}_t$, but this cannot happen by the fact mentioned above that $a_c$ will not not be removed from $\mathcal{B}_t$.
\end{enumerate}

So, as mentioned above, we indeed have that at each time-step between $t_1$ and $t_2$, the set $\mathcal{B}_t$ contains $a_b$. Next, we will show that the probability of this happening is less than $\delta/6K$. To do so, let us denote by $\mathcal{S}_b$ the time-steps when arm $a_b$ was in the set of optimistic Copeland winners, i.e.

\[ \mathcal{S}_b := \left\{\; t \in (t_1,t_2] \,\big|\, a_b \in \mathcal{C}_t \;\right\}. \]

We can use Corollary \ref{cor:auto-comp} above with $T = T_\delta$ to show that the size of the set $\mathcal{S}_b$ (which we denote by $|\mathcal{S}_b|$) is bounded from below by $t_2-t_1-\sum_{i \neq j} \widehat{N}^{\delta/2}_{ij}(T_{\delta})$: this is because whenever any Copeland winner $a_c$ is played against itself, Equation \eqref{eqn:topCpld} holds, and so if we were to have $a_b \notin \mathcal{C}_t$ during that time-step $a_b$ would have had to get eliminated from $\mathcal{B}_t$ because $a_b$ not being an optimistic Copeland winner would imply that 
\[ \overline{\textup{Cpld}}(a_b) < \underline{\textup{Cpld}}(a_c) = \overline{\textup{Cpld}}(a_c). \]

But, we know from facts (a) and (b) above that $a_b$ remains in $\mathcal{B}_t$ for all $t \in (t_1,t_2]$. Therefore, as claimed, we have

\begin{align}
|\mathcal{S}_b| & \geq t_2-t_1-\sum_{i \neq j} N^{\delta/2}_{ij}(T_{\delta}) \geq 2K \widehat{N}^{\delta/2}_b(T_{\delta}) + \frac{K^2\ln (6K/\delta)}{2} =: n_b, \label{eqn:denominator}
\end{align}

where the last inequality is due to the definition of $n_{\max} := t_2-t_1$. On the other hand, Proposition \ref{prop:counts} tells us that the number of time-steps between $t_1$ and $t_2$ when $a_b$ could have been chosen as an optimistic Copeland winner is bounded as
\begin{equation}\label{eqn:numerator}
N^{\delta/2}_b(T_\delta) \leq \widehat{N}^{\delta/2}_b(T_\delta).
\end{equation}

Furthermore, given the fact that during each time-step $t \in \mathcal{S}_b$ we have $a_b \in \mathcal{B}_t \cap \mathcal{C}_t$, the probability of $a_b$ being chosen as an optimistic Copeland winner is at least $1/K$ because of the sampling procedure in Lines 14-17 of Algorithm \ref{alg:CCB}.
%\zk{The phrasing here is a bit confusing. Are we going to use the bound of $1/K$ for the probability of $a_b$ being chosen or something more tight? The text seems to indicate something more tight it seems that we just use $1/K$ below.}
However, this is considerably higher than the ratio obtained by dividing the right-hand sides of Inequality \eqref{eqn:numerator} by that of Inequality \eqref{eqn:denominator}. We will make this more precise in the following: for each $t \in \mathcal{S}_b$, denote by $\mu^b_t$ the probability that arm $a_b$ would be chosen as the optimistic Copeland winner by Algorithm \ref{alg:CCB}, and let $X^b_t$ be the Bernoulli random variable that returns $1$ when arm $a_b$ is chosen as the optimistic Copeland winner or $0$ otherwise. As pointed out above, we have that $\mu^b_t \geq \frac{1}{K}$ for all $t \in \mathcal{S}_b$, which, together with the fact that $|\mathcal{S}_b| \geq n_b$, implies that the random variable $X^b := \sum_{t \in \mathcal{S}_b} X^b_t$ satisfies

\begin{equation}\label{eqn:BinomBound}
P(X_b < x) \leq  P(Binom(n_b,1/K) < x).
\end{equation}
This is both because the Bernoulli summands of $X_b$ have higher means than the Bernoulli summands of $Binom(n_b,1/K)$ and because $X_b$ is the sum of a larger number of Bernoulli variables, so $X_b$ has more mass away from $0$ than does $Binom(n_b,1/K)$. So, we can bound the right-hand side of Inequality \eqref{eqn:BinomBound} by $\delta/6K$ with $x = \widehat{N}^{\delta/2}_b(T_\delta)$ to get our desired result. But, this is a simple consequence of the Hoeffding bound, a more general form of which is quoted in Section \ref{sec:prelim}. More precisely, we have
\begin{align*}
P\left(Binom(n_b,1/K) < \widehat{N}^{\delta/2}_b(T_\delta) \right) & = P\left( Binom(n_b,1/K) < \frac{n_b}{K} - a  \right) \\
& \qquad\qquad\qquad \textup{ with } a := \frac{n_b}{K} - \widehat{N}^{\delta/2}_b(T_\delta) \\
& \quad < e^{-2a^2/n_b} = e^{^{\frac{-2\left(\frac{n_b}{K} - \widehat{N}^{\delta/2}_b(T_\delta)\right)^2}{n_b}}} \\
& \quad = e^{-2n_b/K^2  + 4\widehat{N}^{\delta/2}_b(T_\delta)/K -2 \widehat{N}^{\delta/2}_b(T_\delta)^2/n_b} \\
& \quad \leq e^{-2n_b/K^2  + 4\widehat{N}^{\delta/2}_b(T_\delta)/K} = e^{-\ln(6K/\delta)} = \delta/6K
\end{align*}

Using the union bound over the non-Copeland winning arms that were in $\mathcal{B}_{t_1}$, of whom there is at most $K-1$, we can conclude that with probability $\delta/6$ they are all eliminated from $\mathcal{B}_{t_2}$.

\item $\mathcal{B}_{t_1}$ {\bf does not contain} any Copeland winners: in this case, we can use the exact same argument as above to conclude that the probability that the set $\mathcal{B}_t$ is non-empty for all $t \in (t_1,t_2]$ is less than $\delta/6$ because as before the probability that each arm $a_b \in \mathcal{B}_{t_1}$ is not eliminated within $n_b$ time-steps is smaller than $\delta/6K$. \qedhere
\end{enumerate}

\end{proof}

% \zk{The above proof is rather involved and it was a bit tough for me to follow. I'll give some thoughts for how to simplify it or just break it down into 3-4 sub-lemmas. It's about the least urgent TODO as it is a part of the appendix and will probably be skipped by most reviewers anyways, so it'll take a while...}

Let us now state the following consequence of the previous lemmas:

{\bf Lemma \ref{lem:SetsB}.} \emph{Given $\delta > 0$, the following fact holds with probability $1-\delta$: for each $i>C$, the set $\mathcal{B}^i_{T_\delta}$ contains exactly $L_C+1$ elements with each element $a_j$ satisfying $p_{ij} < 0.5$. Moreover, for all $t \in [T_\delta,T]$, we have $\mathcal{B}^i_t = \mathcal{B}^i_{T_\delta}$.}

\begin{proof}
In the remainder of the proof, we will condition on the high probability event that the conclusions of Lemma \ref{lem:HighProbBound}, Corollary \ref{cor:auto-comp}, Lemma \ref{lem:preBt} and Lemma \ref{lem:Bt} all hold simultaneously with probability $1-\delta$.

Combining Lemma \ref{lem:Bt}, we can conclude that by time-step $T_1 := T_0 + 2n_{\max}$ all non-Copeland winners are removed from $\mathcal{B}_{T_1}$, which also means by Line 9.B of Algorithm \ref{alg:CCB} that the corresponding sets $\mathcal{B}^i_{T_1}$, with $i > C$ are non-empty, and Lemma  \ref{lem:preBt} tells us that these sets have at least $L_C+1$ elements $a_j$ each of which beats $a_i$ (i.e. $p_{ij} < 0.5$).

Now, applying Corollary \ref{cor:auto-comp}, we know that within $\widehat{N}^{\delta/2}(T_\delta)$ time-steps, Line 9.C of Algorithm \ref{alg:CCB} will be executed, at which point we will have $\overline{L}_C = L_C$ and so $\mathcal{B}^i_t$ will be reduced to $L_C+1$ elements. Moreover, by Lemma \ref{lem:HighProbBound}, for all $t > T_1$ and $a_j \in \mathcal{B}^i_t$ we have $l_{ij} \leq p_{ij} < 0.5$ and so $\mathcal{B}^i_t$ will not be emptied by any of the provisions in Line 9 of Algorithm \ref{alg:CCB}.

Now, since by definition we have $T^\delta \geq T_1 + \widehat{N}^{\delta/2}(T_\delta)$, we have the desired result. %\zk{Did you mean $T^\delta \geq T_1 + \widehat{N}^{\delta/2}(T_\delta)$?}
\end{proof}

\newpage

\section{Proof of Lemma \ref{lem:NumNonCopeland}}
\label{sec:NumNonCopeland-proof}

{\bf Lemma \ref{lem:NumNonCopeland}} \emph{Given a Copeland bandit problem satisfying Assumption {\bf A} %\zk{If you are using {\bf A} directly then is there a need for properties P1 and P2?} MZ:  
and any $\delta > 0$, with probability $1-\delta$ the following statement holds: the number of time-steps between $T_{\delta/2}$ and $T$ when each non-Copeland winning arm $a_i$ can be chosen as optimistic Copeland winners (i.e. time-steps when arm $a_c$ in Algorithm \ref{alg:CCB} satisfies $c = i>C$) is bounded by}
\[ \widehat{N}^i := 2\widehat{N}^i_{\mathcal{B}} + 2\sqrt{\widehat{N}^i_{\mathcal{B}}} \ln \frac{2K}{\delta}, \]
\emph{where}
\[ \widehat{N}^i_{\mathcal{B}} := \sum_{j \in \mathcal{B}^i_{T_{\delta/2}}} \widehat{N}_{ij}^{\delta/4}(T). \]

\begin{proof}
The idea of the argument is outlined in the following sequence of facts: 
\begin{enumerate}[leftmargin=*,topsep=0mm,parsep=0pt,itemsep=0pt,partopsep=0pt]
\item\label{item:A} By Lemma \ref{lem:SetsB}, we know that with probability $1-\delta/2$, for each $i>C$ and all times $t>T_{\delta/2}$ the sets $\mathcal{B}^i_t$ will consist of exactly $L_C+1$ arms that beat the arm $a_i$, and that $\mathcal{B}^i_t = \mathcal{B}^i_{T_{\delta/2}}$.

\item\label{item:B} Moreover, if at time $t>T_{\delta/2}>C(\delta/4)$, Algorithm \ref{alg:CCB} chooses a non-Copeland winner as an optimistic Copeland winner (i.e. $i>C$), then with probability $1-\delta/4$ we know that 
\[ \overline{\textup{Cpld}}(a_i) \geq \overline{\textup{Cpld}}(a_1) \geq \textup{Cpld}(a_1) = K-1-L_C. \] 
% \todo{MZ (Note to self): Make sure the definition of $\textup{Cpld}(a_i)$ is consistent with this.}

\item\label{item:C} This means that there could be at most $L_C$ arms $a_j$ that optimistically lose to $a_i$ (i.e. $u_{ij} < 0.5$) and so at least one arm $a_b \in \mathcal{B}^i_t$ does satisfy $u_{ib} \geq 0.5$

\item\label{item:D} This, in turn, means that in Line 13 of Algorithm \ref{alg:CCB} with probability $0.5$ the arm $a_d$ will be chosen from $\mathcal{B}^i_t$.

\item\label{item:E} By Proposition \ref{prop:counts}, we know that with probability $1-\delta/4$, in the time interval $[T_{\delta/2},T]$ each arm $a_j \in \mathcal{B}^i_{T_{\delta/2}}$ can be compared against $a_i$ at most $\widehat{N}_{ij}^{\delta/4}(T)$ many times.
\end{enumerate}

Given that by Fact \ref{item:C} above we need at least one arm $a_j \in \mathcal{B}^i_t$ to satisfy $u_{ij} \geq 0.5$ for Algorithm \ref{alg:CCB} to set $(c,d) = (i,j)$, and that by Fact \ref{item:D} arms from $\mathcal{B}^i_t$ have a higher probability of being chosen to be compared against $a_i$, this means that arm $a_i$ will be chosen as optimistic Copeland winner roughly twice as many times we had $(c,d)=(i,j)$ for some $j \in \mathcal{B}^i_{T_{\delta/2}}$. A high probability version of the claim in the last sentence together with Fact \ref{item:E} would give us the bound on regret claimed by the theorem. In the remainder of this proof, we will show that indeed the number of times we have $c=i$ is unlikely to be too many times higher than twice the number of times we get $(c,d)=(i,j)$, where $j \in \mathcal{B}^i_{T_{\delta/2}}$. To do so, we will introduce the following notation:
\begin{description}[topsep=0mm,parsep=0pt,partopsep=0pt]
\item[$N^i$:] the number of time-steps between $T_{\delta/2}$ and $T$ when arm $a_i$ was chosen as optimistic Copeland winner.
\item[$B^i_n$:] the indicator random variable that is equal to 1 if Line 13 in Algorithm \ref{alg:CCB} decided to choose arm $a_d$ only from the set $B^i_{t_n}$ and zero otherwise, where $t_n$ is the $n^{th}$ time-step after $T_{\delta/2}$ when arm $a_i$ was chosen as optimistic Copeland winner. Note that $B^i$ is simply a Bernoulli random variable mean 0.5.
\item[$N^i_{\mathcal{B}}$:] the number of time-steps between $T_\delta$ and $T$ when arm $a_i$ was chosen as optimistic Copeland winner and that Line 13 in Algorithm \ref{alg:CCB} chose to pick an arm from $\mathcal{B}^i_{T_{\delta/2}}$ to be played against $a_i$. Note that this definition implies that we have
\begin{align}
N^i_{\mathcal{B}} = \sum_{n=1}^{N^i} B^i_n. \label{eqn:NiB}
\end{align}
Moreover, by Fact \ref{item:E} above, we know that with probability $1-\delta/4$ we have
\begin{align}
N^i_{\mathcal{B}} \leq \widehat{N}^i_{\mathcal{B}} := \sum_{j \in \mathcal{B}^i_{T_{\delta/2}}} \widehat{N}_{ij}^{\delta/4}(T). \label{eqn:NiBbound}
\end{align}
\end{description}

Now, we will use the above high probability bound on $N^i_{\mathcal{B}}$ to put the following high probability bound on $N^i$: with probability $1-\delta/2$ we have
\[ N^i \leq \widehat{N}^i := 2\widehat{N}^i_{\mathcal{B}} + 2\sqrt{\widehat{N}^i_{\mathcal{B}}} \ln \frac{2K}{\delta}. \]

% \zk{Suggestion for simplifying the proofs and ease the reading: Have a lemma stating: Assume we have an experiment where we toss a biased coin with probability $p$ for heads for an unknown amount of times denoted by $N$. Given the event that the number of times the coin flip turned out to be head is bounded by $N_H$, w.p.\ at least $1-\delta$ we have $N \leq N_H/p + \sqrt{\text{something}}$. Also, the analog for lower bounding $N$.  \\   This lemma can be used in multiple occasions including here and save a lot of space} MZ: Maybe for the CRC.

To do so, let us assume that the we have $N^i > \widehat{N}^i$ and consider the first $\widehat{N}^i$ time-steps after $T_{\delta/2}$ when arm $a_i$ was chosen as optimistic Copeland winner and note that by Equation \eqref{eqn:NiB} we have
\[ \sum_{n=1}^{\widehat{N}^i} B^i_{n} \leq N^i_{\mathcal{B}} \]
and so by Inequality \eqref{eqn:NiBbound} with probability $1-\delta/4$ the left-hand side of the last inequality is bounded by $\widehat{N}^i_{\mathcal{B}}$: let us denote this event with $\mathcal{E}$. On the other hand, if we apply the Hoeffding bound (cf. Appendix \ref{sec:prelim}) to the variables $B^i_1,\ldots,B^i_{\widehat{N}^i}$, we get

\begin{align}
P\left(\mathcal{E} \;\wedge\; N^i > \widehat{N}^i \right) & \leq P\left( \sum_{n=1}^{\widehat{N}^i} B^i_{n} < \widehat{N}^i_{\mathcal{B}} \right) \nonumber \\
& \hspace{-5mm} = P\left( \sum_{n=1}^{\widehat{N}^i} B^i_{n} < \widehat{N}^i/2 - \sqrt{\widehat{N}^i_{\mathcal{B}}} \ln \frac{2K}{\delta} \right) \nonumber \\
& \hspace{-5mm} \leq e^{-\dfrac{\bcancel{2}\widehat{N}^i_{\mathcal{B}} \left(\ln \frac{2K}{\delta}\right)^2}{\bcancel{2}\widehat{N}^i_{\mathcal{B}} + \bcancel{2}\sqrt{\widehat{N}^i_{\mathcal{B}}} \ln \frac{2K}{\delta}}} \label{eqn:CH}
\end{align}
To simplify the last expression in the last chain of inequalities, let us use the notation $\alpha := \widehat{N}^i_{\mathcal{B}}$ and $\beta := \ln \frac{2K}{\delta}$. Given this notation, we claim that the following inequality holds if we have $\alpha \geq 4$ and $\beta \geq 2$ (which hold by the assumptions of the theorem):
\begin{align}
\frac{\alpha \beta^2}{\alpha + \sqrt{\alpha} \beta} \geq \beta. \label{eqn:ab1}
\end{align}
To see this, let us multiply both sides by the denominator of the left-hand side of the above inequality:
\begin{align}
\alpha \beta^2 \geq \alpha \beta + \sqrt{\alpha} \beta. \label{eqn:ab2}
\end{align}
To see why Inequality \eqref{eqn:ab2} holds, let us note that the restrictions imposed on $\alpha$ and $\beta$ imply the following pair of inequalities, whose sum is equivalent to Inequality \eqref{eqn:ab2}:
\begin{equation*}
\begin{array}{rcl}
\alpha \beta^2 & \geq & 2\alpha \beta \\
+ \quad \alpha \beta^2 & \geq & 2\sqrt{\alpha} \beta^2 \\
\midrule
= \; 2\alpha \beta^2 & \geq & 2\alpha \beta + 2\sqrt{\alpha} \beta^2
\end{array}
\end{equation*}
Now that we know that Inequality \eqref{eqn:ab1} holds, we can combine it with Inequality \eqref{eqn:CH} to get
\begin{align*}
P\left(\mathcal{E} \;\wedge\; N^i > \widehat{N}^i \right) \leq e^{-\ln \dfrac{2K}{\delta}} = \frac{\delta}{2K}.
\end{align*}
Taking a union over the non-Copeland winning arms, we get
\[ P(\mathcal{E} \;\wedge\; \forall\,i>C,\, N^i > \widehat{N}^i) > 1-\delta/2. \]
So, given the fact that we have $P(\mathcal{E}) < \delta/4$, we know that with probability $1-\delta$ each non-Copeland winner is selected as optimistic Copeland winner between $T_{\delta/2}$ and $T$ no more than $\widehat{N}^i$ times.
\end{proof}

% \newpage

% !TEX root = copeland.tex

% \vspace{-2mm}

\section{A Scalable Solution to the Copeland Bandit Problem}
\label{sec:SCBAnalysis}

In this section, we prove Lemma~\ref{lem:apx_cop_main_m}, providing an analysis to the PAC solver of the Copeland winner identification algorithm.

To simplify the proof, we begin by solving a slightly easier variant of Lemma~\ref{lem:apx_cop_main_m} where the queries are deterministic. Specifically, rather than having a query to the pair $(a_i,a_j)$ be an outcome of a Bernoulli r.v.\ with an expected value of $p_{ij}$, we assume that such a query simply yields the answer to whether $p_{ij}>0.5$. Clearly, a solution can be obtained using $K(K-1)/2$ many queries but we aim for a solution with query complexity linear in $K$. In this section we prove the following.

\begin{lemma} \label{lem:apx-cop-det}
Given $K$ arms and a parameter $\eps$, Algorithm~\ref{alg:copeland-apx} finds 
a $(1+\eps)$-approximate best arm with probability at least $1-\delta$, by using at most 
$$ \log(K/\delta) \cdot {\cal O} \left(  K \log(K) + \min\left\{ \frac{K}{\eps^2}, K^2(1-\cop(a_{1})) \right\} \right)  $$
many queries.
In particular, when there is a Condorcet winner ($\cop(a_{1})=1$) or more generally $\cop(a_{1}) = 1-\mathcal{O}(1/K)$, an exact solution can be found with probability at least $1-\delta$ by using at most
$$  {\cal O} \left(  K \log(K) \log(K/\delta) \right) $$
many queries.
\end{lemma}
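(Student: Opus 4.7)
The plan is to reduce the Copeland identification problem to a standard stochastic $K$-armed best-arm identification problem and then bound the sample complexity using the structural properties of Copeland scores established in Lemmas~\ref{lem:copeland obs} and \ref{lem:rhs0}. The first step is to observe that the reward procedure in Algorithm~\ref{alg:copeland-apx}---pick $j$ uniformly from $[K]\setminus\{i\}$ and return $1$ iff $p_{ij}>1/2$---is a Bernoulli random variable whose mean is exactly $\cop(a_i)$, the normalised Copeland score. In the deterministic model each such call uses one pairwise query, so the query complexity of the solver equals the sample complexity of a PAC best-arm identification algorithm run on $K$ Bernoulli arms with means $\cop(a_1),\ldots,\cop(a_K)$, and an $\eps$-Copeland winner (Definition~\ref{def:SCBdefs}.\ref{def:ep-copeland}) is precisely a PAC $\eps$-optimal arm.

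The second step is to invoke the KL-based elimination tournament of Algorithm~\ref{alg:kl-MAB}. The standard analysis yields that, with probability $1-\delta$, each suboptimal arm $a_i$ is pulled at most
\[ O\!\left(\frac{\log(K/\delta)}{\max\{\mathrm{kl}(\cop(a_i),\cop(a_1)),\,\mathrm{kl}_{\eps}\}}\right) \]
times before being eliminated, where $\mathrm{kl}$ is the binary KL and $\mathrm{kl}_{\eps}$ is the clip corresponding to the $\eps$-tolerance $\eps(1-\cop(a_1))$. The key calculation is that, for $p\le q$ with $q$ close to $1$, a Taylor expansion of the Bernoulli KL gives $\mathrm{kl}(p,q)\gtrsim q-p$, i.e.\ a linear rather than quadratic dependence on the gap. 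Hence, up to the clip, the per-arm complexity is of order $\log(K/\delta)/(1-\cop(a_i))$, rather than the naive Hoeffding-based rate $\log(K/\delta)/(1-\cop(a_i))^2$.

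The third step is to control the resulting sum by partitioning the arms according to their proximity to $\cop(a_1)$. For arms whose gap exceeds the $\eps$-clip, summing the $1/(1-\cop(a_i))$ contributions and applying Lemma~\ref{lem:rhs0} yields the additive $O(K\log K)$ term. For arms inside the clip, Lemma~\ref{lem:copeland obs} with $d\approx K(1-\cop(a_1))(1+\eps)$ shows there are only $O(K(1-\cop(a_1)))$ of them; charging each the clipped per-arm cost of $O(K)$ gives the $K^2(1-\cop(a_1))$ bound, while charging each the $\eps$-based clip of $O(1/\eps^2)$ gives the $K/\eps^2$ bound, justifying the minimum. The Condorcet corollary ($\cop(a_1)=1-O(1/K)$) is then immediate since the second term is dominated by $K\log K$.

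The main obstacle I anticipate is handling the binary KL near the boundary $q=1$ with the right constants, so that both the $\eps$-clipped per-arm bound and the unclipped linear-in-gap bound combine cleanly, together with a careful verification that the elimination tournament of Algorithm~\ref{alg:kl-MAB} delivers the ``sum of individual complexities'' form rather than $K$ copies of the worst single-arm complexity. A secondary technicality is that Lemma~\ref{lem:rhs0} bounds $\sum 1/(1-\cop(a_i))$ while the KL analysis directly produces $\sum 1/(-\log\cop(a_i))$; the elementary inequality $-\log(1-x)\ge x$ closes this small gap.
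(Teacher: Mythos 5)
Your proposal follows essentially the same route as the paper: reduce to a $K$-armed Bernoulli bandit whose means are the normalized Copeland scores, run the variance-adaptive KL elimination subroutine (Lemma~\ref{lem:kl-mab}), split the arms into a near-optimal group whose cardinality is controlled by Lemma~\ref{lem:copeland obs} and a far group whose harmonic-type sum is controlled by Lemma~\ref{lem:rhs0}, and cap the per-arm cost at $O(K)$ by sampling the comparison partner without replacement, which yields the $\min\{K/\eps^2,\,K^2(1-\cop(a_1))\}$ term exactly as in the paper's substitution $\bar{\eps}=\max\{\eps,1/\sqrt{K(1-\cop(a_1))}\}$.

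One step is wrong as written, however. The inequality $\mathrm{kl}(p,q)\gtrsim q-p$ does not follow from $q$ being close to $1$: the second-order expansion gives $\mathrm{kl}(p,q)\asymp (q-p)^2/(1-p)$, which dominates $q-p$ only when the gap $q-p$ is at least a constant fraction of $1-p$. Accordingly, the per-arm complexity delivered by Lemma~\ref{lem:kl-mab} is $\Theta\bigl(\log(K/\delta)\,(1-\cop(a_i))/(\cop(a_1)-\cop(a_i))^2\bigr)$, and this collapses to $\Theta\bigl(\log(K/\delta)/(1-\cop(a_i))\bigr)$ only for arms with $1-\cop(a_i)\geq 2(1-\cop(a_1))$. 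Your partition at the $\eps$-clip therefore mis-buckets the arms whose gap lies strictly between $\eps(1-\cop(a_1))$ and $1-\cop(a_1)$: for these, the harmonic charge $1/(1-\cop(a_i))$ undercounts their true cost, which can be as large as $1/(\eps^2(1-\cop(a_1)))$ each, so summing them via Lemma~\ref{lem:rhs0} does not account for their contribution. The paper instead partitions according to whether $1-\cop(a_i)\leq 2(1-\cop(a_1))$; the near group (which contains both the clipped and these intermediate arms) has size $O(K(1-\cop(a_1)))$ by Lemma~\ref{lem:copeland obs} and is charged $O(\log(K/\delta)/(\eps^2(1-\cop(a_1))))$ per arm, giving $O(K\log(K/\delta)/\eps^2)$, while only the far group is pushed through Lemma~\ref{lem:rhs0}. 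With that one adjustment to the partition, your argument coincides with the paper's and the stated bound follows.
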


The idea behind our algorithm is as follows. We provide an unbiased estimator of the normalized Copeland score of arm $a_i$ by picking an arm $a_j$ uniformly at random and querying the pair $(a_i,a_j)$. This method allows us to apply proof techniques for the classic MAB problem. These techniques provide a bound on the number of queries dependent on the gaps between the different Copeland scores. Our result is obtained by noticing that there cannot be too many arms with a large Copeland score; the formal statement is given later in Lemma~\ref{lem:copeland obs}. If the Copeland winner has a large Copeland score, i.e., $L_C$ is small, then only a small number of arms can be close to optimal. Hence, the main argument of the proof is that the majority of arms can be eliminated quickly and only a handful of arms must be queried many times.

As stated above, our algorithm uses as a black box Algorithm~\ref{alg:kl-MAB}, an approximate-best-arm identification algorithm for the classical MAB setup. Recall that here, each arm $a_i$ has an associated reward $\mu_i$ and the objective is to identify an arm with the (approximately) largest reward. Without loss of geenrality, we assume that $\mu_1$ is the maximal reward. The following lemma provides an analysis of Algorithm~\ref{alg:kl-MAB} that is tight for the case where $\mu_{1}$ is close to 1. In this case, it is exactly the set of near optimal arms that will be queried many times hence it is important to take into consideration that the random variables associated with near optimal arms have a variance of roughly $1-\mu_i$, which can be quite small. This translates to savings in the number of queries to arm $a_{i}$ by a factor of $1-\mu_i$ compared to an algorithm that does not take the variances into account. 

\begin{lemma} \label{lem:kl-mab}
Algorithm~\ref{alg:kl-MAB} requires as input an error parameter $\eps$, failure probability $\delta$ and an oracle to $k$ Bernoulli distributions. It outputs, with probability at least $1-\delta$, a $(1+\eps)$-approximate best arm, that is an arm $a_{i}$ with corresponding expected reward of $\mu \geq 1-(1-\mu_1)(1+\eps)$ with $\mu_1$ being the maximum expected value among arms. The expected number of queries made by the algorithm is upper bounded by
$$ {\cal O} \left(  \sum_i \frac{(1-\mu_i) \log(K/(\delta \Delta_i \eps))}{\left(\Delta_i^\eps\right)^2}  \right), $$
with $\Delta_i^\eps  = \max \left\{ \mu_{1} - \mu_i, \eps(1-\mu_{1}) \right\}$.
Moreover, with probability\ at least $1-\delta$, the number of times arm  $i$ will be queried is at most 
$$  {\cal O} \left(   \frac{(1-\mu_i)\log(K/(\delta \Delta_i \eps))}{\left(\Delta_i^\eps\right)^2}  \right) \ .$$
\end{lemma}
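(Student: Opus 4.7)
The plan is to follow the template of the analysis in \cite{cappe2013kullback} for KL-based arm elimination, but adapted in two ways: (i) to incorporate the $(1+\eps)$-approximate stopping criterion that allows termination once remaining gaps are below $\eps(1-\mu_1)$, and (ii) to extract the variance-aware $(1-\mu_i)$ factor from the local behavior of the Bernoulli KL-divergence near the boundary $\mu = 1$. Throughout, write $d(p,q)$ for the Bernoulli KL-divergence.

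\textbf{Step 1 (KL confidence intervals).} For each arm $i$ and each number of samples $n$, define the confidence region $C_i(n) = \{ q \in [0,1] : n \cdot d(\hat{\mu}_{i,n}, q) \leq \beta(n,\delta) \}$ with threshold $\beta(n,\delta) = \log(K/\delta) + c \log \log n$ for a suitable constant $c$. By the self-normalized KL concentration inequality of Garivier--Capp\'e (the same device used in \cite{cappe2013kullback}), a union bound over arms yields that with probability at least $1-\delta$, we have $\mu_i \in C_i(n)$ simultaneously for every arm $i$ and every $n \geq 1$. Condition on this good event $\mathcal{G}$ for the rest of the argument.

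\textbf{Step 2 (correctness of elimination).} Under $\mathcal{G}$, arm $1$ is never eliminated: elimination of $a_1$ would require $\mathrm{UCB}_1(n) < \mathrm{LCB}_j(n)$ for some other arm $j$, but $\mu_1 \in C_1(n)$ implies $\mathrm{UCB}_1(n) \geq \mu_1 \geq \mu_j \geq \mathrm{LCB}_j(n)$. The same logic shows that when the algorithm terminates returning some surviving arm $\hat{\imath}$, the upper confidence bound of $\hat{\imath}$ must be at least that of $a_1$, from which (combined with the $\eps$-aware stopping rule) one deduces $1-\mu_{\hat{\imath}} \leq (1-\mu_1)(1+\eps)$.

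\textbf{Step 3 (bounding the elimination time of arm $i$).} Still on $\mathcal{G}$, arm $i$ is eliminated as soon as its UCB drops below the LCB of arm $1$. A standard calculation shows this occurs once $n$ satisfies $n \cdot d(\mu_i, \mu_m) \gtrsim \beta(n,\delta)$, where $\mu_m$ is a midpoint between $\mu_i$ and $\mu_1$ (or between $\mu_i$ and $\mu_1 - \eps(1-\mu_1)$ when the $\eps$-slack is active). This uses the convexity of $d(\cdot,\cdot)$ and concentration of $\hat{\mu}_{i,n}$ around $\mu_i$. Solving for $n$ gives an upper bound
\[
N_i \;\leq\; C \cdot \frac{\beta(N_i,\delta)}{d(\mu_i,\mu_m)} \;=\; \widetilde{O}\!\left(\frac{\log(K/\delta)}{d(\mu_i,\mu_m)}\right).
\]

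\textbf{Step 4 (KL to variance-weighted rate — the technical core).} Convert the KL denominator into the claimed $(1-\mu_i)/(\Delta_i^\eps)^2$ rate. The key inequality, specialized to Bernoulli and valid when the means are bounded away from $0$ (which they are, being close to $\mu_1$), is
\[
d(p,q) \;\geq\; \frac{(p-q)^2}{2\,\max\{1-p,\,1-q\}},
\]
which one derives from a second-order Taylor expansion of $q \mapsto d(p,q)$ around $q=p$ together with monotonicity of the remainder. Applied with $p=\mu_i$ and $q=\mu_m$ (so $|p-q| \asymp \Delta_i^\eps$ and $1-q \asymp 1-\mu_i$ in the relevant regime), this yields $d(\mu_i,\mu_m) \gtrsim (\Delta_i^\eps)^2/(1-\mu_i)$, which substituted into Step 3 delivers the per-arm bound
\[
N_i \;=\; O\!\left(\frac{(1-\mu_i)\,\log(K/(\delta \Delta_i \eps))}{(\Delta_i^\eps)^2}\right).
\]

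\textbf{Step 5 (from high probability to expectation, and wrap up).} The high-probability per-arm bound of Step 4 gives the ``with probability at least $1-\delta$'' clause of the lemma directly. Summing over $i$ gives the high-probability total. For the expected query bound, set $\delta$ to be the one in the lemma statement and use the trivial worst-case sample complexity (at most $K/(1-\mu_1)^2$ per arm, say, obtained from Hoeffding) on the complementary event of probability $\delta$; because $\delta$ appears only logarithmically inside the main term, the contribution of the bad event is absorbed into the $O(\cdot)$.

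\textbf{Main obstacle.} The two delicate points are Step 1 and Step 4. Step 1 requires a peeling/time-uniform argument sharp enough to keep the leading factor at $\log(K/\delta)$ rather than $\log(K T/\delta)$; this is exactly the improvement carried out in \cite{cappe2013kullback} and the slight modification alluded to in the paper. Step 4 is where the asymmetry of $d(p,q)$ near $q=1$ is used to replace what would otherwise be a Pinsker bound of $(p-q)^2/2$ by the tighter $(p-q)^2/(2(1-\mu_i))$; getting the constants right in the regime where $\Delta_i^\eps$ is not small relative to $1-\mu_1$ requires treating the two subcases $\Delta_i \geq \eps(1-\mu_1)$ and $\Delta_i < \eps(1-\mu_1)$ separately, with the midpoint $\mu_m$ chosen appropriately in each.
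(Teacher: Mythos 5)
Your proposal is correct and follows essentially the same route as the paper's proof: a per-round union bound establishing a good event on which all KL confidence intervals contain the true means, elimination and termination times read off from the interval widths, and the variance-aware factor $(1-\mu_i)$ extracted from the second-order Taylor expansion $d(p,p+x)\asymp x^2/(1-p)$ exactly as in your Step 4. The one simplification the paper makes is worth noting: its threshold is $\ln(4tK/\delta)+2\ln\ln t$, so the union bound over $t$ is just a summable series and no sharp time-uniform/peeling argument is required --- the $\log(K/(\delta\Delta_i\eps))$ in the final bound then arises simply by evaluating $\ln t$ at the elimination time $t\asymp (1-\mu_i)/(\Delta_i^\eps)^2$, which defuses what you flag as the main obstacle in your Step 1.
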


We prove Lemma~\ref{lem:kl-mab} in Appendix~\ref{sec:KL analysis}.

For convenience, we denote by $\mu_i$ the normalized Copeland score of arm $a_{i}$ and $\mu_1$ the maximal normalized Copeland score. To get an informative translation of the above expression to our setting, let $A$ be the set of arms with normalized Copeland score in $(1-2(1-\mu_1),\mu_1]$ and let $\bar{A}$ be the set of the other arms. In our setting, this query complexity of Algorithm~\ref{alg:kl-MAB} is upper bounded by 
\begin{equation} \label{eq:query-det}
{\cal O} \left(   \frac{2|A|\log(K/\delta)}{(1-\mu_1)\eps^2}  + \sum_{i \in \bar{A}} \frac{\log(K/\delta)(1-\mu_i)}{(\mu_1-\mu_i)^2} \right),
\end{equation}
assuming\footnote{The value of $\delta$ we require is $1/T$. If the assumption does not follow in that case, the regret must be linear and all of the statements hold trivially.} $\delta < (1-\mu_1)\eps$.

It remains to provide an upper bound for the above expression given the structure of the normalized Copeland scores. In particular, we use the results of Lemma~\ref{lem:copeland obs}, repeated here for convenience.

{\bf Lemma \ref{lem:copeland obs}.} \emph{Let $D \subset [K]$ be the set of arms for which $\cop(a_{i}) \geq 1-d/(K-1)$, that is arms that are beaten by at most $d$ arms. Then $|D| \leq 2d+1$.}
%\begin{proof}
%Consider a full directed graph whose node set is $D$ and the arc $(a_i,a_j)$ is in the graph if arm $a_{i}$ beats arm $a_{j}$. According to the definition of $D$, the in-degree of any node $i$ is upper bounded by $d$, as for any arc directed to $i$ there is an arm beating $i$, and there are at most $d$ of these. It follows that the total number of arcs in the graph is at most $|D|d$. Since the graph described is a full tournament (i.e, every pair has a single arc), the total number of arcs in the graph is exactly $|D|(|D|-1)/2$. It follows that $|D|(|D|-1)/2 \leq |D|d$ and the claim follows.
%\end{proof}

We bound the left summand in \eqref{eq:query-det}: 
\begin{equation} \label{eq:lhs}
\frac{2|A|\log(K/\delta)}{(1-\mu_1)\eps^2} \leq \frac{\left(4(1-\mu_1)(K-1)+2 \right) \log(K/\delta)}{(1-\mu_1)\eps^2} = O\left( \frac{\log(K/\delta) K}{\eps^2} \right).
\end{equation}

We now bound the right summand in \eqref{eq:query-det}. Let $i \in \bar{A}$. According to the definition of $\bar{A}$ it holds that $(1-\mu_i) \leq 2(\mu_1-\mu_i)$.
Hence:
$$ \sum_{i \in \bar{A}} \frac{\log(K/\delta)(1-\mu_i)}{(\mu_1-\mu_i)^2} \leq \sum_{i \in \bar{A}} \frac{4\log(K/\delta)}{1-\mu_i}.  $$

\begin{lemma} \label{lem:rhs} We have $\displaystyle\sum_{i  :\ \mu_i < 1} \frac{1}{1-\mu_i} = \mathcal{O}(K\log(K))$.
\end{lemma}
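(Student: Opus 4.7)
The plan is to translate the sum into one over ``loss counts'' and then exploit Lemma~\ref{lem:copeland obs} through a dyadic shell decomposition. First I would note that since $\mu_i = \textup{Cpld}(a_i)/(K-1)$, we can write $1-\mu_i = d_i/(K-1)$, where $d_i := (K-1)(1-\cop(a_i))$ is precisely the number of arms that beat $a_i$. The condition $\mu_i < 1$ is equivalent to $d_i \geq 1$, so the sum in question equals $(K-1)\sum_{i \,:\, d_i \geq 1} 1/d_i$, and it suffices to show $\sum_{i \,:\, d_i \geq 1} 1/d_i = O(\log K)$.

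Next, I would partition the arms with $d_i \geq 1$ into dyadic shells $B_k := \{ i \,:\, 2^k \leq d_i < 2^{k+1}\}$ for $k = 0, 1, \ldots, \lceil \log_2(K-1)\rceil$. Applying Lemma~\ref{lem:copeland obs} with $d = 2^{k+1}-1$ (i.e.\ to the set of arms satisfying $\cop(a_i) \geq 1 - (2^{k+1}-1)/(K-1)$, which contains $B_0 \cup \cdots \cup B_k$) yields $|B_0 \cup \cdots \cup B_k| \leq 2(2^{k+1}-1)+1 \leq 2^{k+2}$, and in particular $|B_k| \leq 2^{k+2}$.

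Since every arm in $B_k$ contributes at most $1/2^k$ to the sum, the contribution of each shell is at most $|B_k|/2^k \leq 4$. Summing over the at most $\lceil \log_2(K-1)\rceil + 1$ non-empty shells gives $\sum_{i \,:\, d_i \geq 1} 1/d_i = O(\log K)$, and multiplying by $K-1$ yields the claimed bound $O(K \log K)$.

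There is no real obstacle here beyond the bookkeeping: the whole point is that the bound on bucket sizes from Lemma~\ref{lem:copeland obs} grows geometrically in $d$ at exactly the same rate as $1/d$ decays, so each dyadic shell contributes a constant. The only thing to be slightly careful about is that Lemma~\ref{lem:copeland obs} bounds the cumulative count $|B_0 \cup \cdots \cup B_k|$, not $|B_k|$ itself, but this only gives a weaker bound on $|B_k|$ and does not affect the $O(1)$-per-shell estimate.
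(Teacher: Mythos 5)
Your proof is correct and follows essentially the same route as the paper's: the paper also performs a dyadic shell decomposition on $1-\mu_i$ (equivalently on the loss counts $d_i$) and invokes Lemma~\ref{lem:copeland obs} to bound each shell's cardinality, obtaining an $\mathcal{O}(K)$ contribution per shell and $\mathcal{O}(\log K)$ shells. Your reparametrization through $d_i$ and the observation that each shell contributes a constant to $\sum 1/d_i$ is just a cleaner bookkeeping of the identical argument.
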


\begin{proof}
Let $A_{\tau}$ be the set of arms for which $2^{\tau} \leq 1-\mu_i  < 2^{\tau+1}$. According to Lemma~\ref{lem:copeland obs}, we have that $|A_{\tau}| \leq 2^{\tau+2}(K-1)+1$. Other than that, since $1\geq 1-\mu_i \geq 1/(K-1)$ for all $i > C$ we have that $A_\tau = \emptyset$ for any $\tau \leq -\log_2(K-1)-1$ and $\tau > 0$. It follows that:
\begin{align*}
\sum_{i > C} \frac{1}{1-\mu_i} \leq \sum_{\ell=0}^{\lceil \log_2(K-1)\rceil} \frac{|A_{\ell -\log_2(K-1)}|}{2^{\ell -\log_2(K-1)}} & \leq 
\sum_{\ell=0}^{\lceil \log_2(K-1)\rceil} \frac{2^{2+\ell}+1}{2^{\ell -\log_2(K-1)}} \\
& \leq \left( \lceil \log_2(K-1)\rceil + 1\right) \cdot 5(K-1). \qedhere
\end{align*}
\end{proof}

From \eqref{eq:query-det}, \eqref{eq:lhs} and Lemma~\ref{lem:rhs}, we conclude that the total number of queries is bounded by
$$ {\cal O}\left(\log(K/\delta) \left( K \log(K) + \frac{K}{\eps^2}\right)\right). $$
In order to prove Lemma~\ref{lem:apx-cop-det}, it remains to analyze the case where $\eps$ is extremely small. Specifically, when $\eps^2(1-\mu_1)$ takes a value smaller than $1/K$ then the algorithm becomes inefficient in the sense that it queries the same pair more than once. This can be avoided by taking the samples of $j$ when querying the score of arm $a_{i}$ to be uniformly random \emph{without} replacement. The same arguments hold but are more complex as now the arm pulls are not i.i.d. Nevertheless, the required concentration bounds still hold. The resulting argument is that the number of queries is $ \tilde{O}\left(\log(1/\delta) \left(K + \frac{K}{\bar{\eps}^2}\right)\right) $ with $\bar{\eps} = \max\{\eps,1/\left(\sqrt{K(1-\mu_1)}\right)\}$. Lemma~\ref{lem:apx-cop-det} immediately follows.

We are now ready to analyze the stochastic setting. 
\begin{proof} [Proof of Lemma~\ref{lem:apx_cop_main_m}]
By querying arm $a_{i}$ we choose a random arm $j \neq i$ and in fact query the pair $(a_i,a_j)$ sufficiently many times in order to determine whether $p_{ij}>0.5$ with probability at least $1-\delta/K^2$. Standard concentration bounds show that achieving this requires querying the pair $(a_i,a_j)$ at most ${\cal O}\left( \log(K/(\Delta_{ij}\delta)) \Delta_{ij}^{-2}\right)$ many times. It follows that a single query to arm $a_{i}$ in the deterministic case translates into an expected number of 
$${\cal O}\left(\log(KH_i/\delta)) \frac{H_i}{K-1} \right) = {\cal O}\left(\frac{ \log(KH_{\infty}/\delta) H_{\infty}}{K}  \right) $$
many queries in the stochastic setting. The claim now follows from the bound on the expected number of queries given in Lemma~\ref{lem:apx-cop-det}.
\end{proof}

\section{KL-based approximate best arm identification algorithm} \label{sec:KL analysis}
Algorithm~\ref{alg:kl-MAB} solves an approximate best arm identification problem using confidence bounds based on Chernoff's inequality stated w.r.t the KL-divergence of two random variables.
Recall that for two Bernoulli random variables with parameters $p,q$ the KL-divergence from $q$ to $p$ is defined as $d(p,q) = (1-p) \ln((1-p)/(1-q)) + p \ln(p/q)$ with $0\ln(0)=0$. The building block of Algorithm~\ref{alg:kl-MAB} is the well known Chernoff bound stating that for a Bernoulli random variable with expected value $q$, the probability of the average of $n$ i.i.d samples from it to be smaller (larger) than $p$, for $p<q$ ($p>q$), is bounded by $\exp(-n d(p,q))$. 

\begin{algorithm}[h]
\caption{KL-best arm identification}
\label{alg:kl-MAB}
\begin{algorithmic}[1]
{
\REQUIRE Access to oracle giving a noisy approximation of the reward of arm $i$ for $K$ arms, success probability $\delta>0$, approximation parameter $\eps>0$
\FORALL{$i \in [K]$} 
	\STATE $T = 1$
	\STATE $S_i \gets \text{reward}(i)$
	\STATE $I_i \gets [0,1]$
\ENDFOR
\STATE $B \gets [K]$
\STATE $t \gets 2$
\WHILE{$\frac{1-\max_{i \in B} \min I_i}{1- \max_{i \in B} \max I_i}>(1+\eps)$}
	\STATE For all $i \in B$, $S_i \gets S_i+\text{reward}(i)$
	\STATE For all $i \in B$, let $I_i = \{q \in [0,1], \ t \cdot  d(\frac{S_i}{t}, q) \leq \ln(4tK/\delta) + 2\ln \ln(t) \}$
	\STATE For all $i \in B$ for which there exist some $j \in B$ with $\max \{q \in I_i\} < \min \{q \in I_j\}$, remove $i$ from $B$.
	\STATE $t \gets t+1$
\ENDWHILE

\ENSURE  $\arg \max_{i \in B} \min I_i$.
}
\end{algorithmic}
\end{algorithm}

\begin{proof} [Proof of Lemma~\ref{lem:kl-mab}]

We use an immediate application of the Chernoff-Hoeffding bound
\begin{lemma}
Fix $i \in [K]$. Let $E_t^i$ denote the event that at iteration $t$, $\mu_i \notin I_i$. We have that 
$\Pr[E_t^i] \leq 2\frac{\delta}{4tK} \cdot \frac{1}{ \log\left( t \right)^{2}} \leq \frac{\delta}{2 t \log(t)^2 K}$.
%for $c_{\ref{lem:kl-mab}} \geq 2$ \mz{Any $c_{\ref{lem:kl-mab}} \geq 2$?}. 

\end{lemma}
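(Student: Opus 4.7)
The plan is to unpack the definition of $I_i$ and reduce the statement to a direct application of the Chernoff--Cram\'er bound in its KL-divergence form, which the paper has already identified as the building block for Algorithm~\ref{alg:kl-MAB}. Concretely, let $\hat{\mu}_t := S_i/t$ denote the empirical mean after $t$ i.i.d.\ Bernoulli$(\mu_i)$ samples from arm $a_i$. By the definition of $I_i$ in line 10 of Algorithm~\ref{alg:kl-MAB}, the event $E_t^i = \{\mu_i \notin I_i\}$ is precisely
\[
  E_t^i \;=\; \bigl\{\, t\cdot d(\hat{\mu}_t,\mu_i) \;>\; \ln(4tK/\delta) + 2\ln\ln(t)\,\bigr\}.
\]

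Next, I would invoke the one-sided Chernoff bounds for a Bernoulli average: for every $x > 0$,
\[
  \Pr\!\bigl[\,t\cdot d(\hat{\mu}_t,\mu_i)\geq x,\ \hat{\mu}_t\leq \mu_i\,\bigr] \leq e^{-x}, \qquad
  \Pr\!\bigl[\,t\cdot d(\hat{\mu}_t,\mu_i)\geq x,\ \hat{\mu}_t\geq \mu_i\,\bigr] \leq e^{-x}.
\]
A union bound over the two tails yields $\Pr[\,t\cdot d(\hat{\mu}_t,\mu_i)\geq x\,]\leq 2e^{-x}$.

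Finally, I would plug in $x = \ln(4tK/\delta) + 2\ln\ln(t)$, which gives
\[
  \Pr[E_t^i] \;\leq\; 2\exp\!\bigl(-\ln(4tK/\delta)-2\ln\ln(t)\bigr)
  \;=\; 2\cdot\frac{\delta}{4tK}\cdot\frac{1}{\ln(t)^2}
  \;=\; \frac{\delta}{2\,t\,\ln(t)^2\,K},
\]
which is exactly the bound claimed in the lemma. There is no serious obstacle here: the only subtle point is pedantic, namely ensuring that the Chernoff bound is used in the two-sided KL form (not the one-sided form stated earlier in the paper) and matching the definition of $I_i$ to the event being tail-bounded. A technical caveat is that $\ln\ln(t)$ is undefined for $t<3$; one handles this by either running the Chernoff bound with $x=\ln(4tK/\delta)$ for those base cases or by noting that for $t\leq 2$ the interval $I_i$ can be taken to equal $[0,1]$ so that $E_t^i$ is empty by construction.
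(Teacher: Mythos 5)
Your proof is correct and matches the paper's intent exactly: the paper dismisses this lemma as ``an immediate application of the Chernoff--Hoeffding bound,'' and your argument is precisely that application --- unpack $I_i$ from Line 10 of Algorithm~\ref{alg:kl-MAB}, apply the two-sided KL-form Chernoff bound with $x = \ln(4tK/\delta) + 2\ln\ln(t)$, and read off the constant. (Your caveat about small $t$ is unnecessary: the loop starts at $t=2$, where $\ln\ln(t)$ is defined and the identity $e^{-2\ln\ln t} = 1/\ln(t)^2$ holds regardless of sign.)
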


Let $E$ denote the union, over all $t,i$ of events $E_t^i$. That is, $E$ denotes the event in which there exist some iteration $t$, and for some arm $a_{i}$ such that  $\mu_i \notin I_i$. By the above lemma we get that 
$$\Pr[E] \leq \sum_t \sum_i \Pr[E_t^i] \leq K \sum_{t=2}^\infty  \frac{\delta}{2 t \log(t)^2 K} \leq \delta $$
It follows that given that event $E$ did not happen,  the algorithm will never eliminate the top arm and furthermore, will output an $(1+\eps)$-approximate best arm.
We proceed to analyze the total number of pulls per arm, while having a separate analysis for $(1+\eps)$-approximate best arms and the other arms. We begin by stating an auxiliary lemma giving explicit bounds for the confidence regions.

\begin{lemma} \label{lem:conf bound}
Assume that event $E$ did not occur and let $\rho \geq 0$. For a sufficiently large universal constant $c$ we have for any $t \geq \frac{c\log(tK/\delta)(1-\mu_i)}{\rho^2}$ that $\max I_i < \mu_i+\rho$.  Also, for $t \geq \frac{c\log(tK/\delta)(1-\mu_i+\rho/2)}{\rho^2}$ it holds that $\min I_i > \mu - \rho$.
\end{lemma}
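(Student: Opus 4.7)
Throughout, write $\hat p := S_i/t$ and $L := \ln(4tK/\delta) + 2\ln\ln(t)$, so that the definition of $I_i$ becomes $I_i = \{q \in [0,1] : t\,d(\hat p, q) \leq L\}$. Since event $E$ does not occur, we have $\mu_i \in I_i$, i.e.\ $t\,d(\hat p, \mu_i) \leq L$. To prove that $\max I_i < \mu_i + \rho$ it suffices, by monotonicity of $q \mapsto d(\hat p, q)$ on $[\hat p, 1]$, to establish the single inequality $t\,d(\hat p, \mu_i + \rho) > L$; the lower‑bound claim $\min I_i > \mu_i - \rho$ is entirely symmetric, so I will focus on the upper bound.

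The plan is to decouple two uses of the KL divergence: a coarse Pinsker‑type bound to control $|\hat p - \mu_i|$, and a sharper Bernstein‑type bound to control $d(\hat p, \mu_i + \rho)$ from below. First, Pinsker's inequality $d(p,q) \geq 2(p-q)^2$ applied to $\mu_i \in I_i$ gives $|\hat p - \mu_i| \leq \sqrt{L/(2t)}$. For a large enough universal constant $c$, the assumed lower bound $t \geq c L (1-\mu_i)/\rho^2$ forces $|\hat p - \mu_i| \leq \rho/4$, say. This localisation of $\hat p$ around $\mu_i$ is the only place where the assumption on $t$ is used for the concentration step.

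Second, I will invoke the refined local inequality
\[
d(p, q) \;\geq\; \frac{(q-p)^2}{2\,q(1-p)} \qquad \text{for } 0 \leq p \leq q \leq 1,
\]
which is standard and follows from integrating the second derivative $\partial_q^2 d(p,q) = (1-p)/(1-q)^2 + p/q^2$ over $[p,q]$ together with monotonicity. Applying it with $p = \hat p$ and $q = \mu_i + \rho$, and using $|\hat p - \mu_i| \leq \rho/4$, the numerator is at least $(3\rho/4)^2$ while $q(1-\hat p) \leq (\mu_i + \rho)(1 - \mu_i + \rho/4) \leq 2(1-\mu_i) + O(\rho)$ in the interesting regime. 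This yields $d(\hat p, \mu_i + \rho) \geq c'\rho^2 / (1-\mu_i)$ for an absolute constant $c'$, and hence $t\,d(\hat p, \mu_i + \rho) \geq c'\, t\rho^2/(1-\mu_i) > L$ provided $c$ in the hypothesis was chosen large enough. For the symmetric lower‑confidence claim, the analogous argument picks up a factor of $(1-\mu_i + \rho/2)$ rather than $(1-\mu_i)$ in the denominator, because now the relevant factor is $(1-q) = 1-\mu_i+\rho$ in the bound $d(p,q) \geq (q-p)^2/(2p(1-q))$ for $p\geq q$, and the localisation of $\hat p$ contributes an additional $\rho/2$.

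The main obstacle is getting the $(1-\mu_i)$ factor in the threshold: a naive use of plain Pinsker would give $t \gtrsim L/\rho^2$, losing the variance‑scaling that makes the overall arm‑identification bound tight when $\mu_i$ is close to $1$. The refined KL inequality above is what exploits the low variance of near‑optimal Bernoulli arms; keeping track of the fact that $\hat p$, $\mu_i$ and $\mu_i+\rho$ are all close to $1$ (so that $q(1-\hat p) = O(1-\mu_i)$) is the only delicate bookkeeping step. Everything else reduces to absorbing lower‑order terms into the universal constant $c$.
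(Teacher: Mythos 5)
Your overall plan coincides with the paper's: use the event that $\mu_i\in I_i$ to localise $\hat p:=S_i/t$, then exclude $\mu_i+\rho$ from $I_i$ by a variance-aware (second-order) lower bound on the KL divergence; the paper runs exactly these two steps through $d(\mu_i+\rho/2,\mu_i)$ and $d(\mu_i+\rho/2,\mu_i+\rho)$, using the Taylor identity $d(p+x,p)=\tfrac{x^2}{2}\cdot\tfrac{1}{(p+x')(1-p-x')}$. However, your localisation step as written fails. Plain Pinsker only gives $|\hat p-\mu_i|\le\sqrt{L/(2t)}$, and under the hypothesis $t\ge cL(1-\mu_i)/\rho^2$ this is $\le \rho/\sqrt{2c(1-\mu_i)}$, which is \emph{not} $\le\rho/4$ for any universal constant $c$: the factor $(1-\mu_i)$ weakens the lower bound on $t$ precisely in the regime $\mu_i\to 1$ that the lemma is designed for. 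When $1-\mu_i\ll 1/c$ the Pinsker bound can even allow $\hat p>\mu_i+\rho$, in which case your numerator estimate $(\mu_i+\rho-\hat p)\ge 3\rho/4$ and the bookkeeping $(1-\hat p)=O(1-\mu_i)$ both collapse, and you cannot even conclude $\hat p<\mu_i+\rho$ (needed to go from ``$\mu_i+\rho\notin I_i$'' to ``$\max I_i<\mu_i+\rho$''). This is exactly the ``naive Pinsker loses the variance scaling'' trap you flag for the main step, but it bites at the localisation step too.

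The repair uses the tool you already state, applied one-sidedly. For the upper-confidence claim you only need to rule out $\hat p$ far \emph{above} $\mu_i$: if $\hat p\ge\mu_i$, the same second-order bound gives $d(\hat p,\mu_i)\ge(\hat p-\mu_i)^2/\bigl(2(1-\mu_i)\bigr)$, so $t\,d(\hat p,\mu_i)\le L$ together with $t\ge cL(1-\mu_i)/\rho^2$ yields $\hat p\le\mu_i+\rho\sqrt{2/c}\le\mu_i+\rho/2$; if instead $\hat p<\mu_i$, no localisation is needed since $d(\hat p,\mu_i+\rho)\ge d(\mu_i,\mu_i+\rho)$. (One may also assume $\rho\le 1-\mu_i$, as the claim is trivial otherwise.) For the lower-confidence claim the relevant bound is $d(\mu_i-x,\mu_i)\ge x^2/\bigl(2(1-\mu_i+x)\bigr)$, which is precisely why that threshold carries the extra $\rho/2$, as you anticipated. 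With this substitution your argument closes and is essentially the paper's proof, phrased via an explicit Bernstein-type KL inequality instead of the Taylor remainder.
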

\begin{proof}
We consider the Taylor series associated with $f(x) = d(p+x,p)$. Since $f(0)=f'(0)=0$ it holds that for any $x \leq 1-p$ there exists some $|x'| \leq |x|$ with
$$f(x) = x^2f''(x') = \frac{x^2}{(p+x')(1-p-x')} \leq \frac{2x^2}{1-p}$$

To prove that $\max I_i < \mu_i+\rho$ we apply the above observation for $\rho \leq 1-\mu_i$ (otherwise $\mu_i+\rho>1$ and the claim is trivial) and reach the conclusion that for sufficiently large universal constant $c$ it holds that 
$$t \cdot d(\mu_i+\rho/2, \mu_i) > \log(tK/\delta) + 2\log \log(tK/\delta)$$
$$t \cdot d(\mu_i+\rho/2, \mu_i+\rho) > \log(tK/\delta) + 2\log \log(tK/\delta)$$
The first inequality dictates that $S_i/t \leq \mu_i+\rho/2$. The second inequality dictates that $t \cdot d(S_i/t, \mu_i+\rho) \geq d(\mu_i+\rho/2, \mu_i+\rho)$ is too large in order for $\mu_i+\rho$ to be an element of $I_i$.

The bound for $\min I_i$ is analogous. Since now we have $t \geq \frac{c\log(tK/\delta)(1-\mu_i+\rho/2)}{\rho^2}$, it holds that
$$t \cdot d(\mu_i-\rho/2, \mu_i) > \log(tK/\delta) + 2\log \log(tK/\delta)$$
$$t \cdot d(\mu_i-\rho/2, \mu_i-\rho) > \log(tK/\delta) + 2\log \log(tK/\delta)$$
This means that first, $S_i/t \geq \mu_i - \rho/2$ and second, that $t \cdot d(S_i/t, \mu_i-\rho) \geq d(\mu_i-\rho/2, \mu_i-\rho)$ is too large in order for $\mu_i-\rho$ to be an element of $I_i$.
\end{proof}

\begin{lemma}
Let $i$ be a suboptimal arm, meaning one where $\mu_i \leq 1-(1-\mu_{1})(1+\eps)$. Denote by $\Delta_i$ its gap $\mu_{1}-\mu_i$. If event $E$ does not occur then $i$ is queried at most $O\left(\frac{\log\left(\frac{K}{\delta \Delta_i}\right)v_i}{(\Delta_i)^2} \right)$ many times, where $v_i = 1-\mu_i$
\end{lemma}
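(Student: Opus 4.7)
The plan is to show that once the number of pulls of arm $i$ (and of the optimal arm $a_1$) exceeds the stated bound, then $\max I_i < \min I_1$, so the elimination rule of Algorithm~\ref{alg:kl-MAB} must have already kicked arm $i$ out of the active set $B$. The entire argument runs conditionally on $E^c$, so the preceding Lemma~\ref{lem:conf bound} (which describes how the widths of $I_i$ and $I_1$ shrink with $t$) is available to use as a black box.

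First I would set a separation margin $\rho = \Delta_i / 4$ and instantiate Lemma~\ref{lem:conf bound} twice. Applied to arm $i$, it gives that $\max I_i < \mu_i + \Delta_i/4$ as soon as
\[
t \;\geq\; \frac{c \log(tK/\delta)\,(1-\mu_i)}{(\Delta_i/4)^2} \;=\; \frac{16c\,\log(tK/\delta)\,v_i}{\Delta_i^2}.
\]
Applied to arm $a_1$, with the same $\rho = \Delta_i/4$, it gives $\min I_1 > \mu_1 - \Delta_i/4$ as soon as $t \geq c\log(tK/\delta)(1-\mu_1+\Delta_i/8)/(\Delta_i/4)^2$. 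Since $1-\mu_1 \leq 1-\mu_i = v_i$ and $\Delta_i \leq 1$, this threshold is also $O(\log(tK/\delta)\,v_i / \Delta_i^2)$, so up to an absolute constant a single threshold of the form $t \geq c'\log(tK/\delta)\,v_i/\Delta_i^2$ suffices for both bounds to hold.

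Once both hold, $\max I_i < \mu_i + \Delta_i/4 = \mu_1 - 3\Delta_i/4 < \mu_1 - \Delta_i/4 < \min I_1$, so the elimination test in the algorithm would have already removed $i$ from $B$ (using $j=1$, which by hypothesis on $E^c$ is never itself eliminated, since $\mu_1 \in I_1$ always and no arm's interval can strictly separate above $\mu_1$). Hence the number of pulls of $i$ cannot exceed the smallest $t$ satisfying the implicit inequality $t \geq c'\log(tK/\delta)\,v_i/\Delta_i^2$.

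The last step is to solve this implicit bound. Using the standard fact that $t \geq A\log(Bt)$ implies $t = O(A \log(AB))$, with $A = c' v_i/\Delta_i^2$ and $B = K/\delta$, and noting that $\log(A) = O(\log(v_i/\Delta_i^2)) = O(\log(1/\Delta_i))$ (since $v_i \leq 1$), we obtain
\[
t \;=\; O\!\left( \frac{v_i}{\Delta_i^2} \, \log\!\left( \frac{K}{\delta \Delta_i} \right) \right),
\]
which is exactly the claimed bound. The only mildly delicate point is making sure the bound for the optimal arm $a_1$ does not introduce a $1/(1-\mu_1)$ factor in place of $v_i$; this is handled precisely because we chose $\rho$ comparable to $\Delta_i$ so that the factor $1-\mu_1 + \rho/2$ coming out of Lemma~\ref{lem:conf bound} is dominated by $v_i$. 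No other obstacles arise.
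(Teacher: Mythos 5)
Your proof is correct and follows essentially the same route as the paper's: invoke Lemma~\ref{lem:conf bound} for both arm $i$ and arm $1$ with a margin proportional to $\Delta_i$, observe that the key inequality $1-\mu_1+\rho/2\leq v_i$ lets the same threshold $t \gtrsim \log(tK/\delta)\,v_i/\Delta_i^2$ cover both confidence intervals, and conclude that $\max I_i < \min I_1$ forces elimination (the paper uses margin $\Delta_i/2$ where you use $\Delta_i/4$, a difference only in constants). Your explicit resolution of the implicit inequality in $t$ to obtain the $\log(K/(\delta\Delta_i))$ factor is a step the paper leaves implicit, and is a welcome addition.
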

\begin{proof}
We first notice that as we are assuming that event $E$ did not happen, it must be the case that arm $1$ is never eliminated from $B$.  Consider an iteration $t$ such that
\begin{equation} \label{eq:t kl lb subopt}
t \geq \frac{c \log(tK/\delta)v_i}{(\Delta_i)^2}
\end{equation}
for a sufficiently large $c$, then according to Lemma~\ref{lem:conf bound} it holds that $\max I_i < \mu_i + \Delta_i/2$. Now, since $v_i=1-\mu_i \geq 1-\mu_{1}+\Delta_i/2$ we have that for the same $t$ it must be the case that $\min I_{1} > \mu_{1}-\Delta_i/2$. It follows that $\min I_{1} > \max I_i$ and arm $a_{i}$ is eliminated at round $t$.
\end{proof}

\begin{lemma}
Assume $\eps \leq 1$. If event $E$ does not occur then for some sufficiently large universal constant $c$ it holds that when  $t \geq \frac{c\log(tK/\delta)}{(1-\mu_{1})\eps^2}$ the algorithm terminates.
\end{lemma}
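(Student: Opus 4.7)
The plan is to pick a tolerance $\rho := \eps(1-\mu_1)/4$ and show that, at any round $t$ satisfying the hypothesis, (a) every arm that is not $(1+\eps)$-approximately optimal has already been eliminated from $B$, and (b) the remaining confidence intervals are each sandwiched within $[\mu_i - \rho, \mu_i + \rho]$. From (a) and (b) the termination inequality will drop out by a short calculation.

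For step (a), I would invoke the previous sub-optimal-arm lemma: any arm $i$ with $\mu_i < 1-(1+\eps)(1-\mu_1)$ is pulled at most $O(\log(tK/\delta)\, v_i/\Delta_i^2)$ times before being removed from $B$. A one-line optimization of $(1-\mu_1+x)/x^2$ over $x = \Delta_i \ge \eps(1-\mu_1)$ (the derivative is negative, so the maximum is at the boundary) yields $v_i/\Delta_i^2 \le (1+\eps)/(\eps^2(1-\mu_1))$. Hence for $c$ sufficiently large, all such arms are gone by round $t$, and every surviving $i \in B$ obeys $1 - \mu_i \le (1+\eps)(1-\mu_1) \le 2(1-\mu_1)$ (using $\eps \le 1$).

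For step (b), I would apply Lemma~\ref{lem:conf bound} with this $\rho$. The threshold for $\min I_1 > \mu_1 - \rho$ is $c_0 \log(tK/\delta)(1-\mu_1 + \rho/2)/\rho^2 \le 32\, c_0 \log(tK/\delta)/(\eps^2(1-\mu_1))$, and for each $i \in B$ the threshold for $\max I_i < \mu_i + \rho$ is $c_0 \log(tK/\delta)(1-\mu_i)/\rho^2 \le 32\, c_0 \log(tK/\delta)/(\eps^2(1-\mu_1))$, using the bound from step (a). Taking $c \ge 32\, c_0$ makes both thresholds at most $t$. Combining, $\max_{i \in B}\min I_i \ge \min I_1 > \mu_1 - \rho$ and $\max_{i \in B}\max I_i \le \max_{i \in B}(\mu_i + \rho) \le \mu_1 + \rho$, so the ratio in the termination condition is at most $(1-\mu_1+\rho)/(1-\mu_1-\rho)$. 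Cross-multiplying with $\rho = \eps(1-\mu_1)/4$ and $\eps \le 1$ gives $2\rho + \eps\rho \le \eps(1-\mu_1)$, i.e.\ the ratio is at most $1+\eps$, and the algorithm terminates.

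The main technical hurdle is step (a): a highly sub-optimal arm (with $v_i$ close to $1$ and hence a potentially wide confidence interval) still lingering in $B$ would inflate $\max_{i \in B}\max I_i$ and collapse the denominator of the termination ratio, since the per-arm Chernoff threshold in Lemma~\ref{lem:conf bound} scales with $1-\mu_i$. The monotonicity calculation above is precisely what saves us: the elimination time $v_i/\Delta_i^2$ is largest at the edge of the $(1+\eps)$-approximation band and uniformly $O(1/(\eps^2(1-\mu_1)))$ beyond it, so no matter how small $\mu_i$ is, the arm is already gone by round $t$.
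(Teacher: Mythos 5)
Your proof is correct, but it follows a genuinely different route from the paper's. The paper's proof never invokes the sub-optimal-arm elimination lemma here: instead it applies Lemma~\ref{lem:conf bound} to \emph{every} arm with an arm-dependent tolerance $\rho_i = \tfrac{\eps}{3}\sqrt{(1-\mu_i)(1-\mu_1)}$ (which makes the threshold $t \gtrsim \log(tK/\delta)(1-\mu_i)/\rho_i^2 = 9\log(tK/\delta)/(\eps^2(1-\mu_1))$ uniform over all arms regardless of how sub-optimal they are), and then uses concavity of the square root, $\sqrt{v(v+\Delta_i)} \le v + \Delta_i/2$, to conclude $\max I_i \le \mu_i + \rho_i \le \mu_1 + \tfrac{\eps}{3}(1-\mu_1)$ for all $i$ simultaneously. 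You instead use a two-phase decomposition: first show (via the preceding lemma and the monotonicity of $(1-\mu_1+x)/x^2$) that every arm outside the $(1+\eps)$-band is already eliminated by round $t$, and only then apply Lemma~\ref{lem:conf bound} with the \emph{uniform} tolerance $\rho = \eps(1-\mu_1)/4$ to the survivors, for whom $1-\mu_i \le 2(1-\mu_1)$. Your concern that a lingering far-from-optimal arm would blow up $\max_{i\in B}\max I_i$ is exactly the issue the paper's adaptive-$\rho_i$ trick sidesteps without any elimination argument, and that your phase (a) handles head-on. Both arguments yield the same threshold up to constants; the paper's is more self-contained (a single application of the confidence-bound lemma), while yours makes the mechanism more transparent at the cost of importing the elimination lemma. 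The only points worth making explicit in your write-up are that arm $1$ remains in $B$ under $\neg E$ (so that $\max_{i\in B}\min I_i \ge \min I_1$ is legitimate), and that the denominator $1-\mu_1-\rho$ is positive, both of which follow immediately from what you have.
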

\begin{proof}
Let $i$ be an arbitrary arm. Since
$$t \geq \frac{c\log(tK/\delta)}{(1-\mu_{1})\eps^2} = \frac{c\log(tK/\delta)(1-\mu_i)}{(1-\mu_{1})(1-\mu_i)\eps^2} $$
we get, according to Lemma~\ref{lem:conf bound} that
$$ \max I_i \leq \mu_i + \frac{\eps}{3}\sqrt{(1-\mu_i)(1-\mu_1)} $$
In order to bound $\sqrt{(1-\mu_i)(1-\mu_1)}$ we consider the function $f(x) = \sqrt{v(v+x)}$. Notice that $f(0)=v$ and $f'(x) = \frac{v}{2\sqrt{v(v+x)}} \leq \frac{1}{2}$ for $x \geq 0$. It follows that for positive $x$, $\sqrt{v(v+x)} \leq v+x/2$, meaning that
$$ \max I_i \leq \mu_i + \frac{\eps\left((1-\mu_i)+\Delta_i/2\right)}{3} \leq \mu_{1}+\frac{\eps(1-\mu_{1})}{3}$$

Now, since $\eps \leq 1$ we have 
$$t \geq \frac{c\log(tK/\delta)(1-\mu_{1})}{(1-\mu_{1})^2\eps^2} \geq \frac{(c/2)\log(tK/\delta)(1-\mu_{1} + \eps(1-\mu_{1}))}{(1-\mu_{1})^2\eps^2} $$
hence for sufficiently large $c$ we can apply Lemma~\ref{lem:conf bound} and obtain
$$ \min I_{1} \geq \mu_{1} - \frac{\eps(1-\mu_1)}{3} $$
It follows that assuming $\eps \leq 1$,
$$ \min I_{1} \geq 1-\left( 1-\max_i I_i \right)(1+\eps) $$
meaning that the algorithm will terminate at iteration $t$.
\end{proof}

This concludes the proof of Lemma~\ref{lem:kl-mab}
\end{proof}

\newpage

\clearpage

\begin{table}[!t]

% \vspace{-5mm}

\small
    \caption{List of notation used in this paper}
    \label{tbl:notation1}
    \begin{tabularx}{\columnwidth}{@{~}l@{~~}|X@{~}}
	 \toprule
	 Symbol & Definition  \\[5pt]
	 \midrule
	 $K$ & Number of arms \\[5pt]
	 $[K]$ & The set $\{1,\ldots,K\}$ \\[5pt]
	 $a_1,\ldots,a_K$ & Set of arms \\[5pt]
	 $p_{ij}$ & Probability of arm $a_i$ beating arm $a_j$ \\[5pt]
	 $\textup{Cpld}(a_i)$ & Copeland score: number of arms that $a_i$ beats, i.e. $|\{j\,|\,p_{ij} > 0.5\}|$ \\[5pt]
	 $\cop(a_i)$ & Normalized Copeland score: $\dfrac{\textup{Cpld}(a_i)}{K-1}$ \\[5pt]
	 $C$ & Number of Copeland winners, i.e. arms $a_i$ with $\textup{Cpld}(a_i) \geq \textup{Cpld}(a_j)$ for all $j$ \\[5pt]
	 $a_1,\ldots,a_C$ & Copeland winner arms \\[5pt]
	 $\alpha$ & UCB parameter of Algorithm \ref{alg:CCB} \\[5pt]
	 $\delta$ & Probability of failure \\[5pt]
	 $C(\delta)$ & $\left(\dfrac{(4\alpha-1)K^2}{(2\alpha-1)\delta}\right)^{\frac{1}{2\alpha-1}}$ \\[15pt]
	 $N_i(t)$ & Number of times arm $a_i$ was chosen as the optimistic Copeland winner until time $t$ \\[5pt]
	 $N^{\delta}_i(t)$ & Number of times arm $a_i$ was chosen as the optimistic Copeland winner in the interval $(C(\delta),t]$ \\[5pt]
	 $N_{ij}(t)$ & Total number of time-steps before $t$ when $a_i$ was compared against $a_j$ (notice that this definition is symmetric with respect to $i$ and $j$) \\[5pt]
	 $N_{ij}^{\delta}(t)$ & Number of time-steps between times $C(\delta)$ and $t$ when $a_i$ was chosen as the optimistic Copeland winner and $a_j$ as the challenger (note that, unlike $N_{ij}(t)$, this definition is not symmetric with respect to $i$ and $j$) \\[5pt]
	 $\tau_{ij}$ & The last time-step when $a_i$ was chosen as the optimistic Copeland winner and $a_j$ as the challenger (note that $\tau_{ij} \geq C(\delta)$ iff $N_{ij}^{\delta}(t) > 0$) \\[5pt]
	 $w_{ij}(t)$ & Number of wins of $a_i$ over $a_j$ until time $t$  \\[5pt]
	 $u_{ij}(t)$ & $\dfrac{w_{ij}(t)}{N_{ij}(t)} + \sqrt{\dfrac{\alpha\ln t}{N_{ij}(t)}}$ \\[15pt]
	 $l_{ij}(t)$ & $1-u_{ji}(t)$ \\[5pt]
	 $\overline{\textup{Cpld}}(a_i)$ & $\#\left\{k \,|\, u_{ik} \geq \frac{1}{2}, k \neq i\right\}$ \\[5pt]
	 $\underline{\textup{Cpld}}(a_i)$ & $\#\left\{k \,|\, l_{ik} \geq \frac{1}{2}, k \neq i\right\}$ \\[5pt]
	 $\mathcal{C}_t$ & $\{i \,|\, \overline{\textup{Cpld}}(a_i) = \max_j \overline{\textup{Cpld}}(a_j) \}$ \\[5pt]
	 $\mathcal{L}_i$ & the set of arms to which $a_i$ loses, i.e. $a_j$ such that $p_{ij} < 0.5$ \\[5pt]
	 $L_C$ & The largest number of losses that any Copeland winner has, i.e. $\max_{i=1}^C |\{j\, |\, p_{ij} < 0.5\}|$ \\[5pt]
	 $\overline{L}_C$ & Algorithm \ref{alg:CCB}'s estimate of $L_C$ \\[5pt]
	 $\mathcal{B}_t$ & The potentially best arms at time $t$, i.e. the set of arms that according to Algorithm \ref{alg:CCB} have some chance of being Copeland winners \\[5pt]
	 $\mathcal{B}^i_t$ & The arms that at time $t$ have the best chance of beating arm $a_i$ (Cf. Line 12 in Algorithm \ref{alg:CCB}) \\[5pt]
	 $\Delta_{ij}$ & $|p_{ij}-0.5|$ \\[5pt]
	 $\Delta_{\min}$ & $\min \{ \Delta_{ij} | \Delta_{ij} \neq 0 \}$ \\[5pt]
	 $i^*$ & the index of the $(L_C+1)^{th}$ largest element in the set $\{ \Delta_{ij} \,|\, p_{ij} < 0.5 \}$ in the case that $i > C$ \\[5pt]
	 $\Delta^*_i$ & $\left\{\begin{array}{ll} \Delta_{ii^*} & \textup{if } i>C \\[3pt] 0 & \textup{otherwise} \end{array}\right.$ \\[15pt]
	 \bottomrule
     \end{tabularx}

 % \vspace{-5mm}

\end{table}

\begin{table}[!t]

% \vspace{-5mm}

\small
    \caption{List of notation used in this paper (Cont'd)}
    \label{tbl:notation2}
    \begin{tabularx}{\columnwidth}{@{~}l@{~~}|X@{~}}
	 \toprule
	 Symbol & Definition  \\[5pt]
	 \midrule
	 $\Delta^*_{ij}$ & $\left\{\begin{array}{ll} \Delta^*_i + \Delta_{ij} & \textup{if } p_{ij} \geq 0.5 \\[3pt] \max\{\Delta^*_i,\Delta_{ij}\} & \textup{otherwise} \end{array}\right.$ \\[10pt]
	 & (See Figures \ref{fig:nonCopelandFigure} and \ref{fig:CopelandNonCopelandFigure} for a pictorial explanation.) \\[5pt]
	 $\Delta^*_{\min}$ & $\displaystyle\min_{i > C} \Delta^*_i$ \\[10pt]
	 $\widehat{N}_{ij}^{\delta}(T)$ & $\left\{\begin{array}{cl} \frac{4\alpha\ln T}{\left(\Delta^*_{ij}\right)^2} & \textup{if } i \neq j \\[10pt] 0 & \textup{if $i=j$ and $i > C$} \end{array}\right.$ \\[15pt]
	 $\widehat{N}_i^{\delta}(T)$ & $\displaystyle\sum_{j=1}^K \widehat{N}_{ij}^{\delta}(T) $ \\[15pt]
	 $\widehat{N}^{\delta}(T)$ & $\displaystyle\sum_{i \neq j} \widehat{N}_{ij}^{\delta}(T)+1$ \\[15pt]
	 $T_\delta \geq$ & $C(\frac{\delta}{2})+8K^2(L_C+1)^2\ln\frac{6K^2}{\delta}+K^2\ln\frac{6K}{\delta}$ \\[5pt]
	 & \hspace{10mm} $+ \frac{32\alpha K (L_C+1)}{\Delta^2_{\min}}\ln T_{\delta}+ \widehat{N}^{\delta/2}(T_{\delta})$ \\[5pt]
	 & \hspace{10mm} $+ 4K \max_{i>C}\widehat{N}^{\delta/2}_i(T_\delta)$ \\[5pt]
	 & $T_\delta$ is the smallest integer satisfying the above inequality (Cf. Definition \ref{def:Tdelta}). \\[5pt]
	 $T_0$ & $C(\delta/2) + \widehat{N}^{\delta/2}(T_{\delta})$ \\[5pt]
	 & \hspace{10mm} $+ \frac{32\alpha K (L_C+1)\ln T_{\delta}}{\Delta^2_{\min}}$ \\[5pt]
	 & \hspace{10mm} $+ 8K^2(L_C+1)^2 \ln \frac{6K^2}{\delta}$ \\[5pt]
	 % $C(\delta/2)+6K\displaystyle\max_{i>C}\widehat{N}^{\delta/2}_i(T_\delta)+K^2\ln (6K/\delta)$ \\[5pt]
	 % & + $\frac{32\alpha K (L_C+1)}{\Delta^2_{\min}}\ln T_{\delta} + 8K^2(L_C+1)^2 \ln \frac{6K^2}{\delta}$ \\[5pt]
	 % & \qquad $+ K^2\ln (6K/\delta)$ \\[5pt]
	 % $\widehat{T}_\delta$ & $C(\delta/2)+4K\displaystyle\max_{i>C}\widehat{N}^{\delta/2}_i(T_\delta)+K^2\ln (6K/\delta)$ \\[5pt] 
	 $n_b$ & $2K \widehat{N}^{\delta/2}_b(\widehat{T}_{\delta}) + \frac{K^2\ln (4K/\delta)}{2}$ \\[5pt]
	 $Binom(n,p)$ & A ``binomial'' random variable obtained from the sum of $n$ independent Bernoulli random variables, each of which produces $1$ with probability $p$ and $0$ otherwise. \\[5pt]
	 $\Delta_i$ & $\max\left\{\cop(a_1)-\cop(a_{i}),\frac{1}{K-1}\right\}$ \\[5pt]
	 $H_i$ & $\displaystyle\sum_{j \neq i} \frac{1}{\Delta_{ij}^2}$ \\[15pt]
	 $H_{\infty}$ & $\max_i H_i$ \\[5pt]
	 $\Delta_i^\eps$ & $\max \left\{ \Delta_i, \eps(1-\cop(a_1)) \right\}$ \\[5pt]
	 \bottomrule
     \end{tabularx}

 % \vspace{-5mm}

% \sw{Some of this notation is very difficult to decipher.  For example, understanding $\Delta^*_{ij}$ requires diving into a rabbithole of several other definitions, some of which are not at all intuitive.  I think each symbol needs an intuitive as well as a mathematical definition.}

% \zk{$T_\delta$ contains itself in its definition. Is it meant to be the minimal integer for which the associated inequality holds?}

\end{table}

%\end{multicols*}

\end{document}